\def\eqref#1{equation~\ref{#1}}
\def\1{\bm{1}}
\DeclareMathAlphabet{\mathsfit}{\encodingdefault}{\sfdefault}{m}{sl}
\SetMathAlphabet{\mathsfit}{bold}{\encodingdefault}{\sfdefault}{bx}{n}
\title{Beyond Worst-case Attacks: Robust RL with Adaptive Defense via Non-dominated Policies}
\author{Xiangyu Liu\thanks{Equal contribution. Codes are available at \href{https://github.com/umd-huang-lab/PROTECTED.git}{https://github.com/umd-huang-lab/PROTECTED.git}}~~$^1$, Chenghao Deng${^\dagger}{^1}$, Yanchao Sun$^2$, Yongyuan Liang$^1$, Furong Huang$^1$ \\
$^1$University of Maryland, College Park, $^2$J.P. Morgan AI Research\\
\texttt{\{xyliu999\}@umd.edu}
}
\newcommand{\ours}{\texttt{PROTECTED}} 
\begin{document}

\maketitle

\begin{abstract}
In light of the burgeoning success of reinforcement learning (RL) in diverse real-world applications, considerable focus has been directed towards ensuring RL policies are robust to adversarial attacks during test time. Current approaches largely revolve around solving a minimax problem to prepare for potential worst-case scenarios. While effective against strong attacks, these methods often compromise performance in the absence of attacks or the presence of only weak attacks. To address this, we study policy robustness under the well-accepted state-adversarial attack model, extending our focus beyond only worst-case attacks. We first formalize this task at test time as a regret minimization problem and establish its intrinsic hardness in achieving sublinear regret when the baseline policy is from a general continuous policy class, $\Pi$. This finding prompts us to \textit{refine} the baseline policy class $\Pi$ prior to test time, aiming for efficient adaptation within a finite policy class $\Tilde{\Pi}$, which can resort to an adversarial bandit subroutine. In light of the importance of a small, finite $\Tilde{\Pi}$, we propose a novel training-time algorithm to iteratively discover \textit{non-dominated policies}, forming a near-optimal and minimal $\Tilde{\Pi}$, thereby ensuring both robustness and test-time efficiency. Empirical validation on the Mujoco corroborates the superiority of our approach in terms of natural and robust performance, as well as adaptability to various attack scenarios.
\end{abstract}

\section{Introduction}\label{sec:Introduction}

With an increasing surge of successful applications powered by reinforcement learning (RL) on robotics \citep{levine2016end,ibarz2021train}, creative generation \citep{openai2023gpt}, etc, its safety issue has drawn more and more attention. There has been a series of works devoted to both the attack and defense aspects of RL \citep{kos2017delving,huang2017adversarial,pinto2017robust,lin2019tactics,tessler2019action,gleave2019adversarial}. Specifically, the vulnerability of RL policies has been revealed under various strong threats, which in turn facilitates the training of deep RL policies by accounting for the potential attacks to boost the robustness. 

Existing approaches aimed at principled defense often prioritize robustness against worst-case attacks \citep{tessler2019action,russo2019optimal,zhang2021robust,sun2021strongest,liang2022efficient}, focusing on the optimal attacker policy within a potentially constrained attacker policy space. Such a focus can lead to suboptimal performance when RL policies are subjected to no or weak attacks during test time.
Real-world scenarios often diverge from these worst-case assumptions for several reasons: (1) Launching an attack against an RL policy might first require bypassing well-protected sensors, thus 
constraining the attack's occurrence in terms of time steps and its intensity.
(2) Previous studies \citep{zhang2021robust,sun2021strongest} have highlighted the intriguing difficulty of learning the optimal attack policy, particularly when attackers are constrained by algorithmic efficiency or computational resources.
Given these practical considerations and the prevalence of non-worst-case attacks, we pose and endeavor to answer the following question:
\begin{center}
 \emph{Is it possible to develop a comprehensive framework that enhances the performance of the victim against non-worst-case attacks, while maintaining robustness against worst-case scenarios?}
\end{center}

\begin{figure}[!htbp]
    \centering
    \includegraphics[width=\textwidth]{./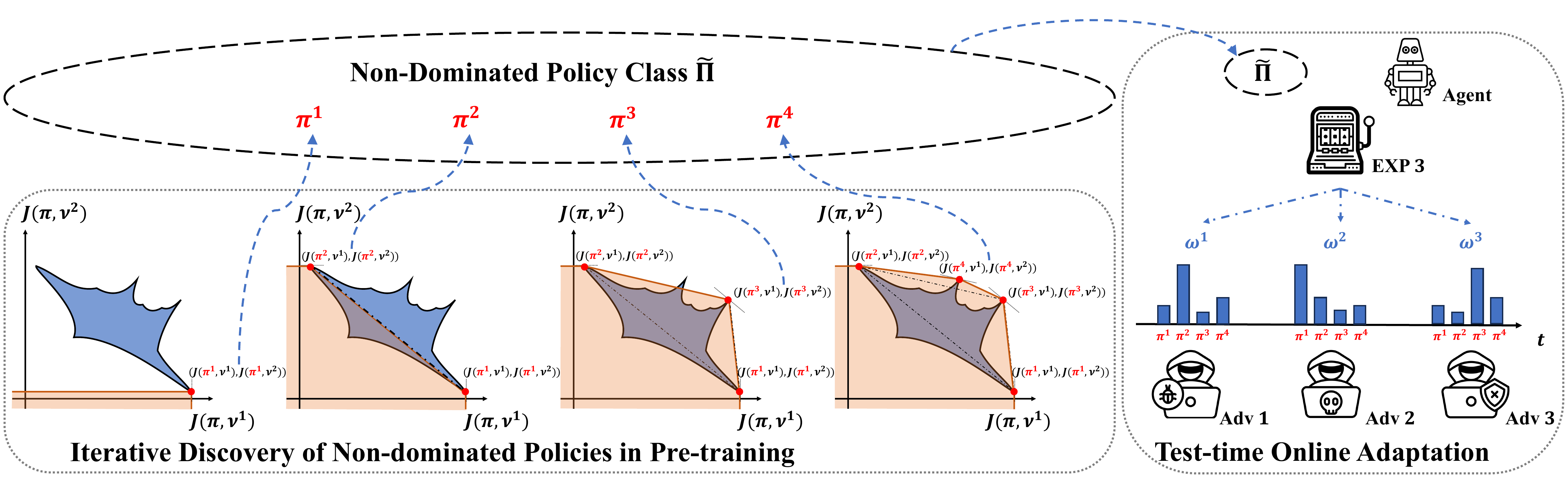}
    \caption{\textbf{Diagram of our} \texttt{PROTECTED} \textbf{framework.\ } 
    \textbf{During training}, we iteratively discover non-dominated policies, forming a finite policy class $\Tilde{\Pi}$. 
    The blue area delineates the reward landscape for victims against attackers, denoted as $\{(J(\pi, \nu^1), J(\pi, \nu^2))\given \pi\in\Pi\}$. Here, only two attackers are visualized for clarity. The orange area, on the other hand, represents the space of policies that are ``\textit{dominated}'' by the discovered policy class $\Tilde{\Pi}$. Dominated policies are those that are outperformed by at least one (mixed) policy in $\Tilde{\Pi}$ across the specified range of attackers. We refer to \S\ref{sec:expl} for more detailed explanations.
    \textbf{During test time}, online adaptation mechanisms are activated to adjust the weight of each policy in response to various attack scenarios adaptively.}
    \label{fig:formulation}
    \vspace{-5mm}
\end{figure}
To address these challenges, we introduce {\texttt{PROTECTED}}, which stands for 
\textit{\underline{pr}e-training  n\underline{o}n-domina\underline{te}d poli\underline{c}ies \underline{t}owards onlin\underline{e} a\underline{d}aptation}. 
In this work, the terms `pre-training' and `training' are used interchangeably. At test time, rather than deploying a single static policy, \texttt{PROTECTED} maintains a set of policies, denoted as $\Tilde{\Pi}$, and adaptively updates the weight of each policy based on interactions with the attacker to minimize regret. Before that, during training, a finite $\Tilde{\Pi}$ is constructed by iteratively discovering \textit{non-dominated policies}, ensuring optimal defense against unknown attackers.
In summary, our contributions encompass both training and online adaptation phases under the prevailing state-adversarial attack model:
\begin{list}{$\rhd$}{\topsep=0.ex \leftmargin=0.2in \rightmargin=0.in \itemsep =0.0in}
    \item \textbf{(1) Online adaptation.} We formalize the problem of online adaptation and introduce regret minimization as the objective. We also highlight the inherent difficulty in achieving sublinear regret, advocating for a refined policy class $\Tilde{\Pi}$ for online adaptation.
\item \textbf{(2) Non-dominated policy discovery during training.} For training, we characterize the optimality of $\Tilde{\Pi}$ and propose an algorithm for iteratively discovering non-dominated policies. This results in a $\Tilde{\Pi}$ that is both optimal and efficient for online adaptation, subject to certain mild conditions. Meanwhile, we also reveal the fundamental hardness of our problem that there are problem instances requiring a relatively large $\Tilde{\Pi}$ to achieve near-optimality.

\item \textbf{(3) Empirical investigations.} Through empirical studies on Mujoco, we validate the effectiveness of \texttt{PROTECTED}, demonstrating both improved natural performance and robustness, as well as adaptability against unknown and dynamic attacks.
\end{list}

By investigating defenses against attacks beyond worst cases, we hope this work paves the way for the development of more practical defense mechanisms against a broader range of attack scenarios.

\section{Related works}
\textbf{(State-)adversarial attacks on deep RL.}
Early research by \cite{huang2017adversarial} exposed the vulnerabilities of neural policies by adapting adversarial attacks from supervised learning to RL. 
\cite{lin2019tactics} focused on efficient attacks, perturbing agents only at specific time steps. 
Following these works, there have been advancements in stronger pixel-based attacks \citep{qiaoben2021understanding, pattanaik2017robust, oikarinen2020robust}. 
\cite{zhang2020robust} introducing the theoretical framework SA-MDP for state adversarial perturbations and suggesting a corresponding regularizer for more robust RL policies.
Building upon these, \cite{sun2021strongest} refined the framework to PA-MDP for improved efficiency. 
\cite{liang2022efficient} further improved the efficiency of defense by introducing the worse-case Q function, avoiding the alternating training. 
Those works as mentioned before aim at improving the robustness against worst-case attacks. 
\cite{havens2018online} also dealt with the adversarial attacks for RL in an online setting, where it focuses on how to ensure robustness in the presence of attackers during RL training time.

\textbf{Online learning and meta-learning.} During the test phase, our framework equips the victim with the capability to adjust its policy in response to an unknown or dynamically changing attacker. This is achieved through the utilization of feedback from previous interactions. In the literature, two distinct paradigms have been advanced to examine how an agent can leverage historical tasks or experiences to inform future learning endeavors. The first paradigm, known as meta-learning \citep{schmidhuber1987evolutionary,vinyals2016matching,finn2017model}, conceptualizes this as the task of ``learning to learn.'' In meta-learning, prior experiences contribute to the formulation of a prior distribution over model parameters or instruct the optimization of a learning procedure. Typically, in this framework, a collection of meta-training tasks is made available together upfront. There are also works extending meta-learning to deal with the streaming of sequential tasks \citep{finn2019online}, which however require a task-specific update subroutine. The second paradigm falls under the rubric of online learning \citep{hannan1957approximation,cesa2006prediction}, wherein tasks—or in the context of our paper, attackers—are disclosed to the victim sequentially via bandit feedback. Extensive literature has been devoted to the subject of online learning, targeting the minimization of regret in either stochastic settings \citep{lattimore2020bandit,auer2002using,russo2016information} or adversarial settings \citep{auer2002nonstochastic,neu2015explore,jin2020learning}. Our work primarily aligns with the latter paradigm. However, existing methodologies within this domain generally permit only reward functions to change arbitrarily, which is called the adversarial bandit problem or adversarial MDP problem. In contrast, our scenario permits the attacker to introduce partial observability for the victim, thereby also influencing the transition dynamics {\it from the perspective of the victim}.
 

\section{Preliminaries}\label{sec:prelim}
In this section, we adopt the similar setup and notations as existing works \citep{zhang2021robust, sun2021strongest,liang2022efficient}.

\textbf{MDP and attacker model.}
We define a Markov decision process (MDP) as $\cM = (\cS, \cA, \TT,\mu_1, r, H)$, where $\cS$ is the state space, $\cA$ is the action space, $\TT: \cS\times\cA\rightarrow \Delta(\cS)$ denotes the transition kernel, $\mu_1\in\Delta(\cS)$ is the initial state distribution, $r_h:\cS\times\cA\rightarrow [0, 1]$ is the reward function for each $h\in [H]$.
Given an MDP $\cM$, at each step $h$, the attacker sees the true state $s_h\in\cS$ and selects a perturbed state $\hat{s}_h\in\cS$ in a potentially adversarial way. Then the victim only sees the {\it perturbed} state $\hat{s}_h$ instead of the true $s_h$ and takes the corresponding action $a_h\in\cA$. The goal of the victim is to maximize its expected return while the attacker attempts to minimize it. 

\textbf{Policy and value function.} We define the deterministic attacker policy $\nu=\{\nu_h\}_{h\in[H]}$ with $\nu_h: \cS\rightarrow \cS$ for any $h\in[H]$, and denote the corresponding policy space as $\cV^{\text{det}}$. We also consider constraints on the attacker, where for any $s$, the attacker can only perturb $s$ to some $\hat{s}\in\cB(s)\subseteq\cS$, e.g., $\cB(s)$ can be the $l_p$ ball. We allow randomized policies for the attacker and the policy space is denoted as $\cV:=\Delta(\cV^{\text{det}})$. For any $\nu\in\cV$, we adopt the representation that $\nu$ is conditioned on a random seed $z\in\cZ$ sampled at the beginning of each episode from a fixed probability distribution $\PP(z)$. For the victim, we denote history $\tau_h$ at time $h$ as $\{\hat{s}_1, a_1, \hat{s}_2, a_2, \cdots, \hat{s}_h\}$ and $\cT$ as the space for all possible history at all steps. We consider history-dependent victim policy $\pi: \cT\rightarrow \Delta(\cA)$ and $\Pi$ as the corresponding policy space. Finally, we use $\Pi^{\text{det}}$ to denote deterministic victim policies. Given the victim policy $\pi$ and attacker policy $\nu$, the value function for the victim is defined as:
$
J(\pi, \nu) = \EE_{z{\sim\PP(z)}}\EE_{s_h\sim \TT(\cdot\given s_{h-1}, a_{h-1}), \hat{s}_h\sim \nu_h(\cdot\given s_h, z), a_{h}\sim\pi(\cdot\given \tau_h)}[\sum_{h=1}^{H} r_h(s_h, a_h)].
$

\section{The \ours\ framework}\label{sec:methods}
\subsection{Online adaptation for adaptive defenses}
Before delving into our approach of online adaptation for adaptive defenses, it is essential to review the limitations of existing works concerning the trade-off between natural rewards and robustness.  
Then we discuss the necessity of an adaptive defending policy. 
Existing researches generally focus on worst-case performance, formally characterized as follows:
\begin{definition}[Exploitability]
Given a victim policy $\pi$, exploitability is defined by:
	\$
	\operatorname{Expl}(\pi) = \max_{\pi^\prime\in\Pi}\min_{\nu\in\cV}J(\pi^\prime, \nu)-\min_{\nu\in\cV}J(\pi, \nu).
	\$
\end{definition}
Existing works aim to obtain a policy $\pi^\star$ that minimizes exploitability, i.e., $\pi^\star \in\arg\min_{\pi}\operatorname{Expl}(\pi)$, during the training phase to defend against worst-case or strongest attacks.
Such a trained policy, $\pi^\star$, is then deployed universally at test time. 
However, this approach can be overly cautious, compromising performance under no or weak attacks \citep{zhang2021robust,sun2021strongest}.
 To address this limitation, we propose to consider a new metric for test-time performance:
\begin{definition}[Regret]\label{def:regret}
	Given $T$ total episodes at test time, at the start of each episode $t$, the victim selects a policy $\pi^t$ from $\Pi$ based on reward feedback from previous episodes, and the attacker selects an arbitrary policy $\nu^t\in\cV$. The regret is defined as {\footnote{{Regret depends on the specific attackers $\{\nu^t\}_{t\in[T]}$. However, we omit such dependency in $\text{Regret}(T)$ for notational convenience since both the regret upper bound of interest for our paper and the literature of online learning and adversarial bandit \citep{hazan2016introduction, lattimore2020bandit} will be for any $\{\nu^t\}_{t\in[T]}$.}}}
	\#\label{regret}
	\operatorname{Regret}(T) = \max_{\pi\in\Pi}\sum_{t=1}^{T}\left(J(\pi, \nu^t) - J(\pi^t, \nu^t)\right),
	\#
\end{definition}
Therefore, instead of employing a static victim policy, $\pi^\star$, designed to minimize exploitability, we propose adaptively selecting 
{$\{\pi^t\}_{t\in[T]}$} during test time, based on online reward feedback, to minimize regret. Once the adaptively selected victims, {$\{\pi^t\}_{t\in[T]}$}, ensure low regret, the performance against either strong or weak (or even no) attacks is guaranteed to be near-optimal. While such an objective could ideally provide a way to defend against non-worst-case attacks, unfortunately, it turns out that there are no efficient algorithms that can \textit{always} guarantee sublinear regret.
\begin{proposition}\label{prop:hardness}
 	Fix $\alpha\in[0, 1)$. There does not exist an algorithm that produces a sequence of victim policies $\{\pi^t\}_{t\in [T]}$ such that $\EE[\operatorname{Regret}(T)] = \operatorname{poly}(S, A, H) T^{\alpha}$ for any $\{v^t\}_{t\in [T]}$.
\end{proposition}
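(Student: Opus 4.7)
My plan is to construct, for every $\alpha \in [0,1)$, a family of state-adversarial MDP instances of polylogarithmic size (in $T$) on which any algorithm incurs $\Omega(T)$ regret against an appropriately chosen attacker sequence, thereby contradicting the assumed $\operatorname{poly}(S,A,H) T^{\alpha} = o(T)$ upper bound. The reduction is to an adversarial multi-armed bandit with super-polynomially many arms, embedded by letting the attacker class carry essentially all the hidden structure of the problem.

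Concretely, I would take $\cS = \cA = [N]$, $H = 1$, $\mu_1$ uniform on $\cS$, and reward $r(s, a) = \1[a = s]$. For each permutation $\sigma_k$ of $[N]$, define the deterministic attacker $\nu^{(k)}(s) = \sigma_k(s)$, and observe that (i) its unique best victim response $\pi^{(k)}(\hat{s}) = \sigma_k^{-1}(\hat{s})$ attains $J(\pi^{(k)}, \nu^{(k)}) = 1$, (ii) every other deterministic $\pi \in \Pi$ attains $J(\pi, \nu^{(k)}) = |\mathrm{Fix}(\sigma_k \circ \pi)|/N \leq (N-2)/N$, and (iii) the marginal distribution of the observation $\hat{s}_t$ is uniform on $\cS$ regardless of which $\nu^{(k)}$ is in play, so all $k$-dependent signal comes from the scalar reward alone. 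For the attacker sequence I would draw a fresh pseudo-random $k^t$ each round (via the attacker's seed $z^t$), taken worst-case over the algorithm by Yao's minimax principle applied to the uniform prior on the schedule, and set $\nu^t = \nu^{(k^t)}$.

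From the victim's viewpoint the game is then an adversarial multi-armed bandit over $K = N!$ expert policies; by the classical $\Omega(\min(T, \sqrt{KT}))$ minimax lower bound \citep{lattimore2020bandit}, the expected regret is $\Omega(T)$ whenever $K \geq T$. Choosing $N = \lceil \log T \rceil$ yields $K = N! \geq T$, so the regret lower bound collapses to $\Omega(T)$, while $\operatorname{poly}(S, A, H) = \operatorname{poly}(N) = \operatorname{poly}(\log T)$ implies $\operatorname{poly}(S, A, H)\, T^{\alpha} = o(T)$ for every fixed $\alpha \in [0, 1)$; the two bounds contradict for $T$ sufficiently large, completing the proof.

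The main obstacle I anticipate is that the permutation MDP is highly \emph{structured}: against a stationary attacker $\nu^{(k^\star)}$, a clever victim can identify $\sigma_{k^\star}$ entry-by-entry from the $(\hat{s}_t, a_t, r_t)$ triples in only $O(N^2 \log N)$ rounds, driving regret down to $\operatorname{polylog}(T)$. A stationary attacker alone therefore does \emph{not} force $\Omega(T)$ regret, and the proof must use a shifting attacker schedule that makes the per-round information about the current needle arm $k^t$ vanish so that the victim cannot exploit the MDP's structure. Making this quantitative is the main technical step, and I would handle it by a Yao-minimax plus Le Cam / KL-divergence two-point argument on pairs of permutation schedules that are close in total variation yet induce very different best-in-hindsight comparators, which lifts the bandit lower bound from the $N!$ pure permutation responses $\{\pi^{(k)}\}$ to the full history-dependent randomized class $\Pi$.
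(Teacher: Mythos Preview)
Your construction takes a genuinely different route from the paper's. The paper keeps $S=3$, $A=2$ fixed and lets the hardness come from the horizon: the attacker simply perturbs \emph{every} state to a single dummy observation $s^{\text{dummy}}$, so the victim is blind and its policy is effectively a choice among $A^H = 2^H$ action sequences. Finding the best such sequence from bandit reward feedback is then a $2^H$-arm problem, and no side information leaks because the observation is constant. This is what lets the paper assert equivalence with a $2^H$-arm bandit and invoke an $\Omega(2^H)$ sample-complexity lower bound directly.

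Your $H=1$ permutation construction, by contrast, leaks substantial side information, and this is where the argument breaks. Point (iii) controls only the \emph{marginal} of $\hat s_t$; the \emph{conditional} law of $r_t$ given $(\hat s_t,a_t)$ is $r_t=\1[a_t=\sigma_k^{-1}(\hat s_t)]$, so every episode reveals one entry of the permutation table. Against any \emph{fixed} $\nu^{(k^\star)}$ the victim therefore recovers $\sigma_{k^\star}$ in $O(N^2)$ episodes and incurs regret $\operatorname{poly}(S,A,H)$, not $\Omega(T)$. You try to escape this with a fresh $k^t$ each round, but then the best-in-hindsight term collapses: for any deterministic $\pi$ and uniformly random $\sigma$, $\EE_\sigma|\mathrm{Fix}(\pi\circ\sigma)|/N=1/N$, so $\max_{\pi\in\Pi^{\det}}\sum_t J(\pi,\nu^{(k^t)})=T/N+O(\sqrt{T\cdot N\log N})$ with high probability, and the regret itself is $o(T)$ for $N=\log T$. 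The classical $\Omega(\min(T,\sqrt{KT}))$ bound does not transfer, both because the adversary cannot assign arbitrary reward vectors (the payoff matrix is rigidly determined by the permutation structure) and because the learner sees $(\hat s_t,a_t,r_t)$ rather than an anonymous scalar loss. Your proposed Le Cam two-point fix would have to exhibit attacker schedules whose best-in-hindsight comparators differ by $\Omega(T)$ while the schedules are statistically indistinguishable under \emph{all} victim strategies; the two requirements pull in opposite directions for exactly the reasons above, and no concrete construction is given.

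The simplest repair is the paper's idea: have the attacker erase all state information by mapping everything to a fixed dummy observation. Then the victim is genuinely playing an anonymous bandit over action sequences, the side-information objection disappears, and the exponential number of arms comes from $H$ rather than from tying $S,A$ to $T$. This also recovers the paper's remark that the hardness persists even under the tight constraint $|\cB(s)|=2$, which your permutation attacker cannot satisfy.
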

\begin{remark}
On the downside, Proposition \ref{prop:hardness} remains valid even when the attacker's actions are constrained such that $|\cB(s)| =2$ and $s\in \cB(s)$ for each $s\in\cS$.
However, there is a silver lining: in the hard instance we constructed, the attacker must perturb a state $s$ to another state $\hat{s}$ such that both the transition dynamics and the reward function differ greatly between $s$ and $\hat{s}$. 
 Therefore, if real-world scenarios impose constraints
 -- \textbf{such as $\|s-\hat{s}\|\le\epsilon$ for some $\epsilon$ in continuous control tasks, and if the transition dynamics and reward function are locally Lipschitz} -- Proposition \ref{prop:hardness} may not apply. Further investigation of this avenue is left for future work.
\end{remark}
The earlier negative results inform us to focus on online adaptation within a smaller, finite policy class $\Tilde{\Pi}$, rather than the broader class $\Pi$. 
Specifically, in Equation \ref{regret}, $\{\pi^t\}_{t\in[T]}$ and the best policy $\pi$ in hindsight in Definition \ref{def:regret} belongs to a rather large general policy class $\Pi$. 
Therefore, by relaxing the regret definition to ensure the baseline policy $\pi$ to come from a smaller and finite policy class $\Tilde{\Pi}\subseteq \Pi$, achieving sublinear regret becomes possible. This can be done by treating each policy in $\Tilde{\Pi}$ as one arm and running an adversarial bandit algorithm, e.g., EXP3 \citep{bubeck2012regret}. Meanwhile, it is worth noting that if the test-time attacker is unknown but fixed, stochastic bandit algorithms like UCB can be also effective. Given such a refined policy class $\Tilde{\Pi}$, we can perform online adaptation as in Algorithm \ref{alg:online_adapt}, which maintains a meta-policy $\omega^t\in\Delta(\Tilde{\Pi})$ during online adaptation and adjusts the weight of each policy based on the online reward feedback. The key is that the victim should randomize its policy by sampling from $\Tilde{\Pi}$, following the distribution $\omega^t$ at the start of each episode $t$. 
\begin{algorithm}[!h]
\caption{Online adaptation with refined policy class}\label{alg:online_adapt}
\begin{algorithmic}
\State  \textbf{Input:} $\Tilde{\Pi}$, $T$, $\eta$
\State Initialize $\omega^1\in \Delta(\Tilde{\Pi})$ to be the uniformly random distribution.
\For{$t\in [T]$}
    \State Draw $\pi^t\sim \omega^t$ \Comment{sampling randomly}
    \State Execute $\pi^t$ in the underlying environment and observe total rewards $R^t(\pi^t) := \sum_{h=1}^{H} r_h$
    \For{$\pi\in \Tilde{\Pi}$}
    	\State $\omega^{t+1}(\pi)\leftarrow \frac{e^{\eta \sum_{s=1}^t \hat{R}^s(\pi)}}{\sum_{\pi^\prime\in\Tilde{\Pi}} e^{\eta \sum_{s=1}^t \hat{R}^s(\pi^\prime)}}$, where $\hat{R}^s(\pi)=\frac{R^s(\pi)}{\omega^s(\pi)} \mathbbm{1}_{\pi=\pi^s}$ for $s\in [t]$
    \EndFor
\EndFor
\end{algorithmic}
\end{algorithm}
Formally, such an algorithm ensures the guarantees for a relaxed definition of regret, following the analysis of EXP3.
\begin{proposition}[\cite{bubeck2012regret}]\label{prop:adv_bandit}
	Given $\Tilde{\Pi}\subseteq \Pi$ with $|\Tilde{\Pi}|< \infty$, we define $	\Tilde{\operatorname{Regret}}(T) = \max_{\pi\in\Tilde{\Pi}}\sum_{t=1}^T\left(J(\pi, \nu^t) - J(\pi^t, \nu^t)\right)$ for any $T\in\NN$, $\{\pi^t\}_{t\in[T]}$, $\{\nu^t\}_{t\in[T]}$. Algorithm \ref{alg:online_adapt} for producing $\{\pi^t\}_{t\in[T]}$ enjoys the guarantees
	$
	\EE[\Tilde{\operatorname{Regret}}(T)]/T \le 2H\sqrt{\frac{|\Tilde{\Pi}|\log|\Tilde{\Pi}|}{T}}
	$.
\end{proposition}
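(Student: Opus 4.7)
The plan is to recognize that Algorithm \ref{alg:online_adapt} is precisely EXP3 applied to the finite arm set $\tilde{\Pi}$, with per-round rewards in $[0,H]$ rather than $[0,1]$, so the proof reduces to invoking the standard regret bound for EXP3 with an appropriate rescaling. Concretely, I would identify each $\pi \in \tilde{\Pi}$ with an arm; at round $t$ the environment (attacker $\nu^t$ plus internal MDP randomness) determines a reward sequence $R^t(\pi) \in [0,H]$ with $\EE[R^t(\pi) \mid \nu^t] = J(\pi, \nu^t)$, and the algorithm samples $\pi^t \sim \omega^t$ and observes only the realized $R^t(\pi^t)$. Taking expectation over the internal randomness at the end, the expected value of $\max_{\pi \in \tilde{\Pi}} \sum_t R^t(\pi) - \sum_t R^t(\pi^t)$ equals $\EE[\tilde{\operatorname{Regret}}(T)]$ up to the standard Jensen-style argument for the max-arm term, which in the EXP3 analysis is handled by the log-sum-exp potential rather than a per-step maximum.

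The core EXP3 analysis I would reproduce proceeds in three standard moves. First, verify that the importance-weighted estimator is conditionally unbiased: given the history through round $s-1$ and the attacker's choice $\nu^s$, $\EE_{\pi^s \sim \omega^s}[\hat{R}^s(\pi)] = \omega^s(\pi) \cdot R^s(\pi)/\omega^s(\pi) = R^s(\pi)$. Second, use the exponential-weights potential $\Phi_t = \frac{1}{\eta}\log \sum_{\pi} \exp\bigl(\eta \sum_{s \le t} \hat{R}^s(\pi)\bigr)$ and the inequality $e^x \le 1 + x + x^2$ valid on $x \le 0$ (applied to $\eta(\hat{R}^s(\pi) - \langle \omega^s, \hat{R}^s\rangle)$ after a shift that makes the argument nonpositive, which is legitimate because $\hat{R}^s$ is nonnegative and $\eta$ is small) to obtain the per-step drop
\[
\eta \langle \omega^t, \hat{R}^t \rangle - (\Phi_t - \Phi_{t-1}) \ge -\frac{\eta^2}{2} \sum_{\pi} \omega^t(\pi)\, \hat{R}^t(\pi)^2.
\]
Telescoping and comparing to $\Phi_T \ge \max_{\pi} \sum_t \hat{R}^t(\pi) - \frac{1}{\eta}\log|\tilde{\Pi}|$ gives
\[
\max_{\pi} \sum_t \hat{R}^t(\pi) - \sum_t \langle \omega^t, \hat{R}^t \rangle \le \frac{\log|\tilde{\Pi}|}{\eta} + \frac{\eta}{2} \sum_t \sum_{\pi} \omega^t(\pi)\, \hat{R}^t(\pi)^2.
\]
Third, take expectations: unbiasedness turns the left side into $\EE[\tilde{\operatorname{Regret}}(T)]$, and the variance bound $\EE[\omega^t(\pi)\hat{R}^t(\pi)^2 \mid \cdot] = R^t(\pi)^2 \le H^2$ summed over $\pi$ yields a $H^2 |\tilde{\Pi}| T$ cost term.

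Combining, $\EE[\tilde{\operatorname{Regret}}(T)] \le \log|\tilde{\Pi}|/\eta + \eta H^2 |\tilde{\Pi}| T / 2$; choosing $\eta = \sqrt{2\log|\tilde{\Pi}|/(H^2 |\tilde{\Pi}| T)}$ balances the two terms and produces $\EE[\tilde{\operatorname{Regret}}(T)] \le 2H\sqrt{|\tilde{\Pi}| T \log|\tilde{\Pi}|}$, which after dividing by $T$ is exactly the stated bound. There is no substantive obstacle here, since this is textbook EXP3; the only points that need a bit of care are (i) the rescaling from $[0,1]$ to $[0,H]$ rewards, which is where the factor $H$ enters, and (ii) confirming that adaptivity of $\{\nu^t\}$ to the victim's past actions is compatible with the analysis — but this holds because the EXP3 bound holds against any sequence of reward functions chosen obliviously or adaptively, as long as the bandit's own sampling randomness at round $t$ is independent of $\nu^t$'s choice, which is the standard setting covered by \citep{bubeck2012regret}.
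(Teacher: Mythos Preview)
The paper does not supply its own proof of this proposition; it is stated with attribution to \cite{bubeck2012regret} and used as a black box. Your proposal therefore goes beyond what the paper does: you sketch the standard EXP3 potential-function argument, correctly identify the $[0,H]$ reward range as the source of the factor $H$, and recover the claimed bound after optimizing $\eta$.

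One technical slip worth flagging: the shift you describe does \emph{not} make the argument nonpositive for every arm. With the reward-based estimator of Algorithm~\ref{alg:online_adapt}, $\langle \omega^s,\hat{R}^s\rangle = R^s(\pi^s)$, so for the pulled arm $\pi=\pi^s$ one has $\hat{R}^s(\pi^s)-\langle\omega^s,\hat{R}^s\rangle = R^s(\pi^s)\bigl(1/\omega^s(\pi^s)-1\bigr)\ge 0$, which can be arbitrarily large; the inequality $e^x\le 1+x+x^2$ on $x\le 0$ is then inapplicable to that term. The standard remedy is to pass to losses $\ell^t = H - R^t$ and use the loss-based estimator $\hat{\ell}^t(\pi)=\ell^t(\pi)\mathbbm{1}_{\pi=\pi^t}/\omega^t(\pi)\ge 0$, for which $e^{-x}\le 1-x+x^2/2$ on $x\ge 0$ applies cleanly; this yields the same update rule and the same $2H\sqrt{|\tilde{\Pi}|\log|\tilde{\Pi}|/T}$ bound. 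This is a routine fix and does not affect the correctness of your overall plan.
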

Finally, we remark that the adaptation method used here is computationally efficient as it only maintains and updates the vector $\omega^t\in\RR^{|\Tilde{\Pi}|}$, rather than fine-tuning a policy network (or its last layer). This makes it more suitable for scenarios where computational budgets are limited at test time.
\subsection{Pre-training for non-dominated policies via iterative discovery}

The analysis above inspires us to discover a refined policy class, $\Tilde{\Pi}$, during training. At test time, the relaxed definition, $\Tilde{\operatorname{Regret}}(T)$, with respect to the refined policy class $\Tilde{\Pi}$ can be efficiently minimized. However, the gap between $\Tilde{\operatorname{Regret}}(T)$ and ${\operatorname{Regret}}(T)$ can be significant when policies in $\Tilde{\Pi}$ are suboptimal, meaning that policies from $\Pi\setminus\Tilde{\Pi}$ could provide much higher rewards against some attacks. Consequently, we introduce the following definition to characterize the optimality of $\Tilde{\Pi}$.
\begin{definition}\label{def:opt}
	For given policy class $\Tilde{\Pi}$, we define the optimality gap between $\Tilde{\Pi}$ and $\Pi$ as
	\$
	\operatorname{Gap}(\Tilde{\Pi}, \Pi) := \max_{\nu\in\cV}\left(\max_{\pi\in \Pi}J(\pi, \nu) - \max_{\pi^\prime\in\Tilde{\Pi}}J(\pi^\prime, \nu)\right).
	\$
\end{definition}
This definition implies that if we have $\operatorname{Gap}(\Tilde{\Pi}, \Pi)\le\epsilon$, then whatever policy the attacker uses, the optimal policy in $\Tilde{\Pi}$ is also $\epsilon$-optimal in $\Pi$. With this quantity, we can relate the two notions of regret.
\begin{proposition}\label{prop:tilde_regret}
	Given $\Tilde{\Pi}$, it holds that for any $T\in\NN$, $\{\pi^t\}_{t\in[T]}$, and $\{\nu^t\}_{t\in[T]}$
	\$
	\frac{\operatorname{Regret}(T)}{T}\le \frac{\Tilde{\operatorname{Regret}}(T)}{T} + \operatorname{Gap}(\Tilde{\Pi}, \Pi).
	\$
	If $|\Tilde{\Pi}|<\infty$, Algorithm \ref{alg:online_adapt} satisfies
	$
		\EE[\operatorname{Regret}(T)]/T \le 2H\sqrt{\frac{|\Tilde{\Pi}|\log|\Tilde{\Pi}|}{T}} + \operatorname{Gap}(\Tilde{\Pi}, \Pi)
	$.
\end{proposition}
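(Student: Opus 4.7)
The plan is to establish the first inequality by decomposition and then chain it with Proposition~\ref{prop:adv_bandit} to get the second.

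For the first inequality, I would fix an arbitrary $\pi \in \Pi$ and any sequence $\{\pi^t\}, \{\nu^t\}$, and add-and-subtract the quantity $\max_{\pi' \in \Tilde{\Pi}} J(\pi', \nu^t)$ inside the sum to get
\begin{align*}
\sum_{t=1}^T \bigl( J(\pi, \nu^t) - J(\pi^t, \nu^t) \bigr) &= \sum_{t=1}^T \Bigl( J(\pi, \nu^t) - \max_{\pi' \in \Tilde{\Pi}} J(\pi', \nu^t) \Bigr) \\
&\quad + \sum_{t=1}^T \Bigl( \max_{\pi' \in \Tilde{\Pi}} J(\pi', \nu^t) - J(\pi^t, \nu^t) \Bigr).
\end{align*}
The second sum, once we swap the per-round $\max$ outside, is bounded by $\Tilde{\operatorname{Regret}}(T) = \max_{\pi' \in \Tilde{\Pi}} \sum_t (J(\pi', \nu^t) - J(\pi^t, \nu^t))$, which follows from the standard fact $\sum_t \max_{\pi'} f_t(\pi') \ge \max_{\pi'} \sum_t f_t(\pi')$ applied with a negative sign (actually here we need the opposite direction, so I would first write $\max_{\pi' \in \Tilde\Pi} J(\pi', \nu^t) \le$ any convex combination, or just note $\max_{\pi' \in \Tilde\Pi} J(\pi', \nu^t) - J(\pi^t, \nu^t) \le \max_{\pi' \in \Tilde\Pi}[J(\pi', \nu^t) - J(\pi^t, \nu^t)]$ is not quite right either — the cleanest way is to pick any $\tilde\pi \in \arg\max_{\pi' \in \Tilde\Pi} \sum_t J(\pi', \nu^t)$, replace $\max_{\pi'} J(\pi', \nu^t)$ by $J(\tilde\pi, \nu^t)$ in the upper bound direction is wrong, so instead I absorb the per-round max into the telescoped regret via $\sum_t \max_{\pi'} J(\pi', \nu^t) \ge \max_{\pi'}\sum_t J(\pi', \nu^t)$, giving an upper bound on the second sum by $\Tilde{\operatorname{Regret}}(T)$). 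For each summand in the first sum, I would use that $\pi \in \Pi$ implies $J(\pi, \nu^t) \le \max_{\pi'' \in \Pi} J(\pi'', \nu^t)$, so
\[
J(\pi, \nu^t) - \max_{\pi' \in \Tilde\Pi} J(\pi', \nu^t) \le \max_{\pi'' \in \Pi} J(\pi'', \nu^t) - \max_{\pi' \in \Tilde\Pi} J(\pi', \nu^t) \le \operatorname{Gap}(\Tilde{\Pi}, \Pi),
\]
by Definition~\ref{def:opt}. Summing yields a bound of $T \cdot \operatorname{Gap}(\Tilde\Pi, \Pi) + \Tilde{\operatorname{Regret}}(T)$, and taking $\max$ over $\pi \in \Pi$ on the left gives the claim after dividing by $T$.

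For the second inequality, since $\operatorname{Gap}(\Tilde{\Pi},\Pi)$ is a deterministic quantity independent of the algorithm's randomness, I would simply take expectations on both sides of the first inequality and invoke Proposition~\ref{prop:adv_bandit} to bound $\EE[\Tilde{\operatorname{Regret}}(T)]/T \le 2H\sqrt{|\Tilde\Pi|\log|\Tilde\Pi|/T}$, giving the claimed bound.

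The only subtle step is the second-sum bound, where one must be careful with the direction of inequalities when exchanging the per-round $\max_{\pi' \in \Tilde\Pi}$ with the summation; the clean resolution is to use $\sum_t \max_{\pi'} g_t(\pi') \ge \max_{\pi'} \sum_t g_t(\pi')$ with $g_t(\pi') = J(\pi',\nu^t)$ so that $\sum_t[\max_{\pi' \in \Tilde\Pi} J(\pi',\nu^t) - J(\pi^t,\nu^t)] \ge \max_{\pi' \in \Tilde\Pi} \sum_t[J(\pi',\nu^t) - J(\pi^t,\nu^t)] = \Tilde{\operatorname{Regret}}(T)$ — wait, this is the wrong direction, so instead the argument should proceed by choosing $\tilde\pi^\star \in \arg\max_{\pi' \in \Tilde\Pi} \sum_t J(\pi',\nu^t)$ up front and bounding $\max_{\pi' \in \Tilde\Pi} J(\pi', \nu^t)$ loosely, then noting that this looseness is already captured in the $\operatorname{Gap}$ term; alternatively, one can directly bound $J(\pi, \nu^t) - J(\pi^t, \nu^t) \le [J(\pi, \nu^t) - J(\tilde\pi^\star, \nu^t)] + [J(\tilde\pi^\star, \nu^t) - J(\pi^t, \nu^t)]$ with $\tilde\pi^\star := \arg\max_{\pi' \in \Tilde\Pi} J(\pi', \nu^t)$ for each individual $t$ (so $\tilde\pi^\star$ depends on $t$), which bounds the first bracket by $\operatorname{Gap}$ and the second by $\Tilde{\operatorname{Regret}}$ after applying the $\max_{\pi' \in \Tilde\Pi}$ outside (which only increases it). This is the small bookkeeping point I expect to be the main obstacle; once resolved, everything else is a one-line invocation.
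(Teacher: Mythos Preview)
Your decomposition using the per-round quantity $\max_{\pi'\in\Tilde\Pi} J(\pi',\nu^t)$ does not close: as you yourself observe, $\sum_t \max_{\pi'\in\Tilde\Pi} J(\pi',\nu^t) \ge \max_{\pi'\in\Tilde\Pi}\sum_t J(\pi',\nu^t)$, so your second sum is only a \emph{lower} bound for $\Tilde{\operatorname{Regret}}(T)$, not the upper bound you need. Your ``alternative'' at the end (per-round $\tilde\pi^\star$ depending on $t$) is the same decomposition rephrased, and the claim that moving the max outside ``only increases it'' is in the wrong direction. The suggestion you skip over---fixing a single comparator $\tilde\pi^\star\in\arg\max_{\pi'\in\Tilde\Pi}\sum_t J(\pi',\nu^t)$---is the right one and is exactly what the paper does: add and subtract $\max_{\pi'\in\Tilde\Pi}\sum_t J(\pi',\nu^t)$, which makes the second piece equal to $\Tilde{\operatorname{Regret}}(T)$ by definition.

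What you are missing is the justification for the \emph{first} piece under this correct decomposition. Once you subtract the max-of-sum instead of the sum-of-per-round-maxes, the first piece becomes $\sum_t J(\pi^\star,\nu^t)-\max_{\pi'\in\Tilde\Pi}\sum_t J(\pi',\nu^t)$, and you can no longer bound it term-by-term via Definition~\ref{def:opt} (since $\tilde\pi^\star$ need not be the per-round argmax). The paper's one-line resolution exploits that $\cV=\Delta(\cV^{\text{det}})$ is closed under mixtures: take $\nu=\operatorname{Unif}(\nu^{1:T})\in\cV$, so that $J(\pi,\nu)=\tfrac{1}{T}\sum_t J(\pi,\nu^t)$ by linearity, and then Definition~\ref{def:opt} applied at this single $\nu$ yields
\[
\max_{\pi\in\Pi}\tfrac{1}{T}\sum_{t} J(\pi,\nu^t)-\max_{\pi'\in\Tilde\Pi}\tfrac{1}{T}\sum_{t} J(\pi',\nu^t)\le\operatorname{Gap}(\Tilde\Pi,\Pi).
\]
This mixture-of-attackers argument is the missing idea; without it, neither version of your decomposition delivers the claimed inequality. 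The second statement then follows from Proposition~\ref{prop:adv_bandit} exactly as you say.
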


According to this proposition, there is a clear trade-off between optimality, i.e., $\operatorname{Gap}(\Tilde{\Pi}, \Pi)$, and efficiency, i.e., $|\Tilde{\Pi}|$. A natural question arises: can we achieve a small $\operatorname{Gap}(\Tilde{\Pi}, \Pi)$ while $\Tilde{\Pi}$ is finite? Indeed, we answer this in the affirmative.
\begin{proposition}\label{prop:finite}
	There exists $\Tilde{\Pi}$ such that $\operatorname{Gap}(\Tilde{\Pi}, \Pi)=0$ while $|\Tilde{\Pi}| <\infty$.
\end{proposition}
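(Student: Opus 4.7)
The plan is to take $\Tilde{\Pi} = \Pi^{\text{det}}$, the class of deterministic history-dependent victim policies introduced in \S\ref{sec:prelim}. First, I would verify finiteness: since $\cS$, $\cA$, and $H$ are all finite, the history space satisfies $|\cT| \le \sum_{h=1}^{H}(|\cS||\cA|)^{h-1}|\cS| < \infty$, and hence $|\Pi^{\text{det}}| \le |\cA|^{|\cT|} < \infty$.

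The main step is to establish $\operatorname{Gap}(\Pi^{\text{det}}, \Pi) = 0$, i.e., for every $\nu \in \cV$,
\$
\max_{\pi \in \Pi} J(\pi, \nu) = \max_{\pi \in \Pi^{\text{det}}} J(\pi, \nu).
\$
Fix an arbitrary $\nu$. The trajectory distribution induced by $(\pi, \nu)$ factorizes into a product of terms $\pi(a_h \mid \tau_h)$, $\nu$-induced factors, and transition-induced factors, so $J(\pi, \nu)$ is multilinear in the collection $\{\pi(\cdot\mid\tau)\}_{\tau\in\cT}$. Concretely, one can write
\$
J(\pi, \nu) = \EE^{\pi, \nu}\Bigl[\sum_{h=1}^{H}\sum_{a\in\cA}\pi(a\mid\tau_h)\,Q_h^{\pi, \nu}(\tau_h, a)\Bigr],
\$
where $Q_h^{\pi, \nu}(\tau_h, a)$ depends on $\pi$ only through its behavior at histories of depth strictly greater than $h$. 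I would then perform backward induction on $h = H, H-1, \dots, 1$: with the deterministic choices at deeper histories already fixed, replace $\pi(\cdot\mid\tau_h)$ by a Dirac mass at $\arg\max_{a} Q_h^{\pi, \nu}(\tau_h, a)$; this weakly increases $J(\pi, \nu)$ at each step. The terminal policy is deterministic history-dependent and attains $\max_{\pi\in\Pi} J(\pi, \nu)$.

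The main subtlety is justifying the backward-induction step, which requires that the optimal choice at $\tau_h$ depends only on $\tau_h$ and on the already-fixed deterministic choices at deeper histories, not on randomization of $\pi$ at ancestor histories. This is precisely the perfect-recall property: since $\tau_h$ encodes all past observations and actions of the victim, Kuhn's theorem guarantees that behavioral strategies and mixtures of deterministic history-dependent strategies induce identical trajectory distributions against any fixed $\nu$, so restricting to $\Pi^{\text{det}}$ loses no value. Equivalently, one may view the victim's problem against fixed $\nu$ as a finite-horizon POMDP with observations $\hat{s}_h$ and invoke the classical result that such POMDPs admit deterministic history-dependent optimal policies. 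Either perspective closes the argument and yields $\Tilde{\Pi} = \Pi^{\text{det}}$ with $|\Tilde{\Pi}| < \infty$ and $\operatorname{Gap}(\Tilde{\Pi}, \Pi) = 0$, as claimed.
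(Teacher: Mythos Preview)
Your proposal is correct and follows essentially the same route as the paper: take $\Tilde{\Pi}=\Pi^{\text{det}}$, note it is finite, and show that against any fixed $\nu$ a deterministic history-dependent policy attains $\max_{\pi\in\Pi}J(\pi,\nu)$. The paper packages the perfect-recall argument as Lemma~\ref{lem:decomp} (any $\pi\in\Pi$ is payoff-equivalent, for all $\nu$, to some mixture $\omega\in\Delta(\Pi^{\text{det}})$) and then simply picks the best deterministic policy in the support of $\omega$; your invocation of Kuhn's theorem / the POMDP optimality result is exactly the same content.

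One small caveat: the displayed identity
\$
J(\pi,\nu)=\EE^{\pi,\nu}\Bigl[\sum_{h=1}^{H}\sum_{a\in\cA}\pi(a\mid\tau_h)\,Q_h^{\pi,\nu}(\tau_h,a)\Bigr]
\$
is not correct as written (summing the value-to-go over $h$ double-counts rewards). The backward-induction argument you intend is the standard one-step replacement $J(\pi,\nu)=\EE^{\pi,\nu}\bigl[\sum_{a}\pi(a\mid\tau_h)Q_h^{\pi_{>h},\nu}(\tau_h,a)\bigr]$ at a single depth $h$, iterated from $H$ down to $1$. Since you ultimately fall back on Kuhn's theorem / the POMDP result, this does not affect the validity of your argument, but you should fix or drop that formula if you write it up.
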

This confirms that we can always find an optimal $\Tilde{\Pi}$ with {\it finite} cardinality, enabling the execution of Algorithm \ref{alg:online_adapt}. However, $|\Tilde{\Pi}|$ in our construction is contingent on the deterministic policy set, which is relatively large.
This indeed arises because an optimal $\Tilde{\Pi}$ can also encompass many {\it redundant} policies. Removing these redundant policies from $\Tilde{\Pi}$ does not impact its optimality. To characterize such redundant policies, we define dominated policies as follows.
\begin{definition}[Dominated and Non-dominated Policy]
	Given $\delta\ge 0$ and $\Tilde{\Pi}$. We define $(\delta, \Tilde{\Pi})$-dominated policy $\pi\not\in\Tilde{\Pi}$ as that there exists some $\omega\in \Delta(\Tilde{\Pi})$, for any $\nu\in\cV$, $J(\pi, \nu)\le \EE_{\pi^\prime\sim \omega}[J(\pi^\prime, \nu)] + \delta$. For $\delta=0$, we also say $\pi$ is dominated by $\Tilde{\Pi}$. If $\pi$ is not a $(0, \Tilde{\Pi}\setminus{\{\pi\}})$-dominated policy, we say $\pi$ is a non-dominated policy (w.r.t $\Tilde{\Pi}$).
\end{definition}
It's clear that for a $(\delta, \Tilde{\Pi})$-dominated policy $\pi$, i.e., $\min_{\omega\in\Delta(\Tilde{\Pi})}\max_\nu\left(J(\pi, \nu)- \EE_{\pi^\prime\sim \omega}[J(\pi^\prime, \nu)]\right)\le \delta$, including $\pi$ in $\Tilde{\Pi}$ allows the optimality gap to decrease by at most $\delta$. With this principle, a straightforward algorithm to construct a small and optimal policy class is to start from an optimal $\Tilde{\Pi}$ (potentially with redundant policies), i.e., $\operatorname{Gap}(\Tilde{\Pi}, \Pi)=0$, and then enumerate all $\pi\in\Tilde{\Pi}$ to examine whether $\pi$ is dominated by $\Tilde{\Pi}\setminus{\pi}$. If it is true, one can remove $\pi$ from $\Tilde{\Pi}$ to reduce its cardinality. This process is akin to (iterated) elimination of dominated actions in normal-form games \citep{roughgarden2010algorithmic}.

While such procedures can maintain the optimality of $\Tilde{\Pi}$ and effectively reduce its cardinality, the overhead of enumerating all $\pi\in\Tilde{\Pi}$ can be unacceptable. Consequently, a natural and more efficient approach is to construct $\Tilde{\Pi}$ {\it from scratch} by iteratively expanding $\Tilde{\Pi}$. Specifically, given $\Tilde{\Pi}$, any policy $\pi$ such that $\min_{\omega\in\Delta(\Tilde{\Pi})}\max_\nu\left(J(\pi, \nu)- \EE_{\pi^\prime\sim \omega}[J(\pi^\prime, \nu)]\right)> \delta$ can be used to expand $\Tilde{\Pi}$. Thus, we propose to select the one that {\it maximizes} this quantity $\min_{\omega\in\Delta(\Tilde{\Pi})}\max_\nu\left(J(\pi, \nu)- \EE_{\pi^\prime\sim \omega}[J(\pi^\prime, \nu)]\right)$. In other words, at each iteration $k$, given $\Tilde{\Pi}^k = \{\pi^1, \cdots, \pi^k\}$ already discovered, we solve the following optimization problem:
\#\label{eq:main_obj}
\pi^{k+1} \in \arg\max_{\pi\in\Pi} \min_{\omega\in\Delta(\{\pi^1, \cdots, \pi^k\})}\max_{\nu\in\cV}\left(J(\pi, \nu)- \EE_{\pi^\prime\sim \omega}[J(\pi^\prime, \nu)]\right),\\
f_{k+1}=\max_{\pi\in\Pi} \min_{\omega\in\Delta(\{\pi^1, \cdots, \pi^k\})}\max_{\nu\in\cV}\left(J(\pi, \nu)- \EE_{\pi^\prime\sim \omega}[J(\pi^\prime, \nu)]\right).\nonumber
\#
It turns out such an iterative process enjoys guarantees for both {\it optimality and efficiency}.
\begin{theorem}\label{thm:main}
	For any $\delta>0$, there exists $K\in\NN$ such that $f_K\le \delta$. Correspondingly, the policy class $\Tilde{\Pi}^K := \{\pi^1, \cdots, \pi^K\}$ satisfies that $\operatorname{Gap}(\Tilde{\Pi}^K, \Pi)\le \delta$. Furthermore, we have the regret guarantee that $\EE[\operatorname{Regret}(T)]/T \le 2H\sqrt{\frac{K\log K}{T}} + \delta$ for Algorithm \ref{alg:online_adapt}.
	
	Moreover, let $K^\star = \min_{\operatorname{Gap}(\Tilde{\Pi}, \Pi) = 0} |\Tilde{\Pi}|$ and $K^{\text{fin}} = \min_{K\in\NN: f_{K}=0} K$, as long as our objective \ref{eq:main_obj} admits a unique solution at every iteration, our algorithm finishes within at most $K^\star + 1$ iterations, i.e., we have $K^{\text{fin}}\le K^\star + 1$.
\end{theorem}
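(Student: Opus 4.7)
The plan is to extract a minimax identity that cheaply yields the first three assertions of the theorem, and then handle the sharp count $K^{\text{fin}}\le K^\star+1$ via a pigeonhole argument against any minimal optimal cover.

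\emph{Minimax identity and the bulk of the theorem.} I would first apply Sion's minimax theorem to the inner $\min_\omega\max_\nu$ problem: the integrand $J(\pi,\nu)-\EE_{\pi'\sim\omega}J(\pi',\nu)$ is linear in $\omega\in\Delta(\Tilde{\Pi}^k)$ and in the mixed attacker $\nu\in\cV$, on compact convex simplexes. Swapping $\min_\omega$ and $\max_\nu$ and carrying out the inner minimization yields
\[
g_k(\pi) \;:=\; \min_\omega\max_\nu\bigl[J(\pi,\nu)-\EE_{\pi'\sim\omega}J(\pi',\nu)\bigr] \;=\; \max_\nu\bigl[J(\pi,\nu)-\max_{\pi'\in\Tilde{\Pi}^k}J(\pi',\nu)\bigr],
\]
so that $f_{k+1}=\max_\pi g_k(\pi)=\operatorname{Gap}(\Tilde{\Pi}^k,\Pi)$. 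Since $\operatorname{Gap}$ is monotone non-increasing in its first argument, $f_K\le\delta$ implies $\operatorname{Gap}(\Tilde{\Pi}^K,\Pi)\le f_K\le\delta$, and combining this with Proposition \ref{prop:tilde_regret} delivers the regret bound.

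\emph{Existence of a terminating $K$.} The identity above shows $g_k$ is a pointwise maximum of functions affine in $\pi$, hence convex; its maximum over the convex policy set $\Pi$ is attained at an extreme point, i.e.\ a deterministic policy, so we may take $\pi^{k+1}\in\Pi^{\text{det}}$. For any $\pi\in\Tilde{\Pi}^k$, the choice $\omega=\delta_\pi$ gives $g_k(\pi)\le 0$, so whenever $f_{k+1}>0$ the new $\pi^{k+1}$ is a \emph{fresh} deterministic policy. Since $|\Pi^{\text{det}}|<\infty$, the algorithm must reach $f_K\le 0\le\delta$ in at most $|\Pi^{\text{det}}|$ iterations, which covers the first assertion for any $\delta>0$.

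\emph{Sharp bound under uniqueness.} Assume uniqueness of the outer arg-max at every iteration and suppose, for contradiction, $f_{K^\star+1}>0$; then $K^\star+1$ distinct deterministic policies $\pi^1,\ldots,\pi^{K^\star+1}$ have been produced. For each $i$, let $\nu^i$ realize the outer $\max_\nu$ defining $g_{i-1}(\pi^i)$, and fix any minimal optimal cover $\Tilde{\Pi}^\star=\{\sigma^1,\dots,\sigma^{K^\star}\}$. Optimality of $\Tilde{\Pi}^\star$ yields $\sigma^{j(i)}\in\Tilde{\Pi}^\star$ with $J(\sigma^{j(i)},\nu^i)=\max_\pi J(\pi,\nu^i)\ge J(\pi^i,\nu^i)$, which implies $g_{i-1}(\sigma^{j(i)})\ge g_{i-1}(\pi^i)$. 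Uniqueness at iteration $i-1$ then forces $\sigma^{j(i)}=\pi^i$, so every $\pi^i$ lies in $\Tilde{\Pi}^\star$. The map $i\mapsto j(i)$ sends $K^\star+1$ distinct indices into a set of size $K^\star$, so by pigeonhole $\pi^i=\sigma^{j(i)}=\sigma^{j(i')}=\pi^{i'}$ for some $i\ne i'$, contradicting distinctness.

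\emph{Main obstacle.} The last step is the crux: one must carefully verify that uniqueness of the outer maximizer genuinely pins $\sigma^{j(i)}=\pi^i$ (and is not merely uniqueness of the inner $\omega$ or $\nu$), and that the base case $i=1$ is handled by the same principle---most naturally by regarding the initialization as the $\Tilde{\Pi}^0=\emptyset$ instance of the same $\max_\pi g_k$ optimization, under the convention $\EE_{\pi'\sim\omega}[\,\cdot\,]=0$ when $\omega$ is the empty mixture, so that $\pi^1\in\arg\max_\pi\max_\nu J(\pi,\nu)$ and the pigeonhole argument applies uniformly across all $K^\star+1$ iterations.
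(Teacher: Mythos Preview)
Your minimax identity $f_{k+1}=\operatorname{Gap}(\Tilde\Pi^k,\Pi)$ is a genuinely different and cleaner route for the first three assertions. The paper never isolates this identity: it obtains a terminating $K$ by a pigeonhole on the discretized reward cube $[0,H]^{|\cV^{\text{det}}|}$ (two iterates must eventually share a $\delta$-cell, forcing $f_{K+1}\le\delta$), and then verifies $\operatorname{Gap}(\Tilde\Pi^{K+1},\Pi)\le\delta$ by a separate direct argument. Two small caveats on your side: $J(\pi,\nu)$ is \emph{not} affine in $\pi$ under the usual conditional-distribution parameterization (it is multilinear across time steps), so ``max of affines attained at an extreme point'' really needs the mixture representation of Lemma~\ref{lem:decomp} to pass to $\Delta(\Pi^{\text{det}})$; and your termination argument presumes the algorithm selects deterministic arg-maxes, whereas the paper's cell-pigeonhole works for arbitrary tie-breaking.

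For the sharp bound $K^{\text{fin}}\le K^\star+1$, your approach diverges from the paper's and carries a genuine gap at $i=1$. In Algorithm~\ref{alg:offline_train}, $\pi^1$ is an arbitrary initialization, \emph{not} the solution of any instance of \eqref{eq:main_obj}; the uniqueness hypothesis therefore applies only to $\pi^2,\pi^3,\ldots$. Your proposed convention ``$\Tilde\Pi^0=\emptyset$, $\pi^1\in\arg\max_\pi\max_\nu J$'' changes the algorithm rather than proving the stated theorem about it. Without that convention, your argument pins only $\pi^2,\ldots,\pi^{K^\star+1}$---exactly $K^\star$ distinct policies---into $\Tilde\Pi^\star$, which is no contradiction and yields only $K^{\text{fin}}\le K^\star+2$. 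The paper sidesteps $\pi^1$ entirely by a different mechanism: it proves (Lemma~\ref{lem:non-dom}) that each $\pi^k$, $k\ge 2$, is undominated by \emph{all} of $\Pi^{\text{det}}\setminus\{\pi^k\}$ (not merely by $\Tilde\Pi^\star$), decomposes each member of a minimal optimal cover as a mixture over $\Pi^{\text{det}}$, and then runs a weight-$\le\tfrac{2}{3}$ pigeonhole over the $K^{\text{fin}}-1$ policies $\pi^2,\ldots,\pi^{K^{\text{fin}}}$ to extract a quantitative $\epsilon/3$ violation of $\operatorname{Gap}(\Tilde\Pi^\star,\Pi)=0$.
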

\paragraph{Implications.} The first part of Theorem \ref{thm:main} implies that we can simply set an error threshold $\delta>0$ and sequentially solve Equation \ref{eq:main_obj} until the optimal value is less than or equal to $\delta$. Then, Theorem \ref{thm:main} predicts this process will always finish in finite iterations, thus leading to a finite $\Tilde{\Pi}$ for any given $\delta$. Once it converges, it is guaranteed that $\operatorname{Gap}(\Tilde{\Pi}, \Pi)\le \delta$. In addition, the second part of Theorem \ref{thm:main} proves that, under mild conditions, once the algorithm discovers a $\Tilde{\Pi}$ such that the optimality gap is $0$, $\Tilde{\Pi}$ is guaranteed to be the smallest one.
\paragraph{A practical algorithm.} To solve the objective \ref{eq:main_obj} and develop a practical algorithm, we leverage the fact by weak duality that
\$
\max_{\pi\in\Pi} \min_{\omega\in\Delta(\{\pi^1, \cdots, \pi^k\})}\max_{\nu\in\cV}&\left(J(\pi, \nu)- \EE_{\pi^\prime\sim \omega}[J(\pi^\prime, \nu)]\right) \\
&\ge \max_{\pi\in\Pi} \max_{\nu\in\cV}\min_{\omega\in\Delta(\{\pi^1, \cdots, \pi^k\})}\left(J(\pi, \nu)- \EE_{\pi^\prime\sim \omega}[J(\pi^\prime, \nu)]\right).
\$
Therefore, we propose to optimize RHS, a lower bound for the original problem, bringing two benefits: (1) the maximization for $\pi$ and $\nu$ can be merged and updated together (2) the inner minimization problem is tractable. To solve RHS, we follow the common practice for nonconcave-convex optimization problems, repeating the process of first solving the inner problem exactly, and then running gradient ascent for the outer max problem \citep{lin2020gradient}. The detailed algorithm is presented in Algorithm \ref{alg:offline_train}. \textbf{Notably, the attacker $\nu$ is not modeled as the worst-case to minimize the victim rewards anymore.} For a more intuitive illustration, we refer to the left part of Figure \ref{fig:formulation}.
\begin{algorithm}[!h]
\caption{Iterative discovery of non-dominated policy class}\label{alg:offline_train}
\begin{algorithmic}
\State  \textbf{Input:} $\delta, \eta_1, \eta_2, K, N$
\State Initialize $\Tilde{\Pi}^1 \leftarrow \{\pi^1\}$, $k\leftarrow 1$, $f_{k}\leftarrow\infty$
\For{$k= 1,\cdots, K$ iterations}
	\State Initialize $\pi^{k+1, 0}$, $\nu^{0}$, $t\leftarrow 0$, and $f_{k+1}\leftarrow 0$
   \For{$t=1, \cdots, N$ iterations}
   		\State $k^\star\leftarrow \arg\max_{k^\prime\in[k]}J(\pi^{k^\prime}, \nu^t)$ \Comment{estimating accumulative rewards with samples}
		\State $\nu^{t+1}\leftarrow \nu^t+\eta_1 \nabla_{\nu}(J(\pi^{k+1, t}, \nu^t)-J(\pi^{k^\star}, \nu^t))$  \Comment{updating with SA-RL \citep{zhang2021robust} or PA-AD \citep{sun2021strongest}}
		\State $\pi^{k+1, t+1}\leftarrow \pi^{k+1, t}+\eta_2 \nabla_{\pi}J(\pi^{k+1, t}, \nu^t)$ \Comment{updating with PPO}
   		\State $f_{k+1}\leftarrow J(\pi^{k+1, t+1}, \nu^{t+1})- J(\pi^{k^\star}, \nu^{t+1})$
   		\State $t\leftarrow t+1$
   \EndFor
   \State $\pi^{k+1}\leftarrow \pi^{k+1, t}$
   \State $\Tilde{\Pi}^{k+1}\leftarrow \Tilde{\Pi}^{k}\cup \{\pi^{k+1}\}$
\EndFor
\end{algorithmic}
\end{algorithm}

Finally, to deepen the understanding of our problem and algorithm, we provide a negative result regarding $|\Tilde{\Pi}|$. In Theorem \ref{thm:main}, we have not shown how $K^{\text{fin}}$ explicitly depends on $\delta$ or other problem parameters ($S$, $A$, $H$). Indeed, this is not a caveat of our algorithm or analysis. We point out in the following theorem that, for some problems, $\Tilde{\Pi}$ must be large to be near-optimal.
\begin{theorem}\label{prop:hardness_2}
	There exists an MDP with $S = 2$, $A = 2$ such that for any $|\Tilde{\Pi}|<2^{H}$, we must have $\operatorname{Gap}(\Tilde{\Pi}, \Pi)\ge\frac{1}{4}$.
\end{theorem}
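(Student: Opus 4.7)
The plan is to build a two-state, two-action MDP together with a family of $2^H$ attackers $\{\nu^p\}_{p\in\{0,1\}^H}$ indexed by binary patterns, and argue that any single victim policy can be near-optimal against at most one pattern, so that covering all $2^H$ patterns forces $|\Tilde{\Pi}|\ge 2^H$. Concretely, I would take $\cS=\{s_0,s_1\}$, $\cA=\{a_0,a_1\}$, set $\mu_1=\mathrm{Unif}(\cS)$ and $\TT(\cdot\given s,a)=\mathrm{Unif}(\cS)$ so that $s_h$ is i.i.d.\ uniform and independent of past actions, define $r_h(s_i,a_j)=\mathbf{1}[i=j]$, and allow $\cB(s)=\cS$. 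For each $p\in\{0,1\}^H$ the deterministic attacker is $\nu^p_h(s_i)=s_{i\oplus p_h}$.

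The key structural step is to show that each $\pi\in\Pi$ reduces to a vector $\theta(\pi)\in[-\tfrac{1}{2},\tfrac{1}{2}]^H$ which fully captures its value against every $\nu^p$. Under $\nu^p$, the perturbed observations $\hat s_{1:h}$ are still i.i.d.\ uniform regardless of $p$. After marginalizing past actions, let $q_h(\hat s_{1:h})$ be the probability that $\pi$ plays $a_1$ after seeing $\hat s_{1:h}$. Conditioning on $s_h$ and using $\hat s_h=s_h\oplus p_h$, a short calculation yields
\[
J(\pi,\nu^p)=\tfrac{H}{2}+\sum_{h=1}^H\theta_h(\pi)(1-2p_h),\qquad \theta_h(\pi):=\tfrac{1}{2}\,\EE_{x\sim\mathrm{Unif}(\{0,1\}^{h-1})}\bigl[q_h(x,1)-q_h(x,0)\bigr].
\]
In particular $\max_{\pi\in\Pi}J(\pi,\nu^p)=H$, attained by the Markov policy that plays $a_{\hat s_h\oplus p_h}$ at step $h$ (which realizes $\theta_h=\tfrac{1}{2}(1-2p_h)$).

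The uniqueness-of-coverage claim is that $J(\pi,\nu^p)>H-\tfrac{1}{4}$ forces $p$ to be uniquely determined by $\pi$. Indeed, this inequality rewrites as $\sum_h\bigl(\tfrac{1}{2}-\theta_h(\pi)(1-2p_h)\bigr)<\tfrac{1}{4}$; each summand lies in $[0,1]$ and is non-negative, so every summand must be strictly less than $\tfrac{1}{4}$. This forces $\mathrm{sign}(\theta_h(\pi))=1-2p_h$ and $|\theta_h(\pi)|>\tfrac{1}{4}$ at every $h\in[H]$, so $p_h$ is recovered from $\mathrm{sign}(\theta_h(\pi))$. A pigeonhole step then closes the proof: if $|\Tilde{\Pi}|<2^H$, some pattern $p^\star\in\{0,1\}^H$ is uncovered by every $\pi\in\Tilde{\Pi}$, giving $\max_{\pi'\in\Tilde{\Pi}}J(\pi',\nu^{p^\star})\le H-\tfrac{1}{4}$ while $\max_{\pi\in\Pi}J(\pi,\nu^{p^\star})=H$, so that $\operatorname{Gap}(\Tilde{\Pi},\Pi)\ge\tfrac{1}{4}$.

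The main obstacle I expect is the parameterization step: I must verify carefully, for arbitrary history-dependent and stochastic victim policies, that the $p$-dependence of $\EE[r_h]$ really collapses to the single scalar $\theta_h(\pi)\in[-\tfrac{1}{2},\tfrac{1}{2}]$. This uses crucially that, under the uniform transition kernel, $s_h$ is independent of $(s_{1:h-1},a_{1:h-1})$ and hence of $\hat s_{1:h-1}$, so that $\hat s_h$ given the history is uniform for every attacker in the family. Once this parameterization is in hand, the uniqueness-of-coverage and pigeonhole steps are elementary.
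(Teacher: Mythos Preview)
Your proof is correct and uses the same MDP construction and the same family of $2^H$ swap/no-swap attackers as the paper. The execution differs in a useful way. You parameterize $J(\pi,\nu^p)$ directly as an affine function of $(1-2p_h)_{h\in[H]}$ via the scalars $\theta_h(\pi)$, and then observe that $J(\pi,\nu^p)>H-\tfrac14$ pins down $p$ uniquely through the signs of the $\theta_h$. The paper instead invokes its decomposition lemma (any history-dependent policy equals a mixture over deterministic policies) to write each $\tilde\pi^k$ as a convex combination of the $4^H$ Markov basis policies, and pigeonholes on the coefficients $\alpha^m(\tilde\pi^k)$ attached to the $2^H$ best-response basis policies, showing that some $m$ must have $\alpha^m(\tilde\pi^k)\le\tfrac12$ for every $k\in[K]$. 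Your route is more elementary in that it bypasses the decomposition lemma entirely and works directly with the expected per-step reward, exploiting the fact that the observation law is the same under every attacker in the family; the paper's route is more structural, making the role of the extremal deterministic policies explicit. At the core both pigeonhole arguments encode the same fact: a single victim policy can be within $\tfrac14$ of optimal against at most one of the $2^H$ attack patterns.
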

Nevertheless, this does not mean the problem is always intractable. 
As for concrete applications, it is possible that ${f_k}$ can still converge to a small value quickly as $k$ increases. Therefore, we shall investigate how the cardinality of $\Tilde{\Pi}$ affects empirical performance on standard benchmarks. We remark that Proposition \ref{prop:hardness} and Theorem \ref{prop:hardness_2} reveal the fundamental hardness of our problem setting for test time and training time respectively.
\subsection{How to attack adaptive victim policies optimally?}
Although our primary focus is on developing robust victims against attacks beyond worst-case scenarios, we also explore how to attack an adaptive victim {\it optimally}. Existing works typically formulate this as a single-agent RL problem, as {\it the attacker usually targets only a single static victim in a stationary environment}. However, once the victim can adapt, the attack problem becomes more challenging. Since our focus is on developing robust victims, we consider a white-box attack setup, where the attacker is aware that the victim will be adaptive and will use the refined policy class $\Tilde{\Pi}$ at test time.
Consequently, its attack objective can be framed as
\$
\min_{\nu}\max_{\omega\in\Delta(\Tilde{\Pi})} \EE_{\pi\sim\omega}J(\pi, \nu),
\$
accounting for the fact that the victim can adaptively identify its optimal choice from $\Tilde{\Pi}$ in response to any arbitrary static attacker $\nu$, as per Proposition \ref{prop:adv_bandit}. While this objective might seem formidable to solve, it turns out that existing works have already laid the groundwork for this problem. In this context, the inner problem can be solved tractably, and the outer minimization problem can be addressed by employing existing RL-based methods, such as SA-RL \citep{zhang2021robust} and PA-AD \citep{sun2021strongest}. Consequently, we can repeat the process of solving the inner maximization first and then applying a gradient update for the outer minimization problem \citep{lin2019gradient}.

\vspace{-1em}
\section{Experiments}
\vspace{-1em}
In this section, our primary focus is to explore the following questions:
\begin{list}{$\rhd$}{\topsep=0.ex \leftmargin=0.2in \rightmargin=0.in \itemsep=0.0in}
\item Can our methods attain improved robustness against non-worst-case static attacks in comparison to formulations that explicitly optimize for worst-case performance, while maintaining comparable robustness against worst-case attacks?
\item Can our methods render better test-time performance against dynamic attackers through online adaptation compared to baselines deploying a single, static victim?
\item Are our methods capable of achieving competitive performance with a reasonably small $\Tilde{\Pi}$?
\end{list}

\subsection{Experimental setup and baselines}

For empirical studies, we implement our framework in four Mujoco environments with continuous action spaces, specifically, Hopper, Walker2d, Halfcheetah, and Ant, adhering to a setup similar to most related works \citep{zhang2020robust, zhang2021robust,sun2019model,liang2022efficient}. We compare our methods with several state-of-the-art robust training methods including ATLA-PPO \citep{zhang2021robust}, PA-ATLA-PPO \citep{sun2021strongest}, and WocaR-PPO \citep{liang2022efficient}. WocaR-PPO is reported to be the most robust in most environments. We defer the comparison with other baselines, along with additional implementation and hyperparameter details to the Appendix.
\vspace{-1em}
\subsection{Performance against static attacks}
\begin{table}[!t]
\vspace{-0.5em}
\centering
\renewcommand{\arraystretch}{1.2}
\resizebox{\textwidth}{!}{%
\setlength{\tabcolsep}{4pt}
\begin{tabular}{c|c|p{2cm}<{\centering} p{2cm}<{\centering} p{2cm}<{\centering} p{2cm}<{\centering} p{2cm}<{\centering} p{2cm}<{\centering}}
\toprule
\textbf{Environment}                  & \textbf{Model}          & \begin{tabular}[c]{@{}c@{}}{\bf Natural}\\{\bf Reward}\end{tabular} & \textbf{Random}   & \textbf{RS}          & \textbf{SA-RL}       & \textbf{PA-AD}        \\
 \hline
\multirow{4}{5.5em}{\begin{tabular}[c]{@{}c@{}}{\textbf{Hopper} }\\state-dim: 11\\$\epsilon$=0.075\end{tabular}}    
 & ATLA-PPO            & 3291 $\pm$ 600    & 3165 $\pm$ 576  & 2244 $\pm$ 618  & 1772 $\pm$ 802 & 1232 $\pm$ 350  \\
& PA-ATLA-PPO      & 3449 $\pm$ 237      & 3325 $\pm$ 239  & 3002 $\pm$ 329  & 1529 $\pm$ 284   & 2521 $\pm$ 325  \\
 & WocaR-PPO & 3616 $\pm$ 99     & 3633 $\pm$ 30   & {3277 $\pm$ 159}  & 2390 $\pm$ 145   & 2579 $\pm$ 229  \\
 & \cellcolor{lightgray}{\textbf{Ours}} & \cellcolor{lightgray}{\textbf{3652 $\pm$ 108}} & \cellcolor{lightgray}{ \textbf{3653 $\pm$ 57}}   & \cellcolor{lightgray}{\textbf{3332 $\pm$ 713}}  & \cellcolor{lightgray}{\textbf{2526 $\pm$ 682}}   & \cellcolor{lightgray}{\textbf{2896 $\pm$ 723}}  \\
 \midrule
\multirow{4}{5.5em}{\begin{tabular}[c]{@{}c@{}}{\textbf{Walker2d}}\\state-dim: 17\\$\epsilon$=0.05\end{tabular}}    
 & ATLA-PPO            & 3842 $\pm$ 475     & 3927 $\pm$ 368  & 3239 $\pm$ 294  & 3663 $\pm$ 707  & 1224 $\pm$ 770  \\
& PA-ATLA-PPO      & 4178 $\pm$ 529     & 4129 $\pm$ 78   & 3966 $\pm$ 307  & 3450 $\pm$ 178  & 2248 $\pm$ 131  \\
 & WocaR-PPO & 4156 $\pm$ 495     & 4244 $\pm$ 157  & 4093 $\pm$ 138  & 3770 $\pm$ 196  & {2722 $\pm$ 173}  \\
 & \cellcolor{lightgray}{\textbf{Ours}} & \cellcolor{lightgray}{\textbf{6319 $\pm$ 31}} & \cellcolor{lightgray}{\textbf{6309 $\pm$ 36}}  & \cellcolor{lightgray}{\textbf{5916 $\pm$ 790}}  & \cellcolor{lightgray}{\textbf{6085 $\pm$ 620}}  & \cellcolor{lightgray}{\textbf{5803 $\pm$ 857}}  \\
 \midrule
 \multirow{4}{5.5em}{\begin{tabular}[c]{@{}c@{}}{\textbf{Halfcheetah}}\\state-dim: 17\\$\epsilon$=0.15\end{tabular}}    
& ATLA-PPO  & 6157 $\pm$ 852     & 6164 $\pm$ 603  & 4806 $\pm$ 392  & 5058 $\pm$ 418  & 2576 $\pm$ 548  \\
& PA-ATLA-PPO      & 6289 $\pm$ 342     & 6215 $\pm$ 346  & 5226 $\pm$ 114  & 4872 $\pm$ 379  & 3840 $\pm$ 273  \\
& WocaR-PPO & 6032 $\pm$ 68    & 5969 $\pm$ 149  & {5319 $\pm$ 220}  & \textbf{5365 $\pm$ 54}  & 4269 $\pm$ 172  \\
& \cellcolor{lightgray}{\textbf{Ours}} & \cellcolor{lightgray}{\textbf{7095 $\pm$ 88}}    & \cellcolor{lightgray}{\textbf{6297 $\pm$ 471}} & \cellcolor{lightgray}{\textbf{5457  $\pm$ 385}}  & \cellcolor{lightgray}{5089 $\pm$ 86}  & \cellcolor{lightgray}{\textbf{4411 $\pm$ 718}}  \\
\midrule
 \multirow{4}{5.5em}{\begin{tabular}[c]{@{}c@{}}{\textbf{Ant}}\\state-dim: 111\\$\epsilon$=0.15\end{tabular}}    
 & ATLA-PPO            & 5359 $\pm$ 153     & 5366 $\pm$ 104  & 4136 $\pm$ 149  & 3765 $\pm$ 101  & 220 $\pm$ 338  \\
& PA-ATLA-PPO      & 5469 $\pm$ 106     & 5496 $\pm$ 158  & 4124 $\pm$ 291  & 3694 $\pm$ 188  & 2986 $\pm$ 364  \\
 & WocaR-PPO & 5596 $\pm$ 225  & 5558 $\pm$ 241  & {4339 $\pm$ 160}  & {3822 $\pm$ 185}  & {3164 $\pm$ 163}  \\
 & \cellcolor{lightgray}{\textbf{Ours}} & \cellcolor{lightgray}{\textbf{5769 $\pm$ 290}} & \cellcolor{lightgray}{\textbf{5630 $\pm$ 146}}  & \cellcolor{lightgray}{\textbf{4683 $\pm$ 561}}  & \cellcolor{lightgray}{\textbf{4524 $\pm$ 79}}  & \cellcolor{lightgray}{\textbf{4312 $\pm$ 281}}  \\
\bottomrule
\end{tabular}}
\caption{Average episode rewards $\pm$ standard deviation over 50 episodes with three baselines on Hopper, Walker2d, Halfcheetah, and Ant. $\epsilon$ stands for the attack budget chosen to be the same as related works. We use $|\Tilde{\Pi}|=5$ for ours and discuss its choice later. Natural reward and rewards under four types of attacks are reported. Under each column corresponding to an evaluation metric, we bold the best results. And the row for the most robust agent is highlighted as \colorbox{lightgray}{gray}.}
\label{tab:mujoco_app}
\vspace{-0.5em}
\end{table}

\begin{figure}[!t]
    \vspace{-0.3em}
    \centering
     \begin{subfigure}{0.32\textwidth}
     \vspace{-0.2em}
     \centering
         \includegraphics[width=\linewidth]{./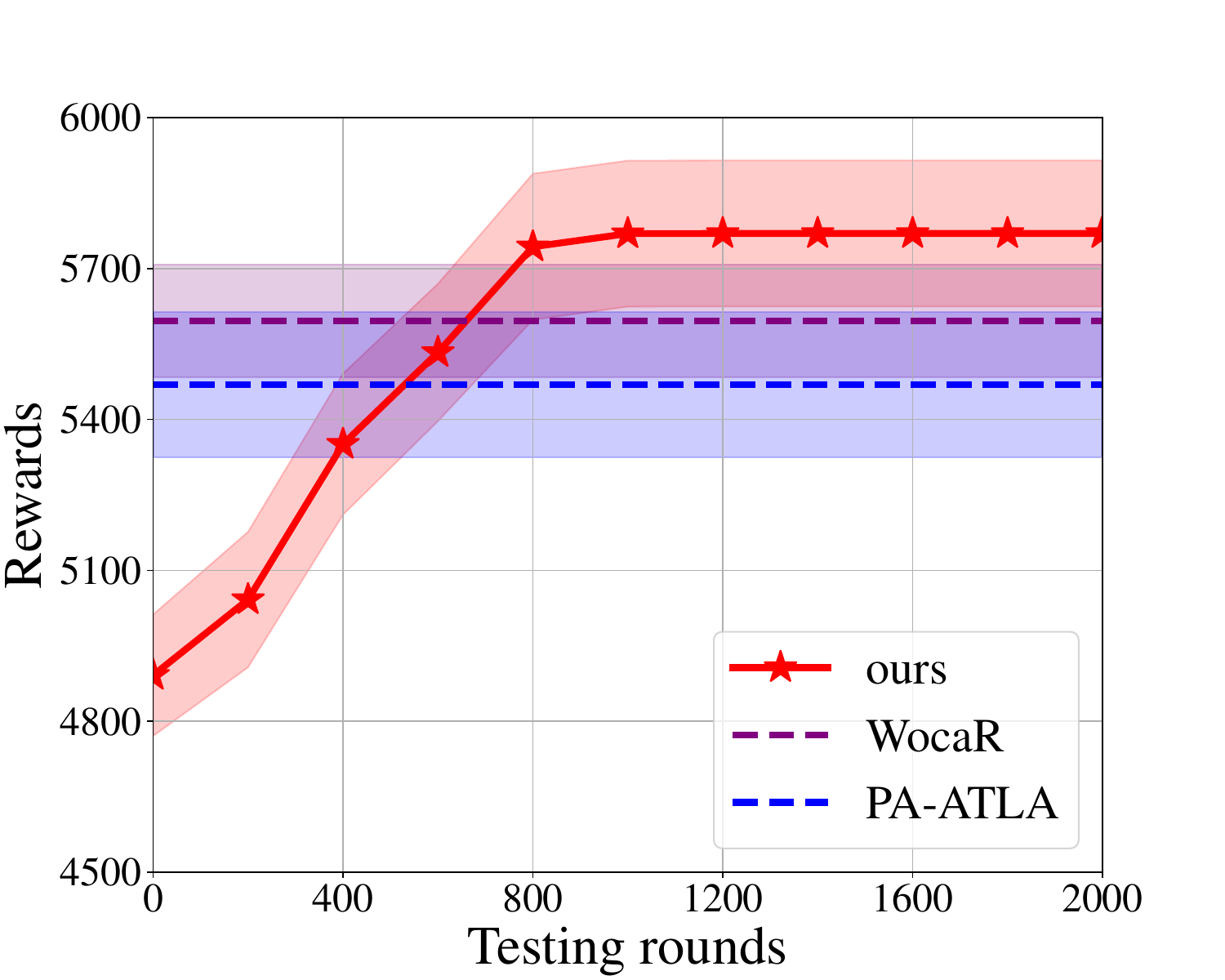}
     \end{subfigure}
     \centering
     \begin{subfigure}[b]{0.32\textwidth}
     \vspace{-0.2em}
     \centering
         \includegraphics[width=\linewidth]{./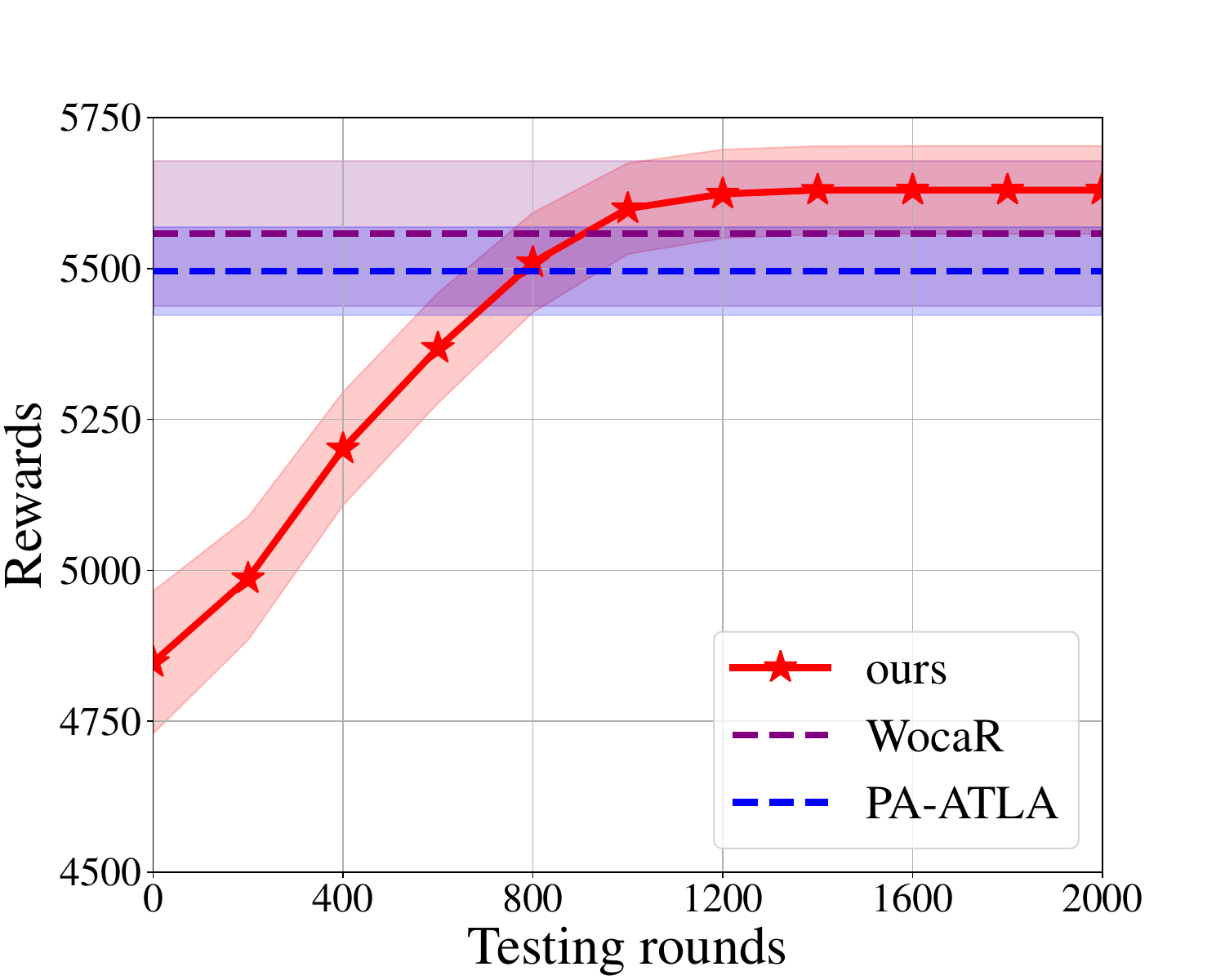}
     \end{subfigure}
     \centering
     \begin{subfigure}{0.32\textwidth}
     \vspace{-0.2em}
     \centering
         \includegraphics[width=\linewidth]{./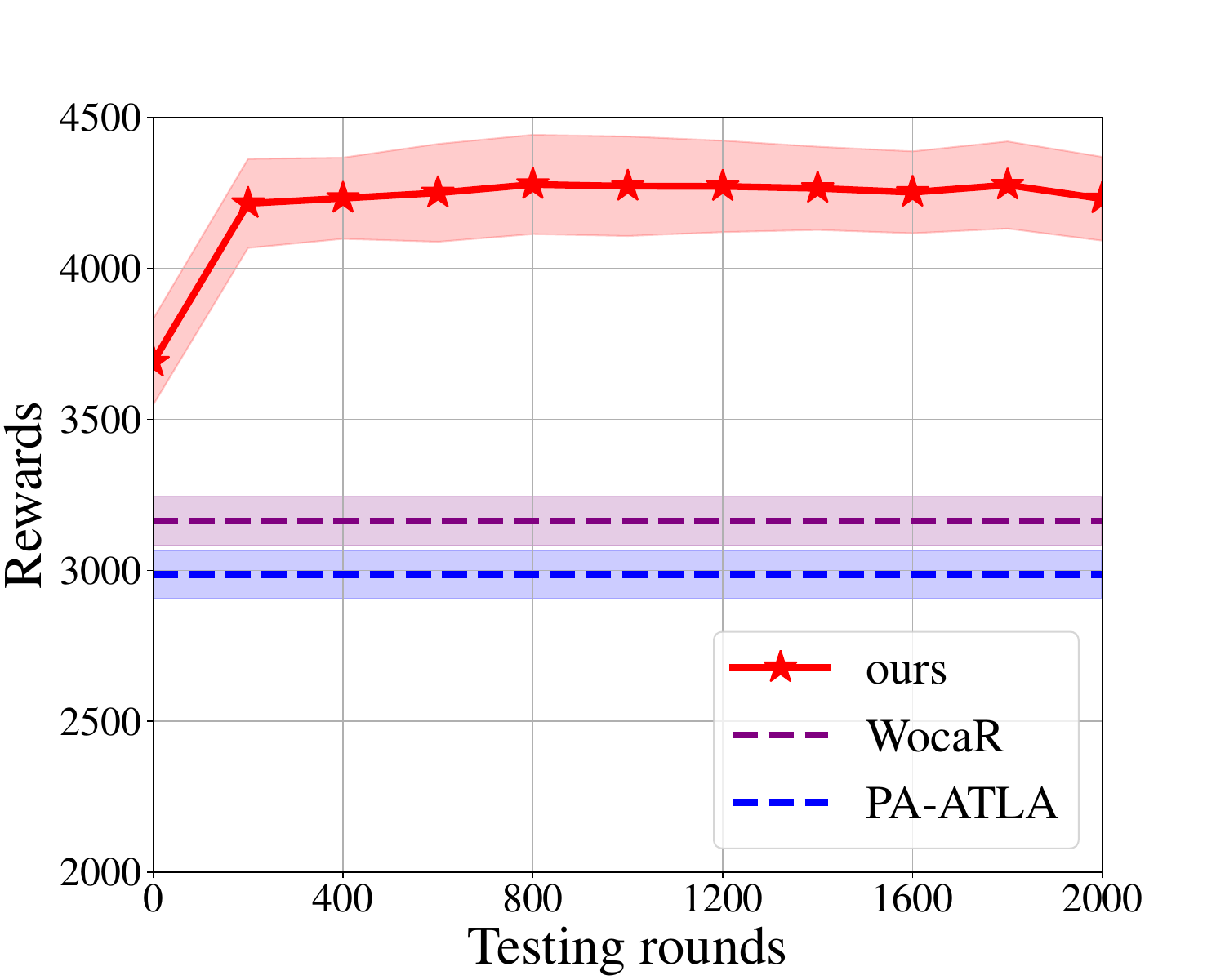}
     \end{subfigure}
     \centering
     \begin{subfigure}{0.32\textwidth}
     \centering
         \includegraphics[width=\linewidth]{./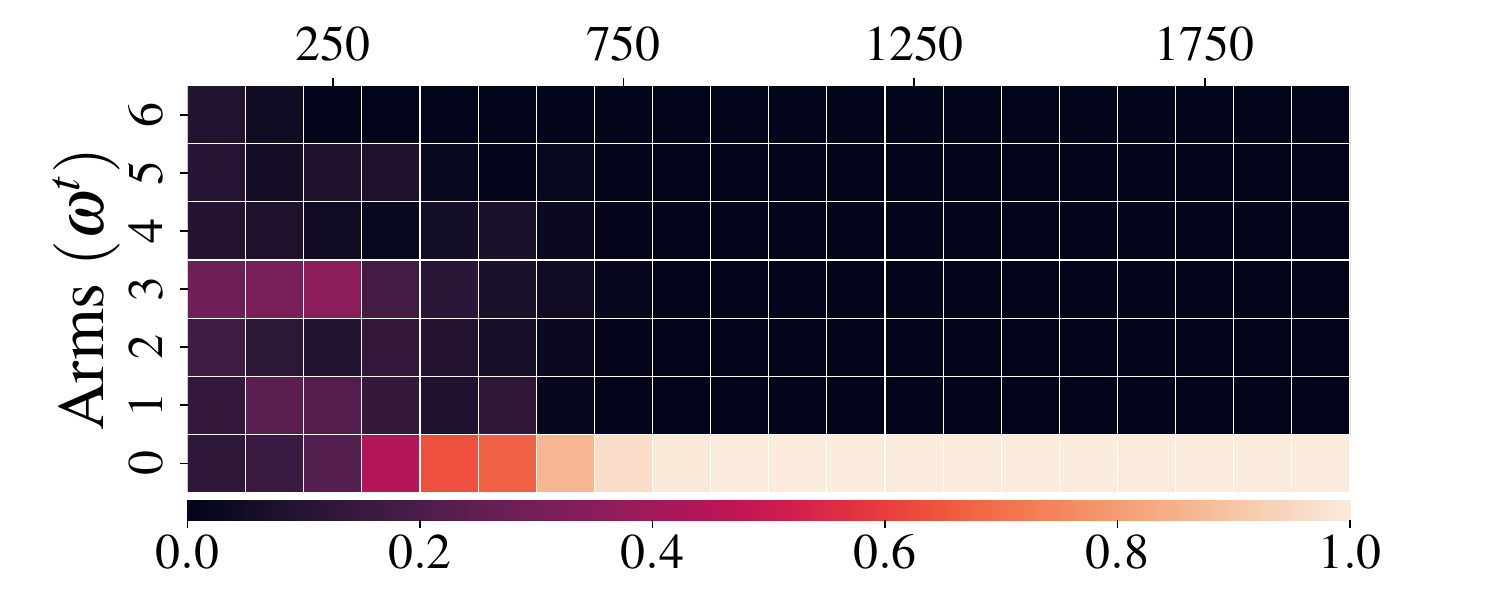}
         \caption{Natural}
     \end{subfigure}
     \centering
     \begin{subfigure}[b]{0.32\textwidth}
     \centering
         \includegraphics[width=\linewidth]{./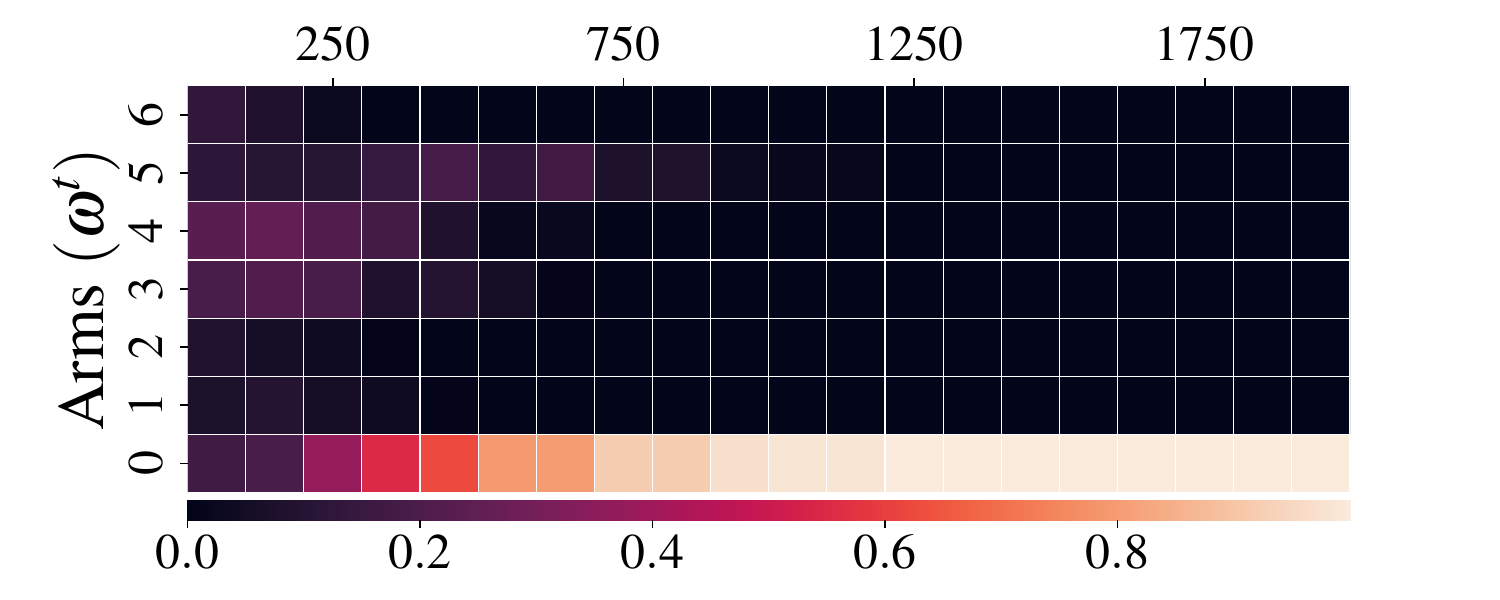}
         \caption{Random}
     \end{subfigure}
     \centering
     \begin{subfigure}{0.32\textwidth}
     \centering
         \includegraphics[width=\linewidth]{./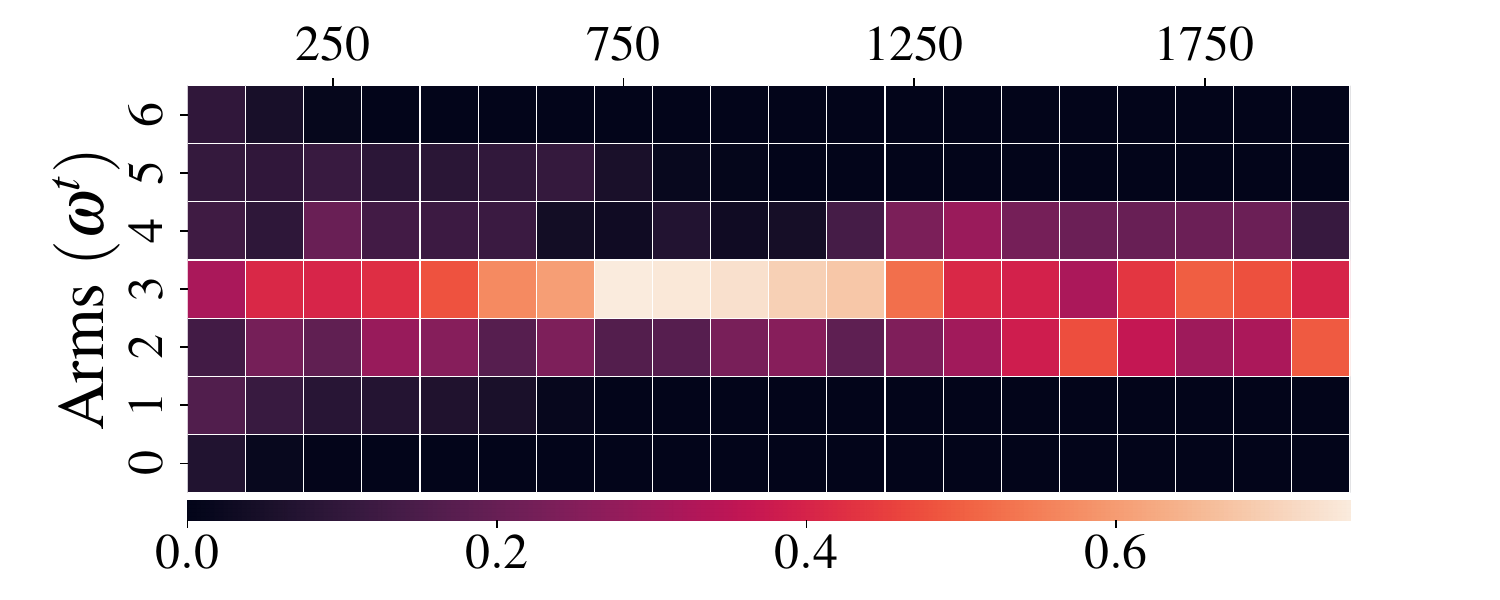}
         \caption{PA-AD}
     \end{subfigure}
     \caption{Online adaptation when facing unknown static attackers. It can be seen that the best policy can be identified quickly and reliably within $800$ episodes or less against different attackers.}
     \label{fig:no-switch}
     \vspace{-1.5em}
\end{figure}

In this subsection, we showcase improved performance against a spectrum of attacks, ranging from no attacks to the strongest ones. Accordingly, we present the natural rewards to depict the scenario without any attacks. We incorporate two heuristic attacks: random perturbations and robust SARSA (RS) \citep{zhang2020robust}, representing attacks beyond worst-case scenarios. We also include SA-RL attacks \citep{zhang2021robust} to reflect scenarios where the attacker might have limited algorithmic efficiency to devise an optimal attack policy since SA-RL can struggle with large action space in its formulation, and its attack performance can be further enhanced as indicated by \cite{sun2021strongest}, making it non-worst-case as well. Lastly, we incorporate PA-AD, the currently strongest attack.
\textbf{As observed in Table \ref{tab:mujoco_app}, our methods yield considerably higher natural rewards and consistently enhanced robustness against a spectrum of attacks.} 
To further show the improved performance against non-worst-case attacks, \textbf{we report the robustness under random attacks with various intensities in \S\ref{subsec:epsilon}, where our methods are consistently better}. Given that our victim policy is adaptive, some additional adaptation steps might be necessary to identify the optimal policy against the attackers. To illustrate this, \textbf{we detail the adaptation process in Figure \ref{fig:no-switch}, showcasing that the best policy within $\Tilde{\Pi}$ can be identified rapidly and reliably.}
\begin{figure}
     \vspace{-1em}
     \begin{subfigure}[b]{0.245\textwidth}
     \centering
     \includegraphics[width=\linewidth]{./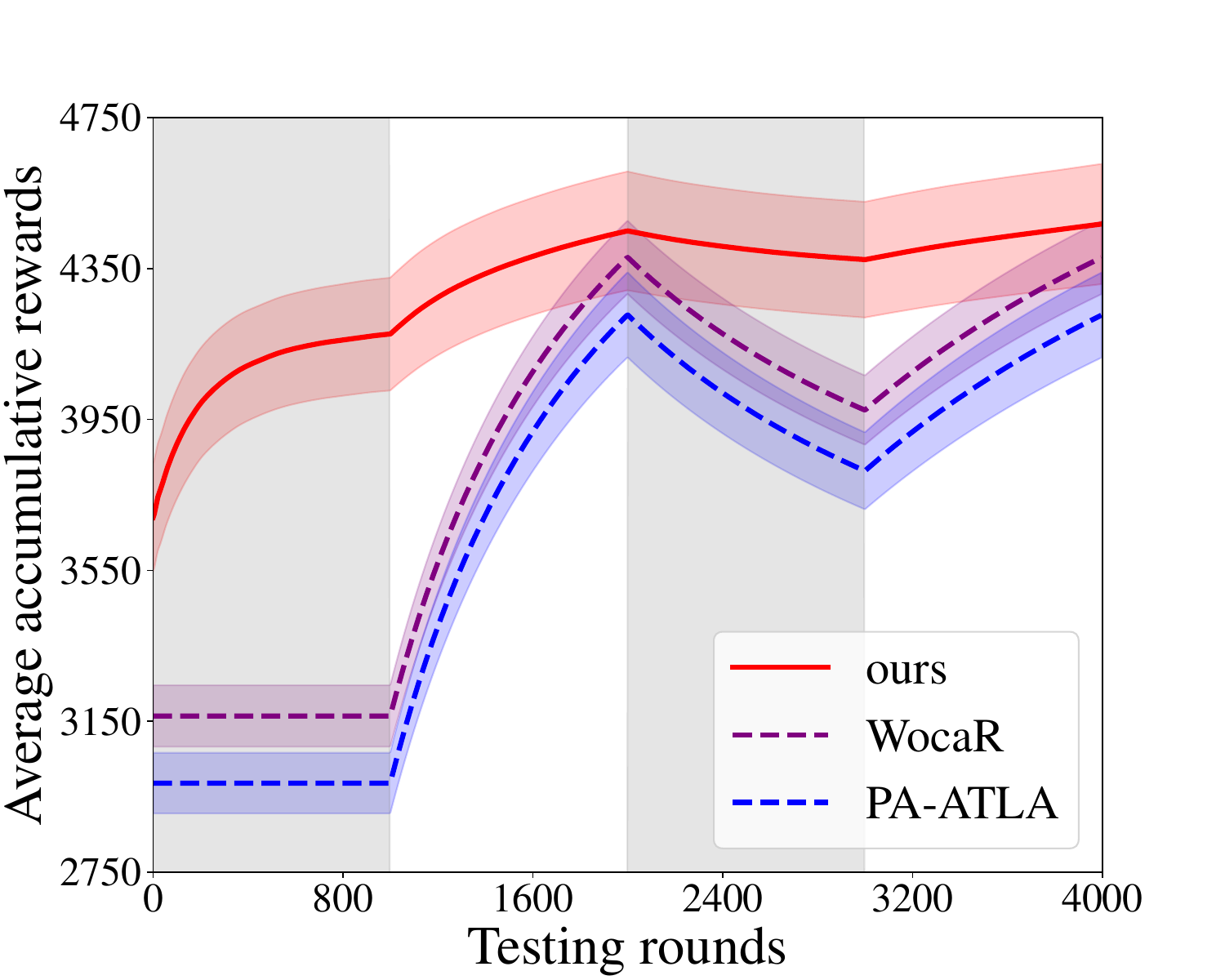}
     \caption{\scriptsize{Attack Period $T=1000$}}
     \label{fig:eriodic_2_plot}
     \end{subfigure}
     \begin{subfigure}[b]{0.245\textwidth}
     \centering
         \includegraphics[width=\linewidth]{./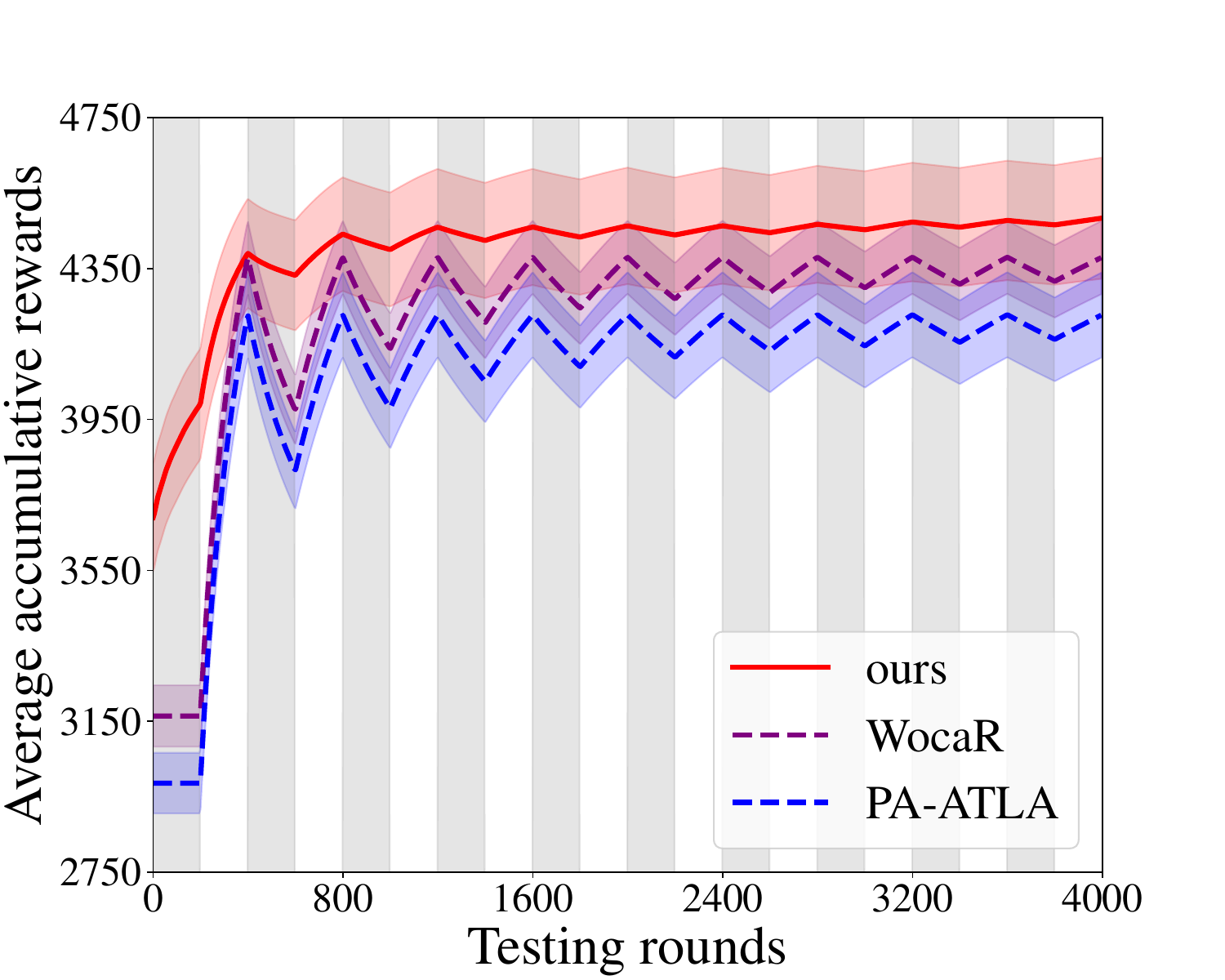}
         \caption{\scriptsize{Attack Period $T=200$}}
         \label{fig:periodic_10_plot}
     \end{subfigure}
     \begin{subfigure}[b]{0.245\textwidth}
     \centering
     \includegraphics[width=\linewidth]{./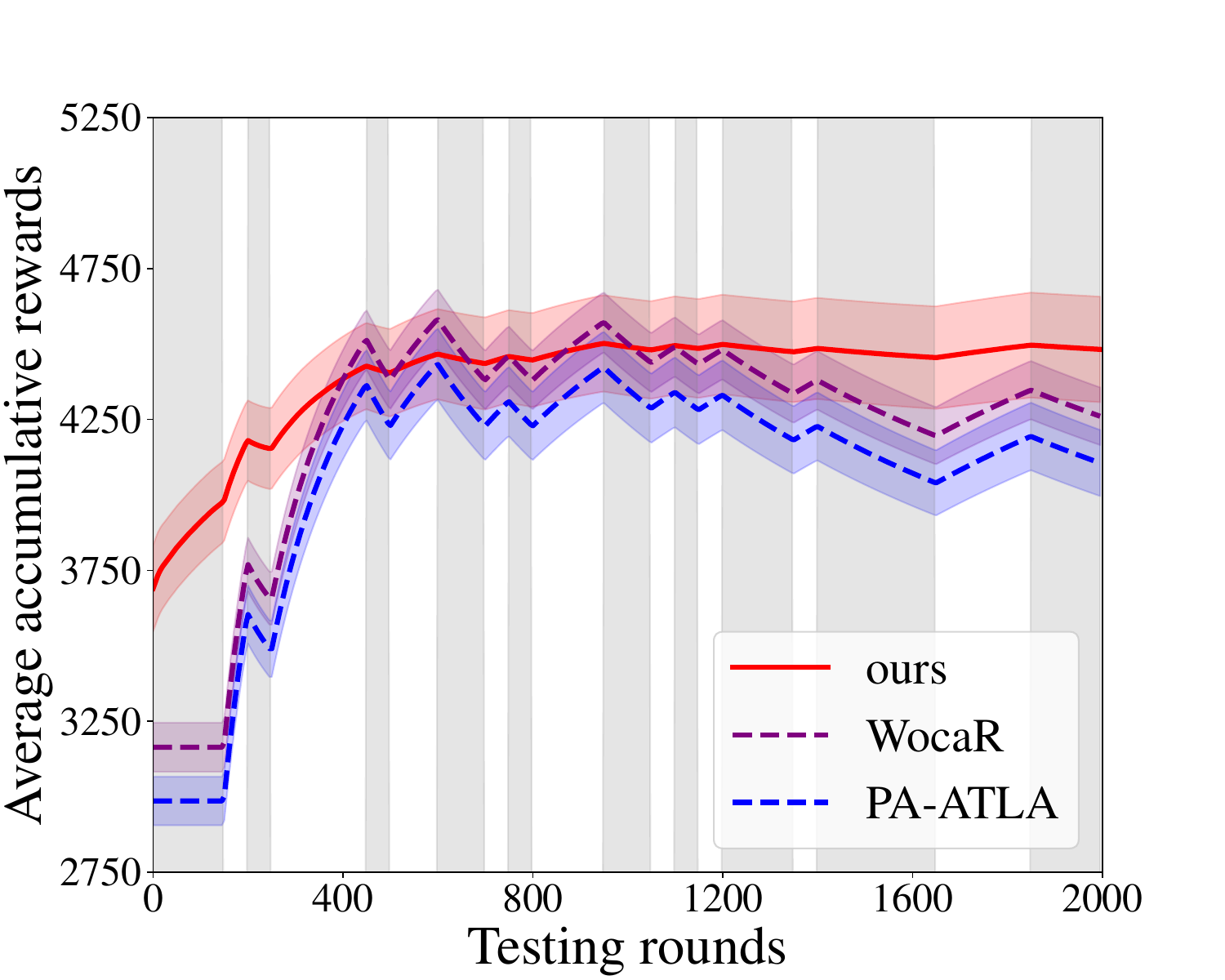}
     \caption{\scriptsize{Attack Probability $p=0.4$}}
     \label{fig:prob_switch_p0.2_plot}
     \end{subfigure}
     \begin{subfigure}[b]{0.245\textwidth}
     \centering
         \includegraphics[width=\linewidth]{./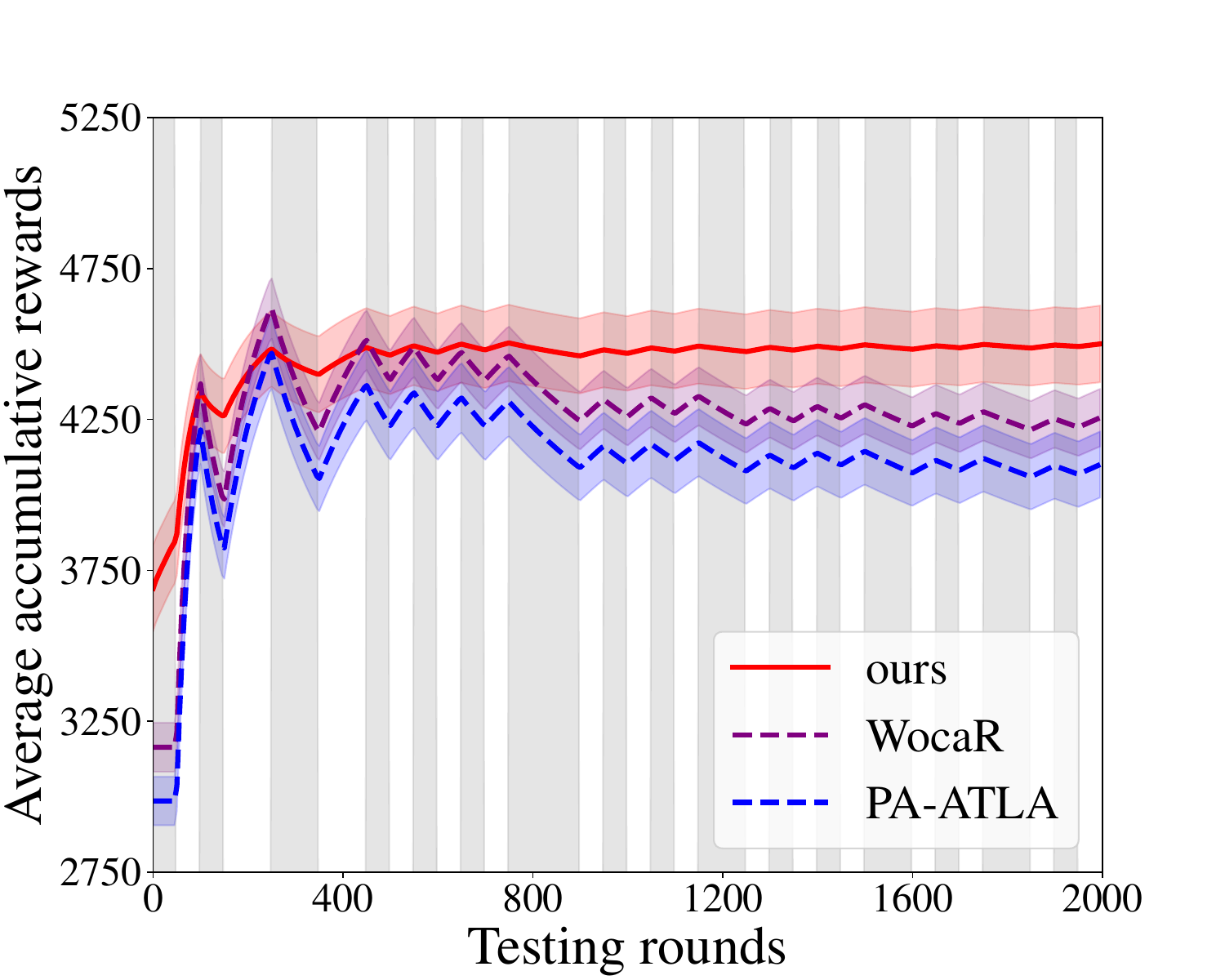}
         \caption{\scriptsize{Attack Probability $p=0.8$}}
         \label{fig:prob_switch_p0.8_plot}
     \end{subfigure}
     \vspace{-1.5em}
\caption{Time averaged accumulative rewards during online adaptation against periodic and probabilistic switching attacks on Ant. The shaded area indicates PA-AD attacks are active while the unshaded area indicates no attacks.}
     \label{fig:prob-switch}
     \vspace{-0.2em}
\end{figure}

\begin{figure}[!t]
     \vspace{-1em}
     \begin{subfigure}[b]{0.245\textwidth}
     \centering
     \includegraphics[width=\linewidth]{./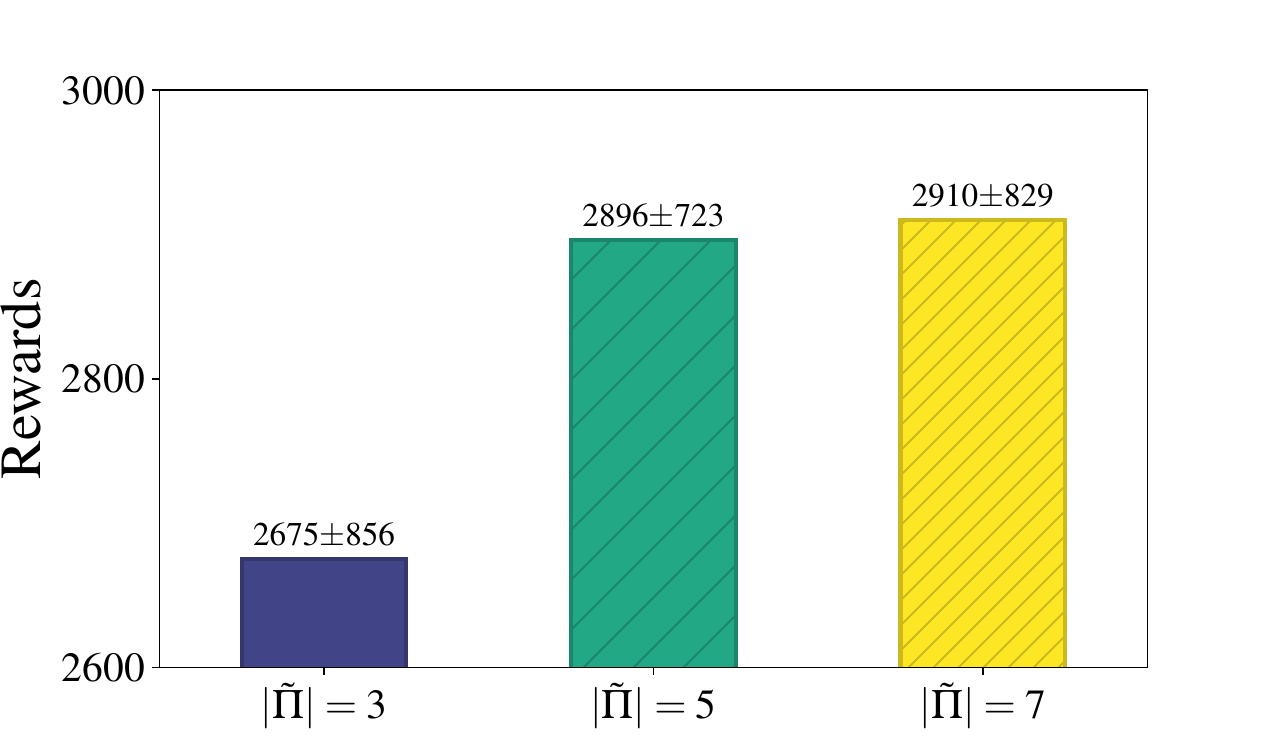}
     \caption{\scriptsize{Hopper}}
     \end{subfigure}
     \begin{subfigure}[b]{0.245\textwidth}
     \centering
         \includegraphics[width=\linewidth]{./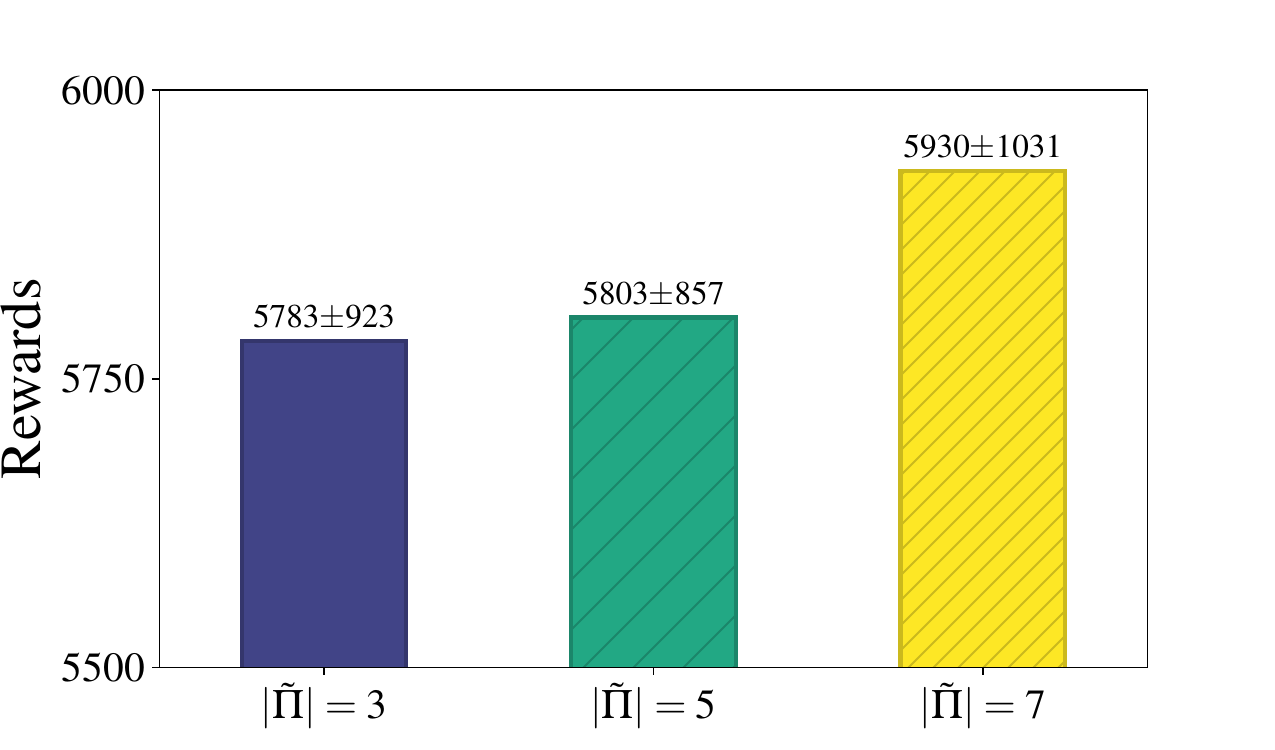}
         \caption{\scriptsize{Walker}}
     \end{subfigure}
     \begin{subfigure}[b]{0.245\textwidth}
     \centering
     \includegraphics[width=\linewidth]{./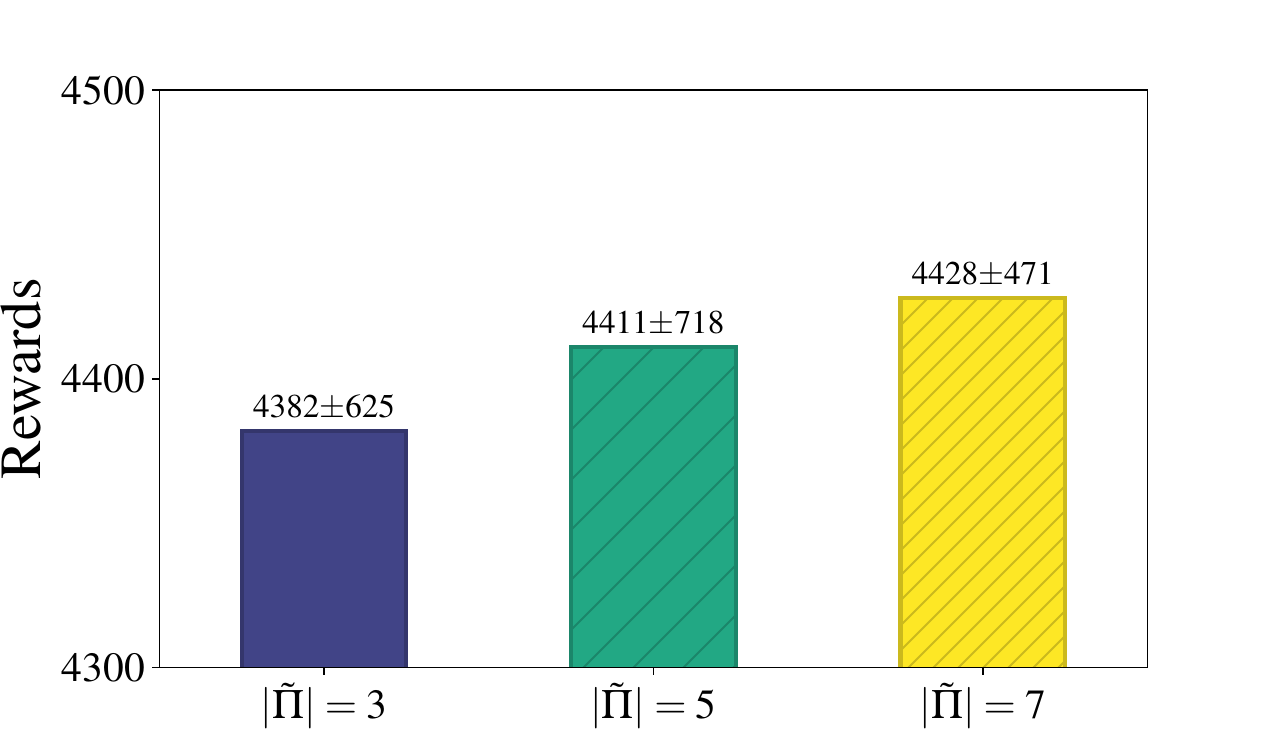}
     \caption{\scriptsize{Halfcheetah}}
     \end{subfigure}
     \begin{subfigure}[b]{0.245\textwidth}
     \centering
         \includegraphics[width=\linewidth]{./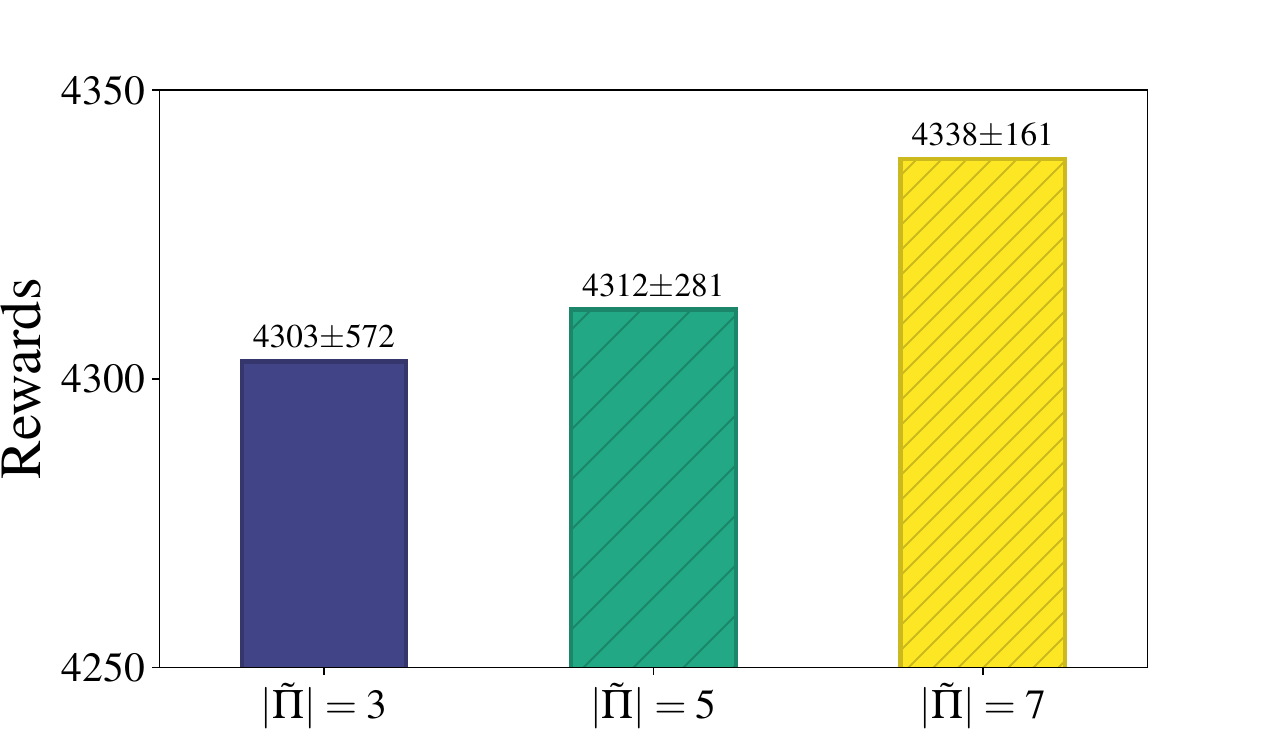}
         \caption{\scriptsize{Ant}}
     \end{subfigure}
     \vspace{-2em}
\caption{Ablation study of $|\Tilde{\Pi}|$ against PA-AD attacks on $4$ environments.}
     \label{fig:ablation}
     \vspace{-1.5em}
\end{figure}


\subsection{Performance against dynamic attacks}
\label{subsec:dynamic_attacks}

We also examine scenarios where the attacker can exhibit dynamic behavior. To model such scenarios, we let attackers switch between no attacks and PA-AD attacks in the following two fashions.

\textbf{Periodic attacks.} Here we examine a mode where the attacker is weaker than in the worst-case scenarios, characterized by attacks appearing only periodically. We depict the performance against periodic attacks with varied frequencies.

\textbf{Probabilistic switching attacks.}
In this section, we explore another mode where the attacker is less severe than in the worst-case scenarios. The attacker can toggle between being active and inactive. This switching is constrained to occur only with a probability $p$ at regular intervals. 

The results are shown in Figure \ref{fig:prob-switch}, illustrating that the average cumulative reward, or conversely, the negative of the regret, consistently outperforms the baselines.
\vspace{-0.5em}
\subsection{On the scalability of $|\Tilde{\Pi}|$}
\vspace{-0.5em}
One major concern regarding our approach is that it may require a rather large policy class $\Tilde{\Pi}$ to achieve desirable performance. We report the performance of our methods with different $|\Tilde{\Pi}|$ against PA-AD attacks in Figure \ref{fig:ablation} on all environments. \textbf{Surprisingly, our methods only need within $3$ policies for $\Tilde{\Pi}$ to achieve improved performance compared with baselines.}

%
%
\vspace{-0.5em}
\section{Concluding remarks and limitations}\label{sec:conclusions}
\vspace{-0.5em}
In this paper, we develop a general framework to improve victim performance against attacks beyond worst-case scenarios. There are two phases: pre-training of non-dominated policies and online adaptation via no-regret learning. One limitation is the potentially high overhead during training (approximately $2\times$ running time compared with \cite{sun2021strongest,liang2022efficient}), as highlighted by Theorem \ref{prop:hardness_2}. Additionally, identifying natural conditions to circumvent the hardness results outlined in Proposition \ref{prop:hardness} and Theorem \ref{prop:hardness_2}, such as Lipschitz transition dynamics and rewards, is not fully addressed and remains an important topic for future works.
\section{Acknowledgement}
Liu, Deng, Sun, Liang, and Huang are supported by National Science Foundation NSF-IISFAI program, DOD-ONR-Office of Naval Research, DOD Air Force Office of Scientific Research, DOD-DARPA-Defense Advanced Research Projects Agency Guaranteeing AI Robustness against Deception (GARD), Adobe, Capital One and JP Morgan faculty fellowships.

\bibliography{iclr2024_conference,main,mybibfile}
\bibliographystyle{iclr2024_conference}

\appendix
\newpage
\addcontentsline{toc}{section}{Appendix} 
\part{\Large{Appendix for ``Beyond Worst-case Attacks: Robust RL with Adaptive Defense via Non-dominated Policies''}} 
\parttoc
\section{Additional related works}
\textbf{Other works related to adversarial RL.} Although our paper mainly studies the popular attack model of adversarial state perturbations, the vulnerability of RL is also studied under other different threat models. Adversarial action attacks are developed separately from state attacks including \cite{pan2019characterizing,tessler2019action,tan2020robustifying,lee2021query}. Poisoning \citep{behzadan2017vulnerability,huang2019deceptive,sun2020vulnerability,zhang2020adaptive,rakhsha2020policy} is another type of adversarial attack that manipulates the training data, different from the test-time attacks that deprave a well-trained policy.
  
  \textbf{Diverse multi-policy RL.} There are also a bunch of related works dedicated to developing RL policies that can generalize to unknown test environments. The main idea is to encourage the diversity of learned policies \citep{eysenbach2018diversity,kumar2020one}, by ensuring good coverage in the state occupancy space {\it for the training environment}. However, the robustness of such policies against malicious, and even adaptive attackers during test time remains an open question. We posit that incorporating the possibility of adaptive test-time attackers into the training phase is critical for developing robust policies. Meanwhile, \cite{zahavy2021discovering} considers constructing a diverse set of policies through a robustness objective, which targets the {\it worst-case reward}.	
  
 \textbf{Multi-objective RL and optimization.} In the training phase, the problem we investigate is conceptually similar to multi-objective RL, wherein the objective functions correspond to the victim's rewards against a range of potential attackers. Extant literature primarily adopts one of two approaches to this challenge \citep{roijers2013survey}. The first approach converts the multi-objective problem into a single-objective optimization task through a variety of techniques, subsequently employing traditional algorithms to identify solutions \citep{kim2006adaptive,konak2006multi,nakayama2009sequential}. However, such methods inherently yield an \textit{average} policy over the preference space and lack the flexibility to optimize for individualized preference vectors. In contrast, our methodology during the training phase aligns more closely with the second category of approaches, which seeks an optimal policy set that spans the entire domain of feasible preferences \citep{natarajan2005dynamic,barrett2008learning,mossalam2016multi,yang2019generalized}. Unfortunately, existing techniques are not well-suited to address the unique complexities of our problem. Specifically, conventional methods are predicated on the assumption that, in multi-objective RL, distinct objectives only alter the reward function of the MDP, while the transition dynamics remain invariant. This structure facilitates the use of established algorithms such as value iteration or Q-learning. In the context of our problem, as mentioned before, this assumption does not hold, as the attacker significantly influences the transition dynamics from the victim's standpoint.
 
\section{Theoretical analysis}
\subsection{Supporting lemmas}
Here we prove the following series of lemmas for the proof of our propositions and theorems. From now on, for any $\omega\in\Delta(\Pi)$ and $\nu$, we use the shorthand notation $J(\omega, \nu) := \EE_{\pi\sim\omega}J(\pi,\nu)$.

\begin{lemma}\label{lem:decomp}
	For any $\pi\in\Pi$, there always exists $\omega\in\Delta(\Pi^{\text{det}})$ such that $J(\pi, \nu) = J(\omega, \nu)$ for any $\nu\in\cV$.
\end{lemma}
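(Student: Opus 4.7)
The plan is to establish this as a version of Kuhn's theorem: since the victim's history $\tau_h = (\hat s_1, a_1, \ldots, \hat s_h)$ contains every perturbed state and action the victim has observed, the victim has \emph{perfect recall}, so any behavioral (history-dependent, locally randomized) policy $\pi$ admits a realization-equivalent mixture over deterministic history-dependent policies.

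Construction of $\omega$: for each history $\tau \in \cT$, independently sample an action $a_\tau \sim \pi(\cdot \mid \tau)$. This defines a random element $\pi^{\det} \in \Pi^{\det}$ with $\pi^{\det}(\tau) = a_\tau$ for every $\tau$. Let $\omega$ be its law on $\Pi^{\det}$. I then need to show \emph{realization equivalence}: for every $\nu \in \cV$, the joint law over full trajectories $(z, s_{1:H}, \hat s_{1:H}, a_{1:H})$ induced by $\pi$ equals the one induced by first drawing $\pi^{\det} \sim \omega$ and then running $\pi^{\det}$ against $\nu$. Since the reward $\sum_h r_h(s_h,a_h)$ is a functional of the trajectory, $J(\pi,\nu) = J(\omega,\nu)$ follows by taking expectations.

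To verify realization equivalence, fix a trajectory and write out its probability. The environment factors $\PP(z)$, $\mu_1(s_1)$, $\TT(s_h \mid s_{h-1}, a_{h-1})$, and $\nu_h(\hat s_h \mid s_h, z)$ are identical on both sides. What differs is the action-selection factor. Under $\pi$ it is $\prod_{h=1}^H \pi(a_h \mid \tau_h)$. Under $\omega$ it is the probability that the random deterministic policy selects exactly $a_h$ at $\tau_h$ for every $h$, namely $\PP_{\pi^{\det}\sim\omega}\big[\pi^{\det}(\tau_h)=a_h \text{ for all } h\in[H]\big]$. The key observation is that along any fixed trajectory the histories $\tau_1, \tau_2, \ldots, \tau_H$ are pairwise distinct (they have distinct lengths), so by the independence of the $a_\tau$'s across distinct $\tau$,
\[
\PP_{\pi^{\det}\sim\omega}\big[\pi^{\det}(\tau_h) = a_h,\ \forall h\in[H]\big] = \prod_{h=1}^H \pi(a_h \mid \tau_h),
\]
matching the $\pi$-side factor. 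Marginalizing/summing over trajectories gives the claim.

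The one genuine subtlety is measurability: if $\cS$ is continuous then $\cT$ is an uncountable index set and $\omega$ must be constructed as a product measure via the Kolmogorov extension theorem (the per-coordinate kernels $\pi(\cdot\mid\tau)$ being measurable in $\tau$ suffices). I would either do this explicitly or restrict attention to the countable setting and cite Kuhn's theorem \citep{kuhn1953extensive} for the general case, since only finitely many histories are visited per episode and the realization-equivalence identity above only ever involves finite-dimensional marginals of $\omega$.
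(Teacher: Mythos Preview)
Your proposal is correct and is essentially the same argument as the paper's: the paper constructs $\omega(\pi') = \prod_{\tau\in\cT}\pi(\pi'(\tau)\mid\tau)$, which is exactly the law of your independently-sampled $\pi^{\det}$, and then verifies that the action-selection factor along any fixed trajectory marginalizes to $\prod_h \pi(a_h\mid\tau_h)$---precisely your ``distinct histories $\Rightarrow$ independence $\Rightarrow$ product'' step. Your framing via Kuhn's theorem and your remark on measurability in the continuous case are welcome additions the paper omits (it implicitly works in the finite setting, cf.\ $|\Pi^{\text{det}}|=\cO((SA)^H)$ elsewhere), but the substance is identical.
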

\begin{proof}
	Consider any trajectory $\{s_h, \hat{s}_h, a_{h}\}_{h\in[H]}$ and random seed $z\in\cZ$, we compute its probability under policy $\pi\in\Pi$ and $\nu\in\cV$ as follows
	\$
	&\PP^{\pi, \nu}(\{s_h, \hat{s}_h, a_{h}\}_{h\in[H]}, z)\\
	&=\PP(z)\mu_1(s_1)\nu_1(\hat{s}_1\given s_1, z)\pi(a_1\given \hat{s}_1)\prod_{h=2}^{H}\TT(s_{h}\given s_{h-1}, a_{h-1})\nu_h(\hat{s}_h\given s_h, z)\pi(a_{h}\given \hat{s}_{1:h}, a_{1:h-1})\\
	&=\left[\pi(a_1\given \hat{s}_1)\prod_{h=2}^H\pi(a_{h}\given \hat{s}_{1:h}, a_{1:h-1})\right]\PP(z)\mu_1(s_1)\nu_1(\hat{s}_1\given s_1, z)\prod_{h=2}^{H}\TT(s_{h}\given s_{h-1}, a_{h-1})\nu_h(\hat{s}_h\given s_h, z).
	\$
	Now we are ready to construct the mixture of policy $\omega\in\Delta(\Pi^{\text{det}})$. For any $\pi^\prime\in\Pi^{\text{det}}$, we define its probability in the mixture as
	\#\label{eq:omega}
	\omega(\pi^{\prime}) := \prod_{h^\prime\in[H]}\prod_{\{\hat{s}_h^\prime, a_h^\prime\}_{h\in[h^\prime]}} \pi(\pi^\prime(\hat{s}_{1:h}^\prime, a_{1:h-1}^\prime)\given \hat{s}_{1:h}^\prime, a_{1:h-1}^\prime).
	\#
	Now we can compute 
	\$
	&\PP^{\omega, \nu}(\{s_h, \hat{s}_h, a_{h}\}_{h\in[H]}, z) = \EE_{\pi^\prime\sim\omega}\PP^{\pi^\prime, \nu}(\{s_h, \hat{s}_h, a_{h}\}_{h\in[H]}, z)\\
	&=\left[\PP(z)\mu_1(s_1)\nu_1(\hat{s}_1\given s_1, z)\prod_{h=2}^{H}\TT(s_{h}\given s_{h-1}, a_{h-1})\nu_h(\hat{s}_h\given s_h, z)\right]\EE_{\pi^\prime\sim\omega}\mathbbm{1}\left[a_1=\pi^\prime(\hat{s}_1), \{a_{h}=\pi^\prime(\hat{s}_{1:h}, a_{1:h-1})\}_{h=2}^H\right]\\
	&=\left[\PP(z)\mu_1(s_1)\nu_1(\hat{s}_1\given s_1, z)\prod_{h=2}^{H}\TT(s_{h}\given s_{h-1}, a_{h-1})\nu_h(\hat{s}_h\given s_h, z)\right]
		\PP(a_1=\pi^\prime(\hat{s}_1), \{a_{h}=\pi^\prime(\hat{s}_{1:h}, a_{1:h-1})\}_{h=2}^H)\\
	&=\left[\PP(z)\mu_1(s_1)\nu(\hat{s}_1\given s_1, z)\prod_{h=2}^{H}\TT(s_{h}\given s_{h-1}, a_{h-1})\nu_h(\hat{s}_h\given s_h, z)\right]\left[\pi(a_1\given \hat{s}_1)\prod_{h=2}^H\pi(a_{h}\given \hat{s}_{1:h}, a_{1:h-1})\right],
	\$
	where the last step comes from the construction of $\omega$ in Equation \ref{eq:omega} by marginalization.
	Therefore, we conclude that $\PP^{\pi, \nu}(\{s_h, \hat{s}_h, a_{h}\}_{h\in[H]}, z)=\PP^{\omega, \nu}(\{s_h, \hat{s}_h, a_{h}\}_{h\in[H]}, z)$, where construction of $\omega$ does not depend on $\nu$, proving our lemma.
\end{proof}
\begin{lemma}\label{lem:det}
	The optimization problem of Equation \ref{eq:main_obj} always admits a deterministic solution.
\end{lemma}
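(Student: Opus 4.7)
The plan is to reduce the outer $\arg\max$ in Equation~\ref{eq:main_obj} to a standard POMDP best-response problem, which automatically admits a deterministic solution. Abbreviate $F(\pi):=\min_{\omega\in\Delta(\Tilde{\Pi}^k)}\max_{\nu\in\cV}\bigl(J(\pi,\nu)-\EE_{\pi'\sim\omega}J(\pi',\nu)\bigr)$ for the inner value.

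First I would simplify $F(\pi)$ via the minimax theorem. The integrand is linear in $\omega$ by construction, and linear in $\nu$ (identifying $\nu\in\cV=\Delta(\cV^{\text{det}})$ with its mixture weights makes $J(\cdot,\nu)$ a linear functional in those weights). Both $\Delta(\Tilde{\Pi}^k)$ and $\cV$ are convex, and compact under the paper's implicit finite state/action regime, so Sion's theorem (or von Neumann's theorem in the bilinear finite-dimensional case) lets me swap the inner $\min_\omega$ and $\max_\nu$. The resulting inner $\min_\omega\bigl(-\EE_{\pi'\sim\omega}J(\pi',\nu)\bigr)$ is a linear program over a simplex whose minimizer places all mass on whichever $\pi^{k'}$ attains $\max_{k'\in[k]}J(\pi^{k'},\nu)$, which yields
\[F(\pi)=\max_{\nu\in\cV}\Bigl[J(\pi,\nu)-\max_{k'\in[k]}J(\pi^{k'},\nu)\Bigr].\]

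Second, because $\max_{k'\in[k]}J(\pi^{k'},\nu)$ carries no $\pi$ dependence, I can pull the outer $\max_\pi$ through the $\max_\nu$ to get
\[\max_{\pi\in\Pi}F(\pi)=\max_{\nu\in\cV}\Bigl[\max_{\pi\in\Pi}J(\pi,\nu)-\max_{k'\in[k]}J(\pi^{k'},\nu)\Bigr].\]
Let $\nu^\star$ attain this outer maximum. Against the fixed (possibly randomized) attacker $\nu^\star$, the victim faces a POMDP whose observation at step $h$ is $\hat{s}_h$ and whose policy class is the full history-dependent $\Pi$; standard backward-induction arguments for finite-horizon POMDPs then produce a deterministic history-dependent best response $\pi^\star\in\Pi^{\text{det}}$ with $J(\pi^\star,\nu^\star)=\max_{\pi\in\Pi}J(\pi,\nu^\star)$. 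Plugging back gives $F(\pi^\star)\ge J(\pi^\star,\nu^\star)-\max_{k'}J(\pi^{k'},\nu^\star)=\max_{\pi\in\Pi}F(\pi)$, and the reverse inequality is trivial, so $\pi^\star$ is a deterministic maximizer of Equation~\ref{eq:main_obj}.

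The only substantive obstacle is justifying the minimax swap in the first step. Under the finite state/action regime this is automatic because the objective is bilinear on a product of finite-dimensional compact simplices; for genuinely infinite state/action spaces one would need to verify compactness and (semi-)continuity in a suitable topology, which I would isolate as a remark rather than let clutter the main argument. Once the swap is granted, the rest is an elementary rearrangement plus POMDP best-response sufficiency, and notably does not even need to invoke Lemma~\ref{lem:decomp}.
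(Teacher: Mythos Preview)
Your proof is correct and follows essentially the same route as the paper's: swap the inner $\min_\omega$ and $\max_\nu$ via strong duality (the paper simply asserts it from $\cV=\Delta(\cV^{\text{det}})$), fix the resulting optimal $\nu^\star$, and then take a deterministic best response against it. The only cosmetic difference is that the paper invokes Lemma~\ref{lem:decomp} for the deterministic best-response step whereas you appeal to POMDP backward induction---these are the same fact in different packaging, so your remark about not needing Lemma~\ref{lem:decomp} is true only in the sense that you have re-derived its relevant consequence.
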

\begin{proof}
	Note by the definition of $\cV:=\Delta(\cV^{\text{det}})$, indeed strong duality holds:
	\$\max_{\pi^{k+1}\in\Pi} \min_{\omega\in\Delta(\{\pi^1, \cdots, \pi^k\})}\max_{\nu\in\cV}\left(J(\pi^{k+1}, \nu)- \EE_{\pi^\prime\sim \omega}[J(\pi^\prime, \nu)]\right) \\= \max_{\pi^{k+1}\in\Pi} \max_{\nu\in\cV}\min_{\omega\in\Delta(\{\pi^1, \cdots, \pi^k\})}\left(J(\pi^{k+1}, \nu)- \EE_{\pi^\prime\sim \omega}[J(\pi^\prime, \nu)]\right).
	\$
Then for any $\pi^{k+1, \star}, \nu^\star\in\arg\max_{\pi^{k+1}\in\Pi, \nu\in\cV}\min_{\omega\in\Delta(\{\pi^1, \cdots, \pi^k\})}\left(J(\pi^{k+1}, \nu)- \EE_{\pi^\prime\sim \omega}[J(\pi^\prime, \nu)]\right)$, we denote $\pi^{\star}(\nu^\star):=\arg\max_{\pi^{k+1}\in\Pi}J(\pi^{k+1}, \nu^\star)$. Note that $\pi^\star(\nu)$ can be always selected to be a deterministic policy by Lemma \ref{lem:decomp}. Meanwhile, it is easy to see that since $\pi^{k+1, \star}, \nu^\star$ is an optimal solution, $\pi^{\star}(\nu^\star), \nu^\star$ is also an optimal solution, i.e.,
	\$\pi^{\star}(\nu^\star), \nu^\star\in\arg\max_{\pi^{k+1}\in\Pi, \nu\in\cV}\min_{\omega\in\Delta(\{\pi^1, \cdots, \pi^k\})}\left(J(\pi^{k+1}, \nu)- \EE_{\pi^\prime\sim \omega}[J(\pi^\prime, \nu)]\right),
	\$
	concluding our lemma.
\end{proof}
\begin{lemma}\label{lem:non-dom}
	Let $K\in\NN$ be the integer such that $f_{K+1}=0$ and $f_{K}> 0$. For any $2\le k \le K$, there does not exist some $\omega^\star\in\Delta(\Pi^{\text{det}}\setminus\{\pi^k\})$ such that $
	\max_{\nu\in\cV}\left(J(\pi^k, \nu)- J(\omega^\star, \nu)\right)\le 0.
	$
\end{lemma}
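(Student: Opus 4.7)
By contradiction, assume $\omega^\star = \sum_i p_i \delta_{\sigma_i} \in \Delta(\Pi^{\text{det}} \setminus \{\pi^k\})$ satisfies $J(\pi^k, \nu) \le J(\omega^\star, \nu) = \sum_i p_i J(\sigma_i, \nu)$ for every $\nu \in \cV$. My plan is to exhibit a second maximizer of (\ref{eq:main_obj}) at iteration $k$ that is distinct from $\pi^k$, contradicting the uniqueness hypothesis inherited from the second half of Theorem \ref{thm:main}.

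Write $V^*(\pi) := \min_{\omega \in \Delta(\Tilde{\Pi}^{k-1})} \max_\nu (J(\pi, \nu) - J(\omega, \nu))$, so that $V^*(\pi^k) = f_k > 0$ and, under the standing uniqueness assumption, $\pi^k$ is the unique maximizer of $V^*$ on $\Pi$. The first step extends $V^*$ to mixtures over $\Pi^{\text{det}}$ by linearity of $J(\cdot, \nu)$. The pointwise inequality $J(\pi^k, \nu) \le J(\omega^\star, \nu)$ together with the monotonicity of $\max_\nu(J(\cdot,\nu) - J(\omega,\nu))$ in its first argument yields $V^*(\omega^\star) \ge V^*(\pi^k) = f_k$. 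Invoking the converse of Lemma \ref{lem:decomp} (which holds via Kuhn's theorem for the victim's perfect-recall game), $\omega^\star$ is realization-equivalent to some behavioral policy $\pi^\star \in \Pi$ with $J(\pi^\star, \nu) = J(\omega^\star, \nu)$ for every $\nu$. Thus $V^*(\pi^\star) = f_k = \max_{\pi\in\Pi} V^*(\pi)$, and by uniqueness $\pi^\star = \pi^k$ as policies in $\Pi$.

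The second step translates this equality back to each atom $\sigma_i$. Because $\pi^k$ is deterministic (Lemma \ref{lem:det}), the Kuhn behavioral form of $\omega^\star$ at every history $\tau_h$ reachable under $\pi^k$ must be the Dirac at $\pi^k(\tau_h)$, which forces every $\sigma_i$ with $p_i>0$ consistent with $\tau_h$ to pick the action $\pi^k(\tau_h)$. A straightforward induction on $h$ (starting from the initial history, where all $\sigma_i$ are trivially consistent) then propagates this agreement to the entire tree of histories reachable under $\pi^k$ against any $\nu$. Consequently, for every $\nu \in \cV$ and every $i$ with $p_i > 0$, the trajectory distributions of $(\sigma_i, \nu)$ and $(\pi^k, \nu)$ coincide, so $J(\sigma_i, \nu) = J(\pi^k, \nu)$ for all $\nu$. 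This gives $V^*(\sigma_i) = V^*(\pi^k) = f_k$; since $\sigma_i \in \Pi^{\text{det}} \setminus \{\pi^k\}$ is a distinct policy in $\Pi$, this violates the uniqueness of the maximizer at iteration $k$ and closes the argument.

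The hard part will be the induction in the second step: one has to verify carefully that Kuhn's construction (valid because the victim has perfect recall of its own action-observation history) forces pointwise action agreement at every reachable history, and that this pointwise agreement is strong enough to equalize the full trajectory distribution under every attacker $\nu$, not just in expectation over the mixture. A secondary care point is the converse of Lemma \ref{lem:decomp} used to realize $\omega^\star$ as a single $\pi^\star \in \Pi$; the uniqueness assumption itself is essential throughout, since without it value-equivalent but formally distinct deterministic policies could coexist and the lemma would genuinely fail.
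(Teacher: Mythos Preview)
Your argument is correct but takes a longer and structurally different route than the paper's. The paper never invokes Kuhn's theorem or a reachability induction: after the monotonicity step
\[
\min_{\omega\in\Delta(\{\pi^1,\dots,\pi^{k-1}\})}\max_{\nu}\bigl(J(\pi^k,\nu)-J(\omega,\nu)\bigr)\;\le\;\min_{\omega}\max_{\nu}\bigl(J(\omega^\star,\nu)-J(\omega,\nu)\bigr),
\]
it simply applies a minimax swap (legitimate because $\cV=\Delta(\cV^{\text{det}})$ and $\Delta(\{\pi^1,\dots,\pi^{k-1}\})$ make the objective bilinear) together with the vertex-attainment reasoning behind Lemma~\ref{lem:det}, obtaining in one stroke
\[
f_k\;\le\;\max_{\pi\in\Pi^{\text{det}}\setminus\{\pi^k\}}\min_{\omega}\max_{\nu}\bigl(J(\pi,\nu)-J(\omega,\nu)\bigr),
\]
which contradicts uniqueness immediately. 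Your route---realize $\omega^\star$ as a behavioral policy $\pi^\star$, identify $\pi^\star=\pi^k$ by uniqueness, then use determinism of $\pi^k$ plus an induction on $h$ to force every atom $\sigma_i$ to match $\pi^k$ on all reachable histories---requires more machinery but buys a sharper structural conclusion: \emph{every} support point of $\omega^\star$ is value-equivalent to $\pi^k$, whereas the paper's duality argument only extracts \emph{some} pure policy in $\Pi^{\text{det}}\setminus\{\pi^k\}$ matching $f_k$. Both proofs ultimately rest on the same literal interpretation of uniqueness (a unique element of $\Pi$, not merely a unique value-equivalence class), so the caveat you raise in your final paragraph applies equally to the paper's argument.
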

\begin{proof}
	To begin with, it is easy to see that there does not exist $1\le k_1< k_2 \le K$ such that $\pi^{k_1} = \pi^{k_2}$. This is because it will lead to the fact that $f_{k_2}= 0$. Now suppose there exists some $\omega^\star\in\Delta(\Pi^{\text{det}}\setminus\{\pi^k\})$ such that
	\$
	\max_{\nu\in\cV}\left(J(\pi^k, \nu)- J(\omega^\star, \nu)\right)\le 0.
	\$
	This leads to the fact that 
	\$
	\min_{\omega\in\Delta(\{\pi^1, \cdots, \pi^{k-1}\})}\max_{\nu\in\cV}\left(J(\pi^k, \nu)- J(\omega, \nu)\right)\le \min_{\omega\in\Delta(\{\pi^1, \cdots, \pi^{k-1}\})}\max_{\nu\in\cV}\left(J(\omega^\star, \nu)- J(\omega, \nu)\right)\\
	\le \max_{\omega^\prime\in\Delta(\Pi^{\text{det}}\setminus\{\pi^k\})}\min_{\omega\in\Delta(\{\pi^1, \cdots, \pi^{k-1}\})}\max_{\nu\in\cV}\left(J(\omega^\prime, \nu)- J(\omega, \nu)\right)\\
	= \max_{\omega^\prime\in\Delta(\Pi^{\text{det}}\setminus\{\pi^k\})}\max_{\nu\in\cV}\min_{\omega\in\Delta(\{\pi^1, \cdots, \pi^{k-1}\})}\left(J(\omega^\prime, \nu)- J(\omega, \nu)\right)\\
	= \max_{\pi\in\Pi^{\text{det}}\setminus\{\pi^k\}}\max_{\nu\in\cV}\min_{\omega\in\Delta(\{\pi^1, \cdots, \pi^{k-1}\})}\left(J(\pi, \nu)- J(\omega, \nu)\right)\\
	= \max_{\pi\in\Pi^{\text{det}}\setminus\{\pi^k\}}\min_{\omega\in\Delta(\{\pi^1, \cdots, \pi^{k-1}\})}\max_{\nu\in\cV}\left(J(\pi, \nu)- J(\omega, \nu)\right),
	\$
	where the second last step comes from exactly the same as the proof of Lemma \ref{lem:det}. This contradicts the fact that $\pi^k$ is the unique optimal solution at iteration $k$.
\end{proof}

\subsection{Proof of Proposition \ref{prop:hardness}}
\begin{proof}
	We construct the MDP $\cM$ with the state space $\cS = \{s^{good}, s^{bad}, s^{dummy}\}$, action space $\cA = \{a^{good}, a^{bad}\}$. For the reward, we define $r_{h}(\cdot, \cdot) = 0$ for $h\in [H-1]$ and $r_H(s^{good}, \cdot) = 1$ and $r_H(s^{bad}, \cdot) = 0$. For the transition, we define $\TT(s^{good}\given s^{good}, a^{good}) = 1$, $\TT(s^{bad}\given s^{good}, a^{bad}) = 1$, $\TT(s^{bad}\given s^{bad}, \cdot) = 1$. The initial state is always $s^{good}$. We consider the attacker's policy $\nu$ such that $\nu(s^{dummy}\given \cdot) = 1$, which means the attacker deterministically perturbs the state to $s^{dummy}$. Therefore, for the victim to learn the optimal policy against such an attacker, it is equivalent to a multi-arm bandit problem with $2^{H}$ arms, for which the sample complexity of finding an approximately optimal policy must suffer from $\Omega(2^{H})$. Meanwhile, if such a desirable regret in the proposition is possible, it means we can learn an $\epsilon$-optimal policy in such kind of multi-arm bandit problem with sample complexity $\operatorname{poly}(S, A, H, \frac{1}{\epsilon})$, leading to the contradiction.
\end{proof}
\subsection{Proof of Proposition \ref{prop:tilde_regret}}
\begin{proof}
	For any $\nu^{1:T}$, we denote $\pi^\star\in\arg\max_{\pi\in\Pi}\frac{1}{T}\sum_{t=1}^{T}J(\pi, \nu^t)$. Then according to Definition \ref{def:regret}, we have 
	\$
	\operatorname{Regret}(T) &= \sum_{t=1}^{T}\left(J(\pi^\star, \nu^t) - J(\pi^t, \nu^t)\right) \\
	&= \left(\sum_{t=1}^T J(\pi^\star, \nu^t) - \max_{\pi\in\Tilde{\Pi}}\sum_{t=1}^{T} J(\pi, \nu^t)\right) + \max_{\pi\in\Tilde{\Pi}}\sum_{t=1}^{T}\left(J(\pi, \nu^t) - J(\pi^t, \nu^t)\right)\\
	&\le T\operatorname{Gap}(\Tilde{\Pi}, \Pi) + \Tilde{\operatorname{Regret}}(T),
	\$
where the last step comes from choosing $\nu = \operatorname{Unif}(\nu^{1:T})$ in Definition \ref{def:opt}.
\end{proof}
\subsection{Proof of Proposition \ref{prop:finite}}
\begin{proof}
	Note since in this proposition, we only care about the existence of a finite $\Tilde{\Pi}$, we do not care about its efficiency, i.e., how large the constructed $\Tilde{\Pi}$ is. Indeed, we can consider $\Pi^{\text{det}}$, which is a finite policy class with cardinality $|\Pi^{\text{det}}|=\cO((SA)^H)$. Now we verify the optimality of $\Pi^{\text{det}}$. For any $\nu\in\cV$, assume $\pi^\star\in\arg\max_{\pi\in\Pi} J(\pi, \nu)$. Then by Lemma \ref{lem:decomp}, we have there exists an $\omega^\star\in\Delta(\Pi^{\text{det}})$ such that  $J(\pi^\star, \nu) = \EE_{\pi^{\text{det}}\sim\omega^\star} J(\pi^{\text{det}}, \nu)$. Now we choose $\pi^{\text{det}, \star} = \argmax_{\pi^{\text{det}}\in\omega^\star}J(\pi^\text{det}, \nu)$. Then we have $J(\pi^{\text{det}, \star},\nu)\ge \EE_{\pi^{\text{det}}\sim\omega^\star} J(\pi^{\text{det}}, \nu) = J(\pi^\star, \nu)$. Therefore, we conclude that for any $\nu\in\cV$, we have $\max_{\pi\in\Pi}J(\pi, \nu) = \max_{\pi\in\Pi^{\text{det}}} J(\pi,\nu)$. Therefore, $\operatorname{Gap}(\Pi^{\text{det}}, \Pi) = 0$.

\end{proof}

\subsection{Proof of Theorem \ref{thm:main}}
\begin{proof}
	We begin with the proof for the part of the theorem. For $\delta>0$ and any $i_1, i_2, \cdots, i_{|\cV^{\text{det}}|}\in [\lceil\frac{H}{\delta}\rceil]$, we define the set $\cD(i_1, \cdots, i_{|\cV^{\text{det}}|}) = \{\pi\in \Pi\given (i_j-1)\delta\le J(\pi, \nu_j)< i_j\delta, \forall j\in [|\cV^{\text{det}}|]\}$. Then according to Pigeonhole principle, there must exist $K\in \NN$ and $k\in[K]$ such that $\pi^{K+1}\in \cD(i_1^\prime, \cdots, i_{|\cV^{\text{det}}|}^\prime)$ and $\pi^{k}\in \cD(i_1^\prime, \cdots, i_{|\cV^{\text{det}}|}^\prime)$ for some $i_1^\prime, i_2^\prime, \cdots, i_{|\cV^{\text{det}}|}^\prime\in [\lceil\frac{H}{\delta}\rceil]$. Therefore, we conclude that $|J(\pi^{K+1}, \nu)-J(\pi^{k}, \nu)|\le \delta$ for any $\nu\in\cV^{\text{det}}$, and correspondingly for any $\nu\in\cV$. This lead to that $f_{K+1}\le \delta$. Now we are ready to show that $\operatorname{Gap}(\Tilde{\Pi}^{K+1}, \Pi)\le \delta$. For any $\nu\in\cV$, we define $\pi^\star\in\arg\max_{\pi\in\Pi}J(\pi, \nu)$. Meanwhile, there exists $\omega\in\Delta(\Tilde{\Pi}^{K+1})$ such that $J(\pi^\star, \nu)\le J(\omega, \nu) + \delta$ since $f_{K+1}\le \delta$. This implies that $J(\pi^\star, \nu)-\max_{\pi^\prime\in\Tilde{\Pi}^{K+1}} J(\pi^\prime, \nu) \le \delta$, proving $\operatorname{Gap}(\Tilde{\Pi}^{K+1}, \Pi)\le \delta$.
	
	Now we prove the second part of our theorem. Suppose $K^\star< K^{\text{fin}}-1$, we denote the corresponding optimal policy set as $\Pi^{\star} = \{\hat{\pi}^1, \cdots, \hat{\pi}^{K^\star}\}$. By Lemma \ref{lem:decomp}, for any $k\in[K^\star]$, there exists a $\omega^k\in\Delta(\Pi^{\text{det}})$ such that
	\$
	J(\hat{\pi}^k, \nu) = \sum_{j=1}^{|\Pi^{\text{det}}|}\omega^k(\pi^j)J(\pi^j, \nu),
	\$
	for any $\nu\in\cV$, where we have abused our notation for $\{\pi^{2}, \cdots, \pi^{K^{\text{fin}}}\}$ to denote deterministic policies, which are policies discovered by our algorithm since according to Lemma \ref{lem:det}, those policies are different and deterministic. Now since $K^\star< K^{\text{fin}}-1$, there exists some $2\le j\le K^{\text{fin}}$ such that $\omega^{k}(\pi^j)\le \frac{2}{3}$ for any $k\in[K^\star]$. Now we denote $\epsilon=\min_{\omega\in\Delta(\Pi^{\text{det}}\setminus\{\pi^j\})}\max_{\nu\in\cV}\left(J(\pi^j, \nu)- J(\omega, \nu)\right)> 0$ by Lemma \ref{lem:non-dom}, and let $\nu^\star\in\arg\max_{\nu\in\cV}\min_{\omega\in\Delta(\Pi^{\text{det}}\setminus\{\pi^j\})}\left(J(\pi^j, \nu)- J(\omega, \nu)\right)$. Therefore, it holds that $J(\pi^j, \nu^\star)\ge J(\pi, \nu^\star) + \epsilon$ for any $\pi\in\Delta(\Pi^{\text{det}}\setminus\{\pi^j\})$. Then we are ready to examine $\operatorname{Gap}(\Pi^\star, \Pi)$ as follows:
	\$
	\operatorname{Gap}(\Pi^\star, \Pi)\ge \max_{\pi\in\Pi}J(\pi, \nu^\star)-\max_{\pi^\prime\in\Pi^\star}J(\pi^\prime, \nu^\star)\ge J(\pi^j, \nu^\star)-\max_{\pi^\prime\in\Pi^\star}J(\pi^\prime, \nu^\star)\ge \frac{\epsilon}{3}>0,
	\$
	contradicting that $\operatorname{Gap}(\Pi^\star, \Pi) = 0$.
\end{proof}
\subsection{Proof of Theorem \ref{prop:hardness_2}}

\begin{proof}
	Let us firstly consider a one-step MDP with state space $\cS = \{s_1, s_2\}$, action space $\cA = \{a_{1}, a_2\}$, reward function $r(s_1, a_1) = r(s_2, a_2) = 1$ otherwise $0$, and $\mu_1(s_1) = \mu_1(s_2) = \frac{1}{2}$. Now assume the attacker can only choose two policies $\nu^{good}$ such that $\nu^{good}(s_1)=s_1, \nu^{good}(s_2)=s_2$, and $\nu^{bad}$ such that $\nu^{bad}(s_1) = s_2, \nu^{bad}(s_2) = s_1$. Let us consider four {\it basis} victim policies $\{\pi^1, \cdots, \pi^4\}$, which select the action $(a_1, a_2)$, $(a_1, a_1)$, $(a_2, a_1)$, $(a_2, a_2)$ respectively for states $s_1$ and $s_2$. Then it holds that for any policy $\pi\in\Pi$, there exists $\alpha^j\in [0, 1]$ and $\sum_j \alpha^j = 1$ such that $J(\pi, \cdot) = \sum_{j=1}^4 \alpha^j J(\pi^j, \cdot)$ by Lemma \ref{lem:decomp}. Now we have either $\alpha^1\le \frac{1}{2}$ or $\alpha^3\le \frac{1}{2}$. Let us say $\alpha^1\le \frac{1}{2}$ and the case for $\alpha^3\le \frac{1}{2}$ can be proved similarly. Consider the case where the attacker takes the policy $\nu^{good}$. Then we have $J(\pi^1, \nu^{good})-J(\pi, \nu^{good})\ge 1 - (\frac{1}{2} + \frac{1}{2}\times \frac{1}{2}) = \frac{1}{4}$. Therefore, we conclude that if $|\Tilde{\Pi}|< 2$, we must have $\operatorname{Gap}(\Tilde{\Pi}, \Pi)\ge \frac{1}{4}$. 
	
	Now let us extend it to the MDP with $H$ steps, where in the previous MDP, at each time step, the current state transits to the next two states with uniform probability regardless of the action taken. We consider the attacker's policies, where at each time step it uses the policy $\nu^{good}$ or $\nu^{bad}$, resulting in totally $2^H$ policies, $\{\nu^{1}, \cdots, \nu^{2^H}\}$. Similarly, we can define basis policies, which at each time step selects the policy from $\{\pi^{1}, \cdots, \pi^4\}$, ignoring the history information except the current observation (perturbed state). This results in a total of $4^{H}$ policies, for which we denote $\{\bar{\pi}^1, \cdots, \bar{\pi}^{4^H}\}$. Due to the transition dynamics we have defined, for any $\pi\in \Pi$, there exists some $\alpha^j(\pi) \in [0, 1]$ and $\sum_j \alpha^j(\pi) = 1$ such that $J(\pi, \cdot) = \sum_{j=1}^{4^H} \alpha^j(\pi) J(\bar{\pi}^j, \cdot)$. W.L.O.G, we say policies $\bar{\pi}^{1:2^H}$ as all the policies only selecting policies from $\{\pi^1, \pi^3\}$ at each time step. Now consider any $\Tilde{\Pi} = \{\Tilde{\pi}^1, \Tilde{\pi}^2, \cdots, \Tilde{\pi}^K\}$ with $K< 2^{H}$. Then there must be some $m\in [2^H]$ such that $\alpha^m(\Tilde{\pi}^k)\le \frac{1}{2}$ for any $k\in [K]$. Let us say $\bar{\pi}^m$ is the policy always choosing $\pi^1$ at all time steps and correspondingly denote $\nu^\star$ as the policy always choosing $\nu^{good}$ at each step. Therefore, we have $J(\bar{\pi}^m, \nu^\star) - J(\Tilde{\pi}^k, \nu^\star)\ge H - (H - 1 + \frac{1}{2} + \frac{1}{2}\times \frac{1}{2}) = \frac{1}{4}$ for any $k\in [K]$. This concludes that $\operatorname{Gap}(\Tilde{\Pi}, \Pi)\ge \frac{1}{4}$.
\end{proof}

\section{Example and detailed explanations of iterative discovery}\label{sec:expl}

\begin{figure}[!ht]
    \centering
\includegraphics[width=\textwidth]{./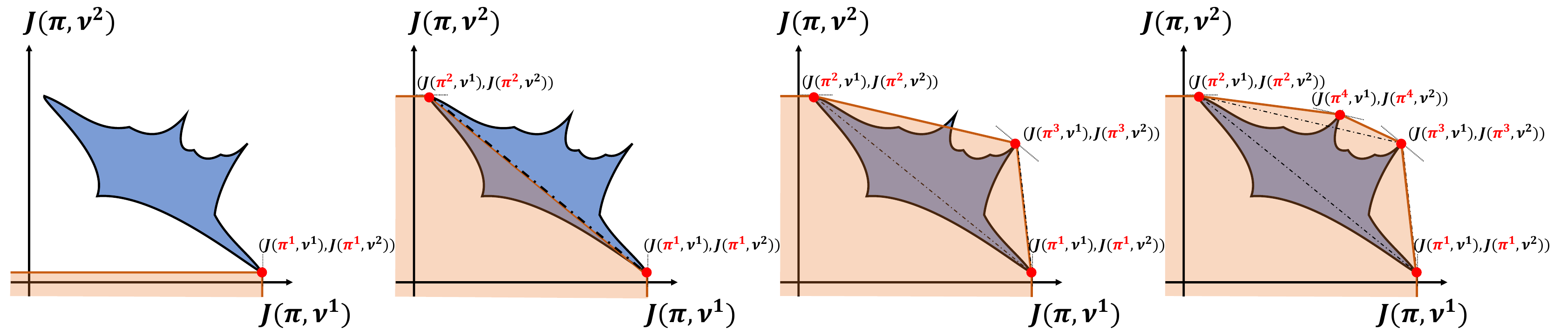}
    \caption{Iteration discovery of non-dominated policies in two dimensions.}
    \label{fig:geometry}
\end{figure}

Here we explain how our algorithm discovers the four policies $\pi^{1:4}$ in Figure \ref{fig:geometry}, i.e., the left part of Figure \ref{fig:formulation}. For simplicity, we consider there are only two pure attackers $\nu^1$ and $\nu^2$, and thus $\cV = \Delta(\{\nu^1, \nu^2\})$. 

\textbf{For the first iteration,} since there are no policies already discovered, the optimization problem we need to solve is $\pi^1\in\arg\max_{\pi\in\Pi}\max_{\nu\in\cV}J(\pi, \nu) = \arg\max_{\pi\in\Pi}\max\{J(\pi, \nu^1), J(\pi, \nu^2)\}$. By comparing $\nu^1$ and $\nu^2$, we can see the discovered policy is the rightmost one in Figure \ref{fig:geometry}.

\textbf{For the second iteration,} given $\Tilde{\Pi} = \{\pi^1\}$ already discovered, the optimization problem we need to solve is $\pi^2\in\arg\max_{\pi\in\Pi}\max_{\nu\in\cV}\left(J(\pi, \nu)-J(\pi^1, \nu)\right)$. Since $\pi^1\in\arg\max_{\pi\in\Pi}J(\pi, \nu^1)$, we have $\pi^2\in\arg\max_{\pi\in\Pi}\max_{\nu\in\cV}\left(J(\pi, \nu)-J(\pi^1, \nu)\right) =\arg\max_{\pi\in\Pi}\left(J(\pi, \nu^2)-J(\pi^1, \nu^2)\right) = \arg\max_{\pi\in\Pi}J(\pi, \nu^2)$. Therefore, $\pi^2$ is the uppermost one in Figure \ref{fig:geometry}.

\textbf{For the third iteration,} given $\Tilde{\Pi} = \{\pi^1, \pi^2\}$ already discovered, the optimization problem we need to solve is $\pi^3\in\arg\max_{\pi\in\Pi}\min_{\omega\in\Delta(\{\pi^1, \pi^2\})}\max_{\nu\in\cV}\left(J(\pi, \nu)-J(\pi^1, \nu)\right)$. It is easy to see that in Figure \ref{fig:geometry}, the optimal solution should be the one that's farthest from the line segment between $\pi^1$ and $\pi^2$. To see the reason, we can find that the optimal $\omega$ will be the point on the line segment between $\pi^1$ and $\pi^2$ such that $J(\pi^3, \nu^1)-J(\omega, \nu^1) = (\pi^3, \nu^2)-J(\omega, \nu^2)$.

\textbf{For the fourth iteration,} given $\Tilde{\Pi} = \{\pi^1, \pi^2, \pi^3\}$ already discovered, the optimization problem we need to solve is $\pi^4\in\arg\max_{\pi\in\Pi}\min_{\omega\in\Delta(\{\pi^1, \pi^2, \pi^3\})}\max_{\nu\in\cV}\left(J(\pi, \nu)-J(\pi^1, \nu)\right)$. From Figure \ref{fig:geometry}, the optimization for $\omega$ will not put mass on policy $\pi^1$. Thus, what we need to solve is $\pi^4\in\arg\max_{\pi\in\Pi}\min_{\omega\in\Delta(\{\pi^2, \pi^3\})}\max_{\nu\in\cV}\left(J(\pi, \nu)-J(\pi^1, \nu)\right)$. Under the same reason as the third iteration, $\pi^4$ will be the one that is farthest to the line segment between $\pi^2$ and $\pi^3$.

Finally, it is worth mentioning that the analysis above holds only specifically (and roughly) for the reward landscape of Figure \ref{fig:geometry}, for which we have simplified significantly to convey the intuitions. Actual problems we aim to deal with can be much more complex.

\section{Details of experimental settings}

In this section, we provide details of implementation and training hyperparameters for MuJoCo experiments. 
All experiments are conducted on NVIDIA GeForce RTX 2080 Ti GPU.

\textbf{Implementation details.} For the network structure, we employ a single-layer LSTM with 64 hidden neurons in Ant and Halfcheetah, and the original fully connected MLP structure in the other two environments. Both the victims and the attackers are trained with independent value and policy optimizers by PPO.

\textbf{Victim training.} For the baseline methods, we directly utilize the well-trained models for ATLA-PPO \citep{zhang2021robust}, PA-ATLA-PPO \citep{sun2021strongest}, and WocaR-PPO \citep{liang2022efficient} provided by the authors.

For the iterative discovery in Algorithm \ref{alg:offline_train}, we employ PA-AD to update attack models $\nu^t$ and PPO to update the victim. For the first policy $\pi^1$ in $\Tilde{\Pi}$, we train for 5 million steps (2441 iterations) in Ant and 2.5 million steps (1220 iterations) in the other three environments. For subsequent policies, we use the previously trained policy as the initialization and train for half of the steps of the first iteration to accelerate training.

Due to the high variance in RL training, the reported results are reported as the median performance from 21 agents trained with the same set of hyperparameters for reproducibility.

\textbf{Attack training.} The reported results under RS attack are from 30 trained robust value functions.

For evasion attacks such as SA-RL and PA-AD, we conduct a grid search of the optimal hyperparameters (including learning rates for the policy network and the adversary policy network, the ratio clip for PPO, and the entropy regularization) for each victim training method. We train for 10 million steps (4882 iterations) in Ant and 5 million steps (2441 iterations) in the other three environments. The reported results are from the strongest attack among all 108 trained adversaries.

\section{Additional experimental results}

\subsection{Robustness against various dynamic attacks}

In this section, we present the supplementary results demonstrating the robustness of our methods against various dynamic attacks.
Two modes of dynamic attacks, periodic attacks, and probabilistic switching attacks, have been briefly introduced in \S\ref{subsec:dynamic_attacks}. 
Here we show more details and results corresponding to these two dynamic attack modes.

\textbf{Periodic attacks.}
We adjust the attack period $T$ from $1000$ to $100$ and examine the performance of our methods alongside two baselines.
Additionally, we use a non-fixed period where $T$ alternates between $500$ and $1000$.

\begin{figure}[!ht]
    \vspace{-0.3em}
     \centering
     \begin{subfigure}[b]{0.32\textwidth}
     \vspace{-0.2em}
     \centering
         \includegraphics[width=\linewidth]{./figure/results/periodic/periodic_2_plot.pdf}
     \end{subfigure}
     \centering
     \begin{subfigure}{0.32\textwidth}
     \vspace{-0.2em}
     \centering
         \includegraphics[width=\linewidth]{./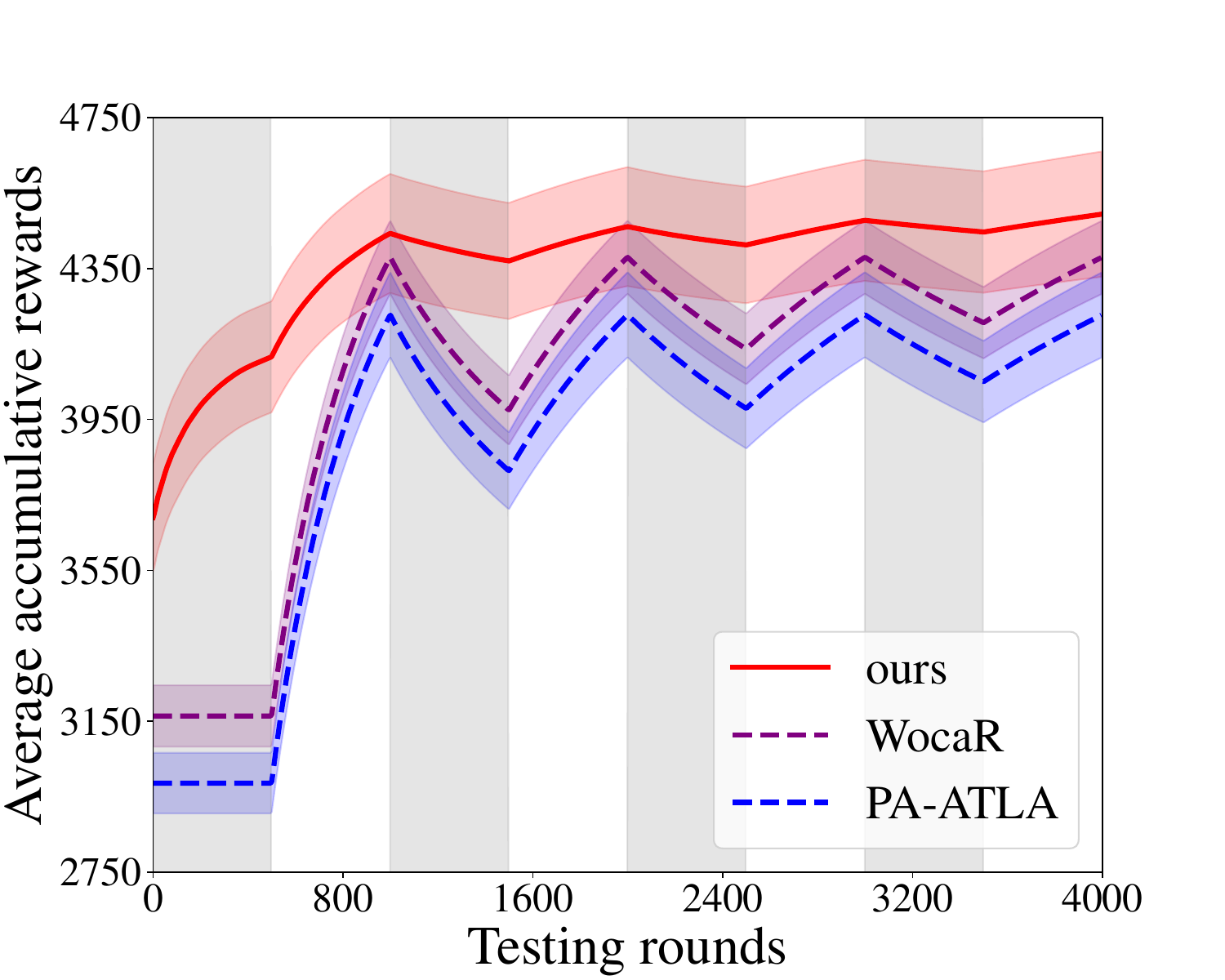}
     \end{subfigure}
     \centering
     \begin{subfigure}[b]{0.32\textwidth}
     \vspace{-0.2em}
     \centering
         \includegraphics[width=\linewidth]{./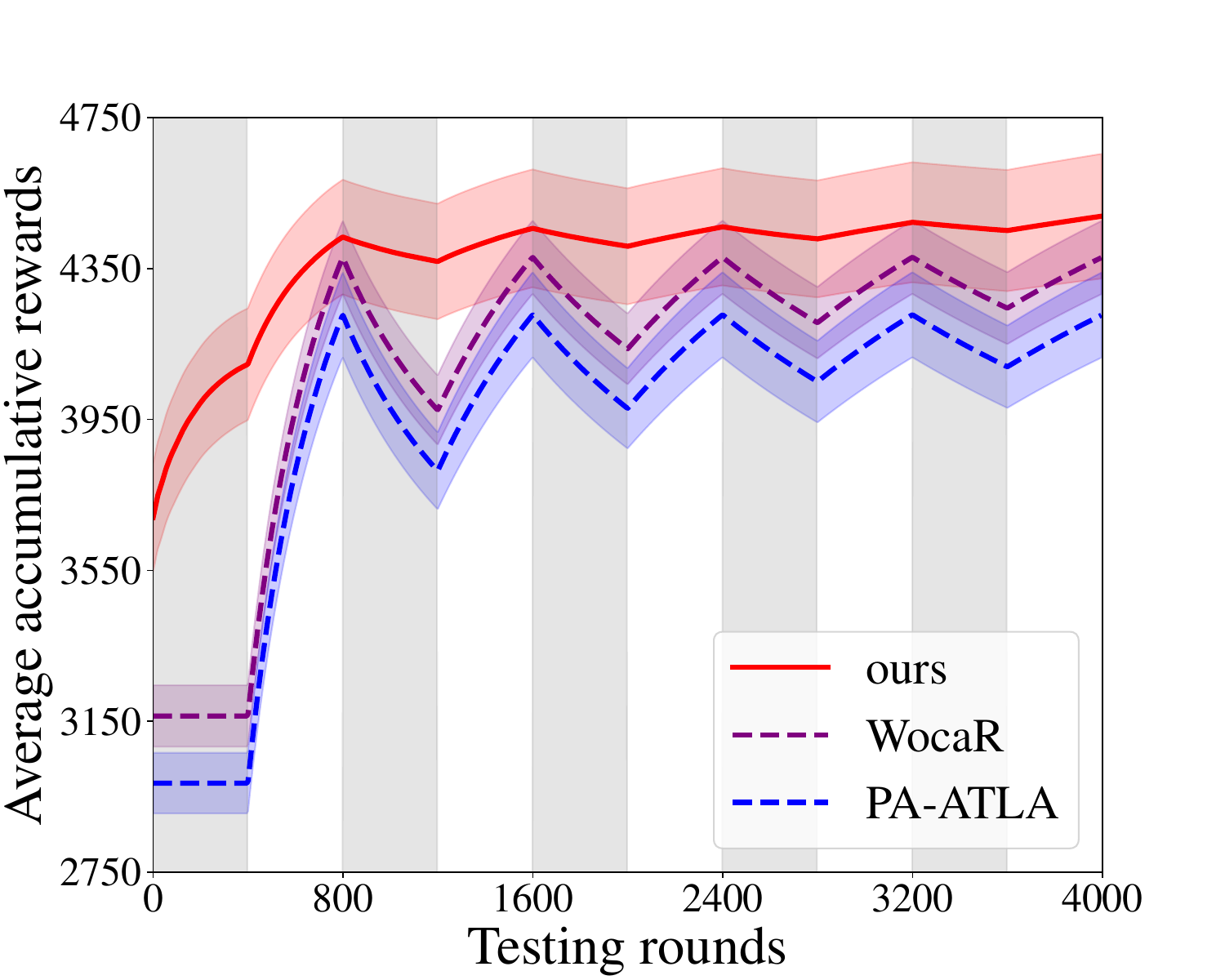}
     \end{subfigure}
     \centering
     \begin{subfigure}[b]{0.32\textwidth}
     \vspace{-0.2em}
     \centering
         \includegraphics[width=\linewidth]{./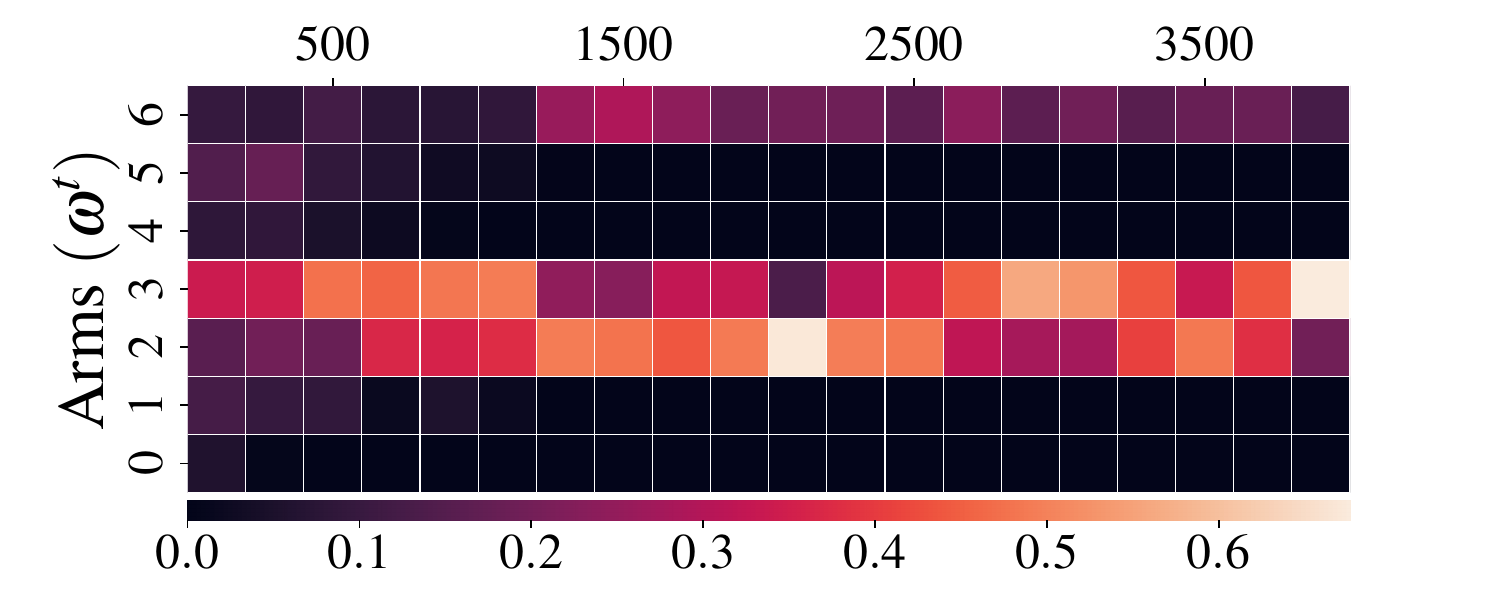}
         \caption{$T=1000$}
     \end{subfigure}
     \centering
     \begin{subfigure}{0.32\textwidth}
     \vspace{-0.2em}
     \centering
         \includegraphics[width=\linewidth]{./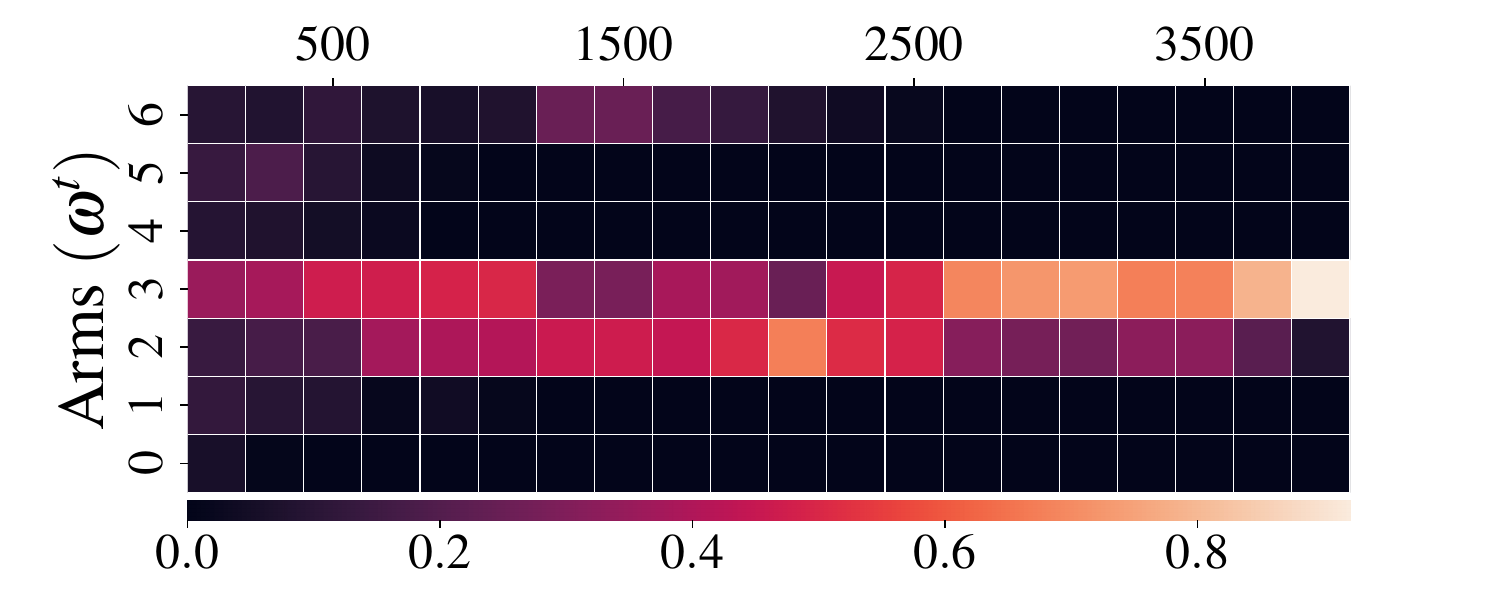}
         \caption{$T=500$}
     \end{subfigure}
     \centering
     \begin{subfigure}[b]{0.32\textwidth}
     \vspace{-0.2em}
     \centering
         \includegraphics[width=\linewidth]{./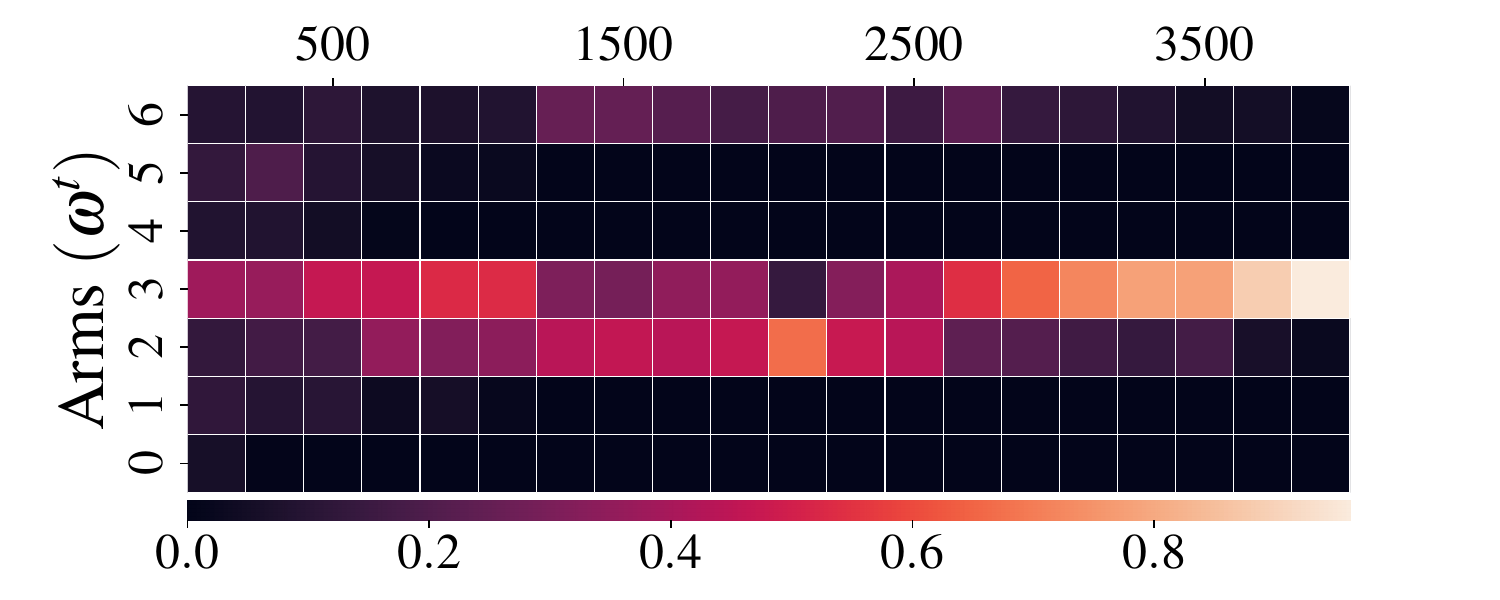}
         \caption{$T=400$}
     \end{subfigure}
        \centering
     \begin{subfigure}{0.32\textwidth}
     \vspace{-0.2em}
     \centering
         \includegraphics[width=\linewidth]{./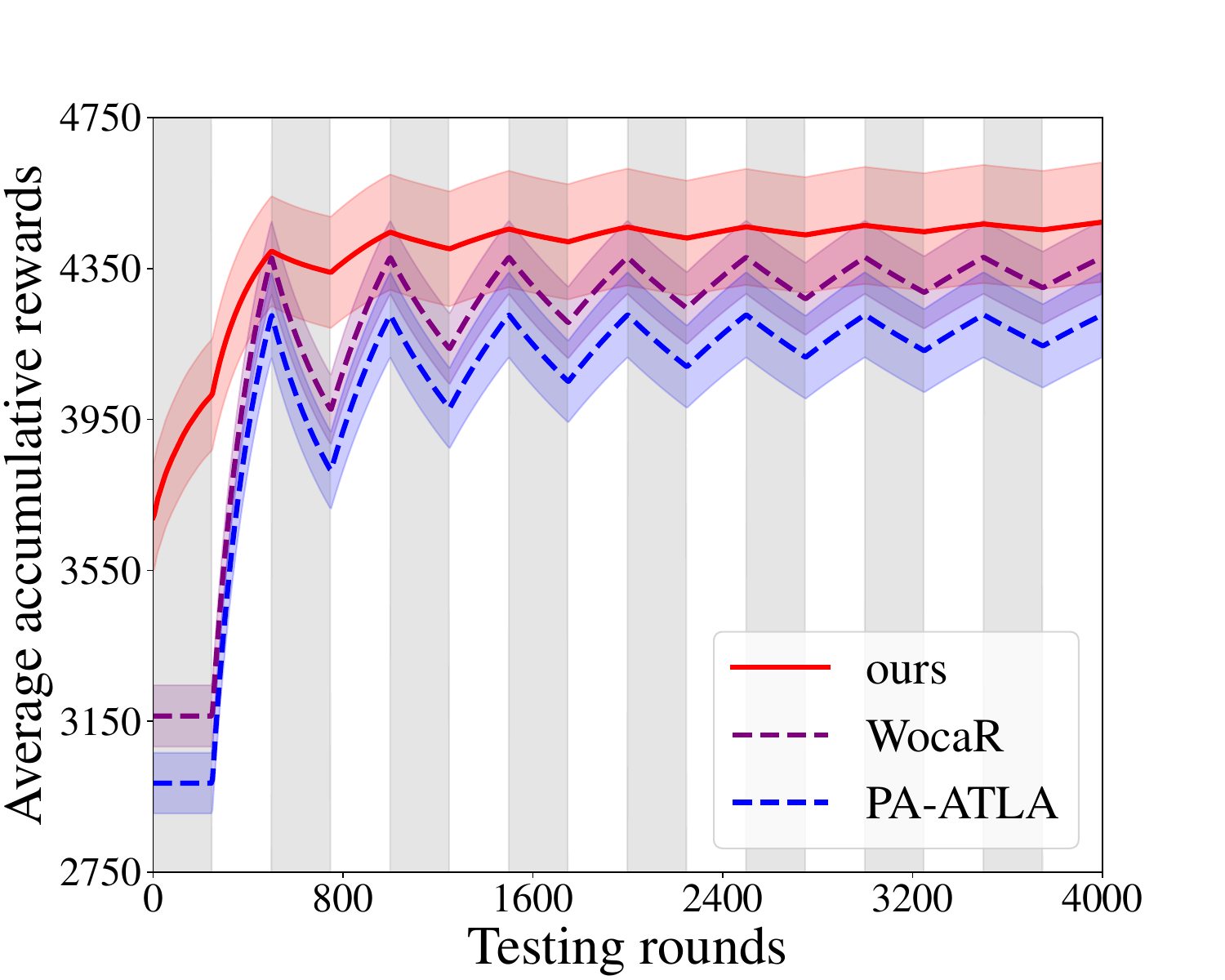}
     \end{subfigure}
     \centering
     \begin{subfigure}[b]{0.32\textwidth}
     \vspace{-0.2em}
     \centering
         \includegraphics[width=\linewidth]{./figure/results/periodic/periodic_10_plot.pdf}
     \end{subfigure}
     \centering
     \begin{subfigure}{0.32\textwidth}
     \vspace{-0.2em}
     \centering
         \includegraphics[width=\linewidth]{./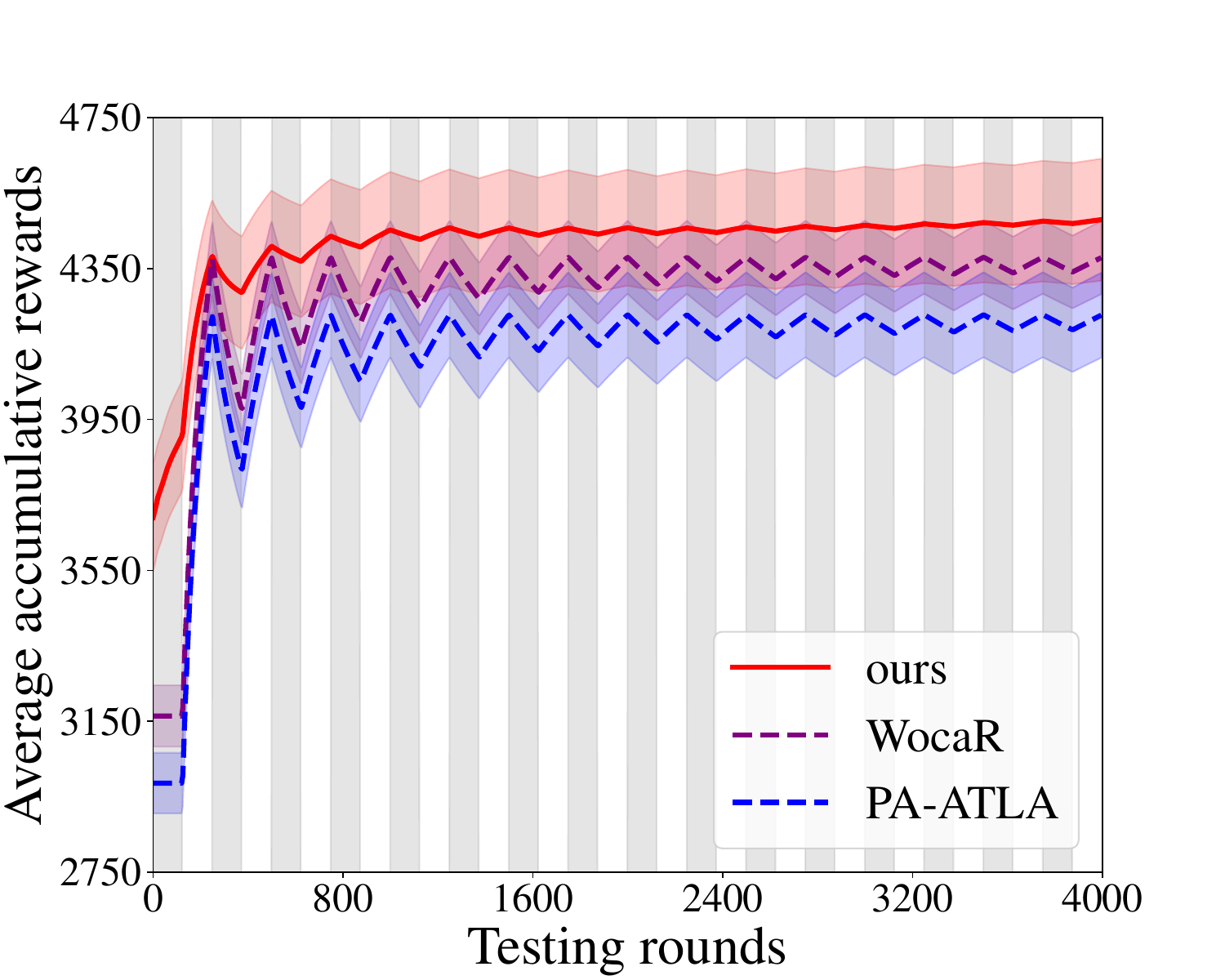}
     \end{subfigure}
     \centering
     \begin{subfigure}{0.32\textwidth}
     \vspace{-0.2em}
     \centering
         \includegraphics[width=\linewidth]{./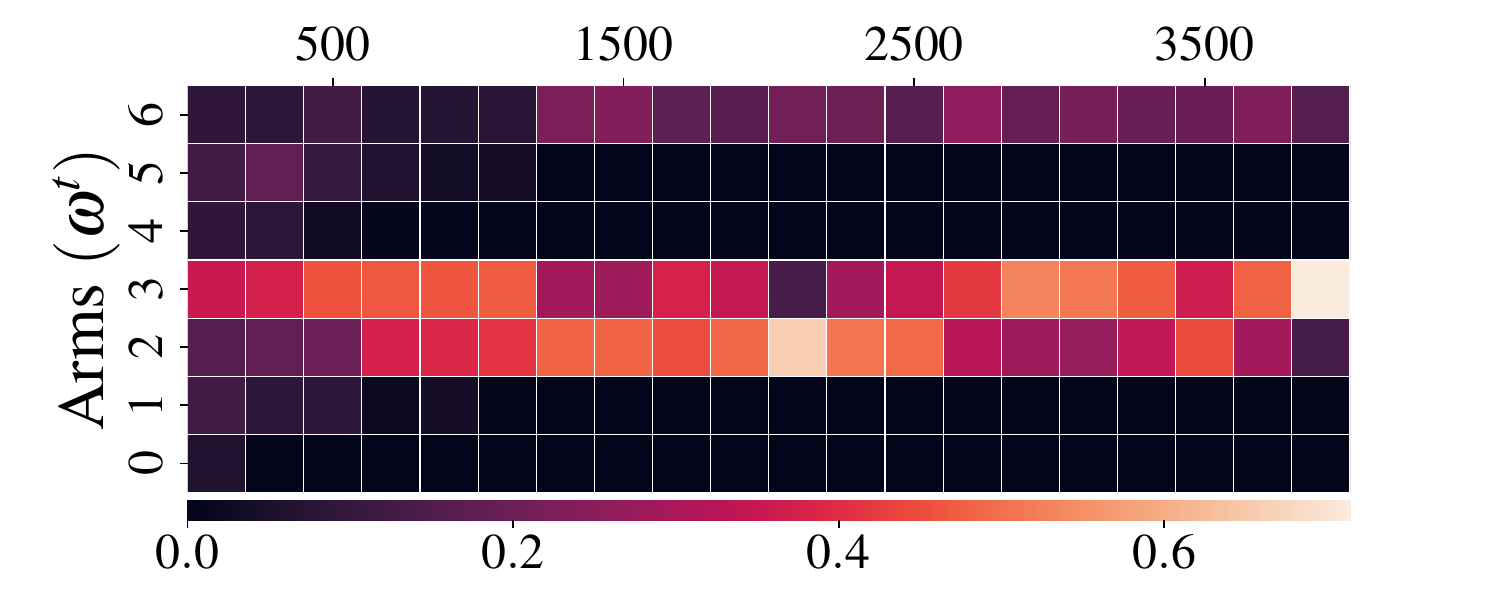}
         \caption{$T=250$}
     \end{subfigure}
     \centering
     \begin{subfigure}[b]{0.32\textwidth}
     \vspace{-0.2em}
     \centering
         \includegraphics[width=\linewidth]{./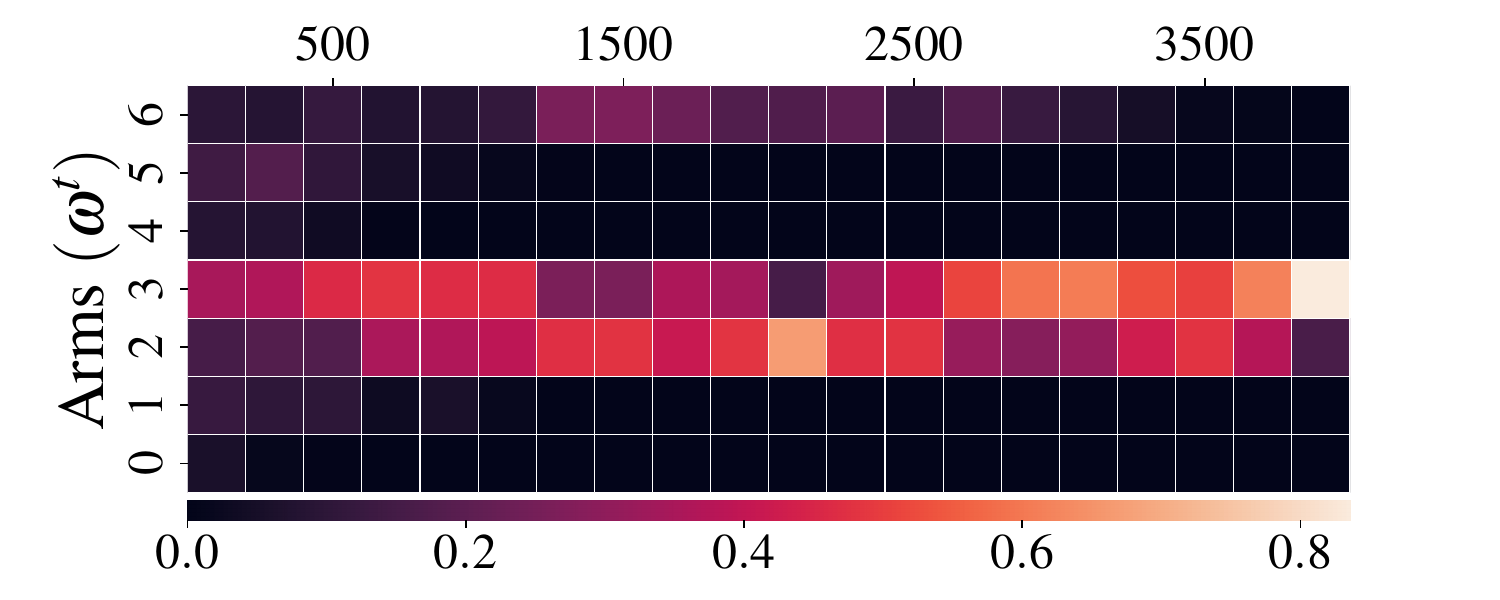}
         \caption{$T=200$}
     \end{subfigure}
     \centering
     \begin{subfigure}{0.32\textwidth}
     \vspace{-0.2em}
     \centering
         \includegraphics[width=\linewidth]{./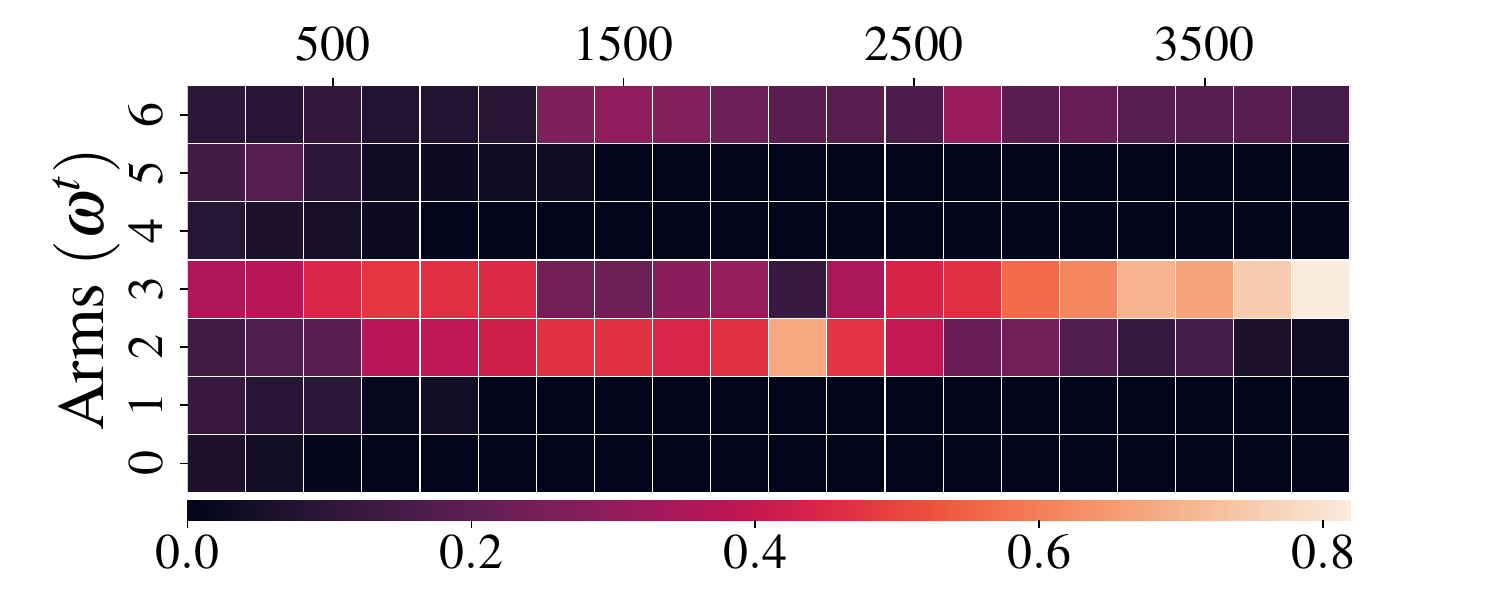}
         \caption{$T=125$}
     \end{subfigure}
     \centering
     \begin{subfigure}[b]{0.32\textwidth}
     \vspace{-0.2em}
     \centering
         \includegraphics[width=\linewidth]{./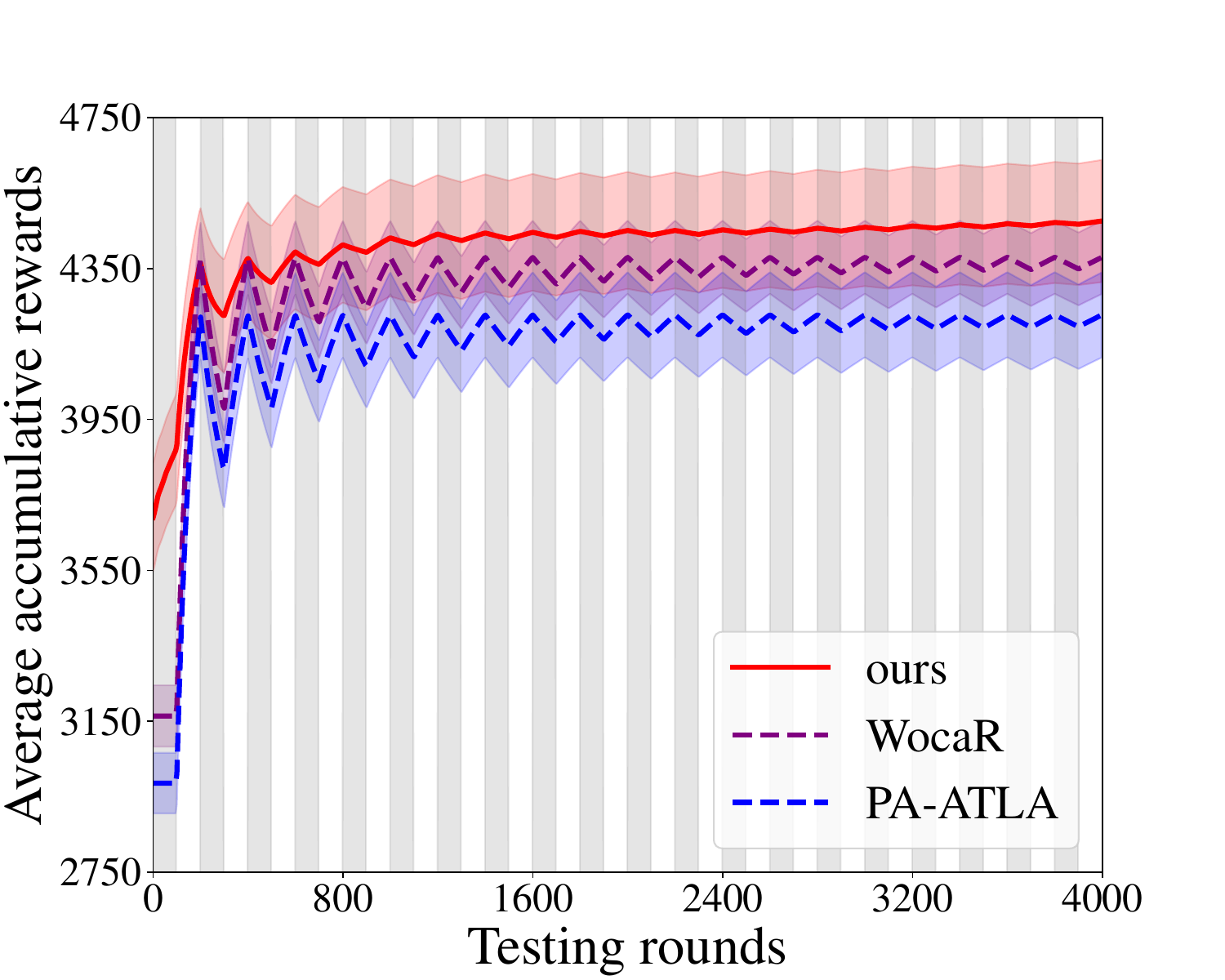}
     \end{subfigure}
     \centering
     \begin{subfigure}{0.32\textwidth}
     \vspace{-0.2em}
     \centering
         \includegraphics[width=\linewidth]{./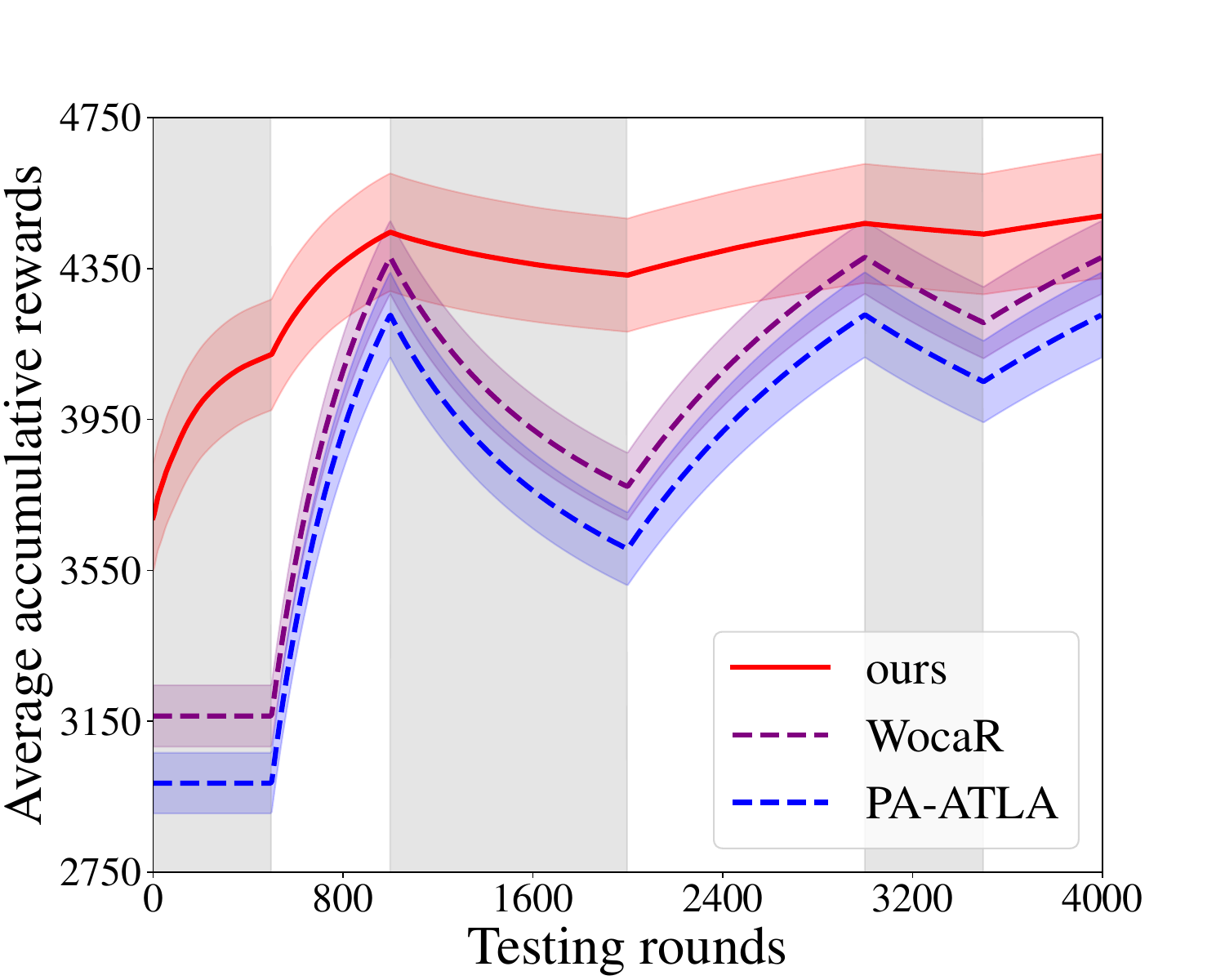}
     \end{subfigure}
     \\
     \centering
     \begin{subfigure}[b]{0.32\textwidth}
     \vspace{-0.2em}
     \centering
         \includegraphics[width=\linewidth]{./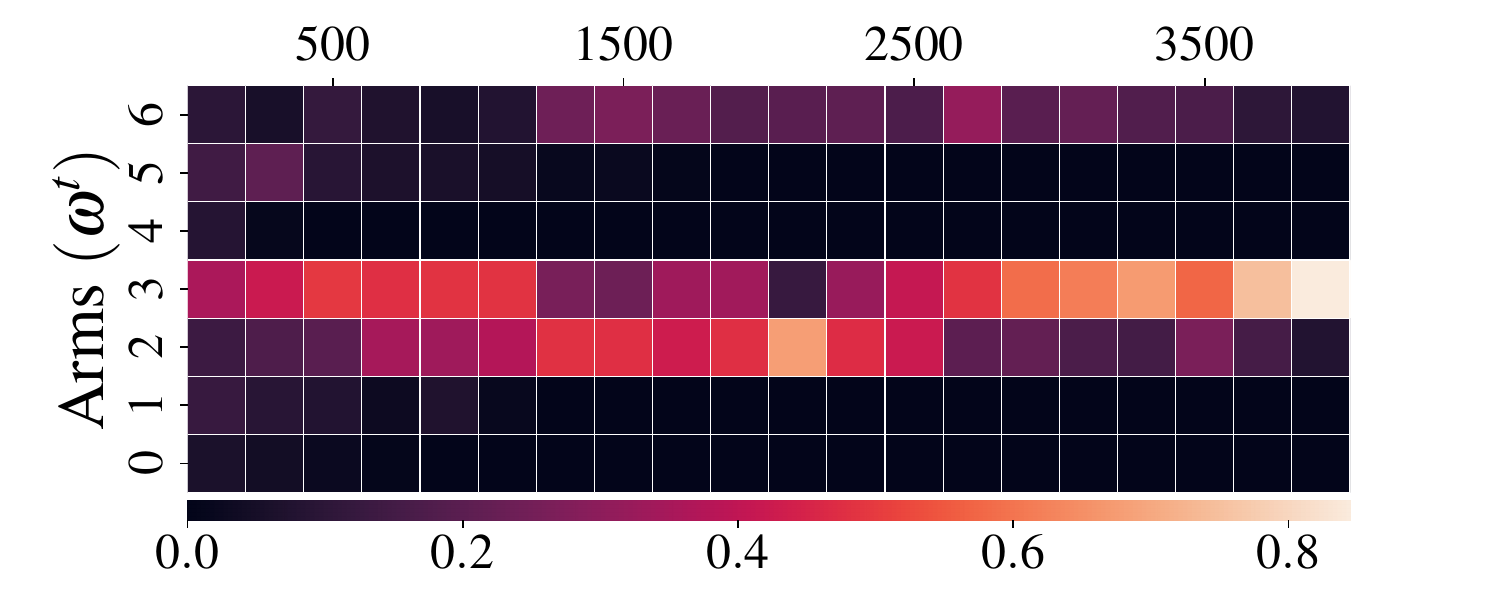}
         \caption{$T=100$}
     \end{subfigure}
     \centering
     \begin{subfigure}{0.32\textwidth}
     \vspace{-0.2em}
     \centering
         \includegraphics[width=\linewidth]{./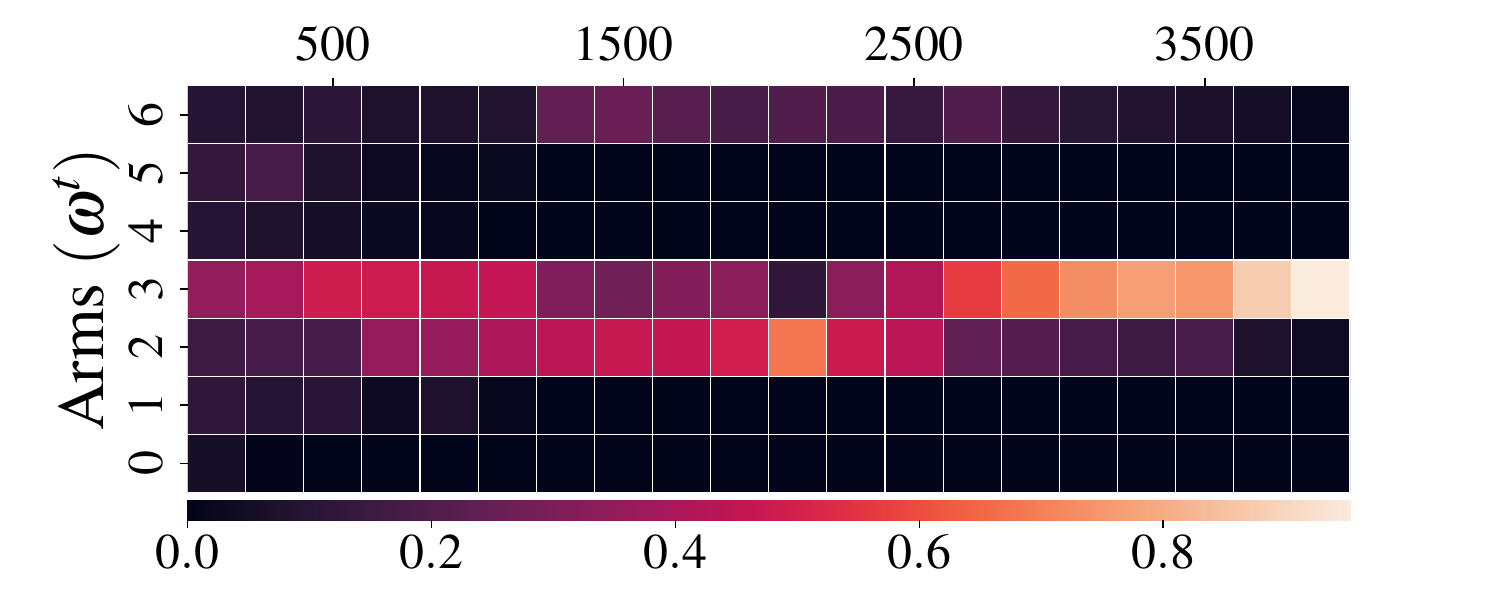}
         \caption{Alternatively varying $T$}
     \end{subfigure}
     \caption{Time averaged accumulative rewards during online adaptation against periodic attacks on Ant. The shaded area showed in the indicates PA-AD attacks are active while the unshaded area indicates no attacks. The evolution of corresponding weights $\omega^t$ is shown in the heatmap where the brighter color means the higher value.}
     \label{fig:all_periodic}
\end{figure}

The average accumulative rewards and evolution of policy weights $\omega^t$ are shown in plots and heat maps in \S\ref{fig:all_periodic}. 
Our observations are as follows:
\textbf{(1)} Regardless of the duration of the periods, our methods consistently achieve higher average accumulative rewards than the two baseline methods. This underscores the efficacy of online adaptation in Algorithm \ref{alg:online_adapt}.
\textbf{(2)} The values of $\omega^t$ exhibit noticeable shifts during each period, highlighting the online adaptation process.
\textbf{(3)} Even when $T$ alternates, our methods maintain their superiority over the baselines. The evolution of $\omega^t$ shows that our methods can effectively perceive the transition between two periods.

\textbf{Probabilistic attack.}
We adjust the switching probability $p$ from $0.2$ to $0.8$. 
A higher value of $p$ signifies more frequent switching.
We anticipate that it will be more challenging for the online adaptation of the agent.
We keep the interval between two potential switching points as 50 rounds.

\begin{figure}[!ht]
    \vspace{-0.3em}
    \centering
     \begin{subfigure}{0.32\textwidth}
     \vspace{-0.2em}
     \centering
         \includegraphics[width=\linewidth]{./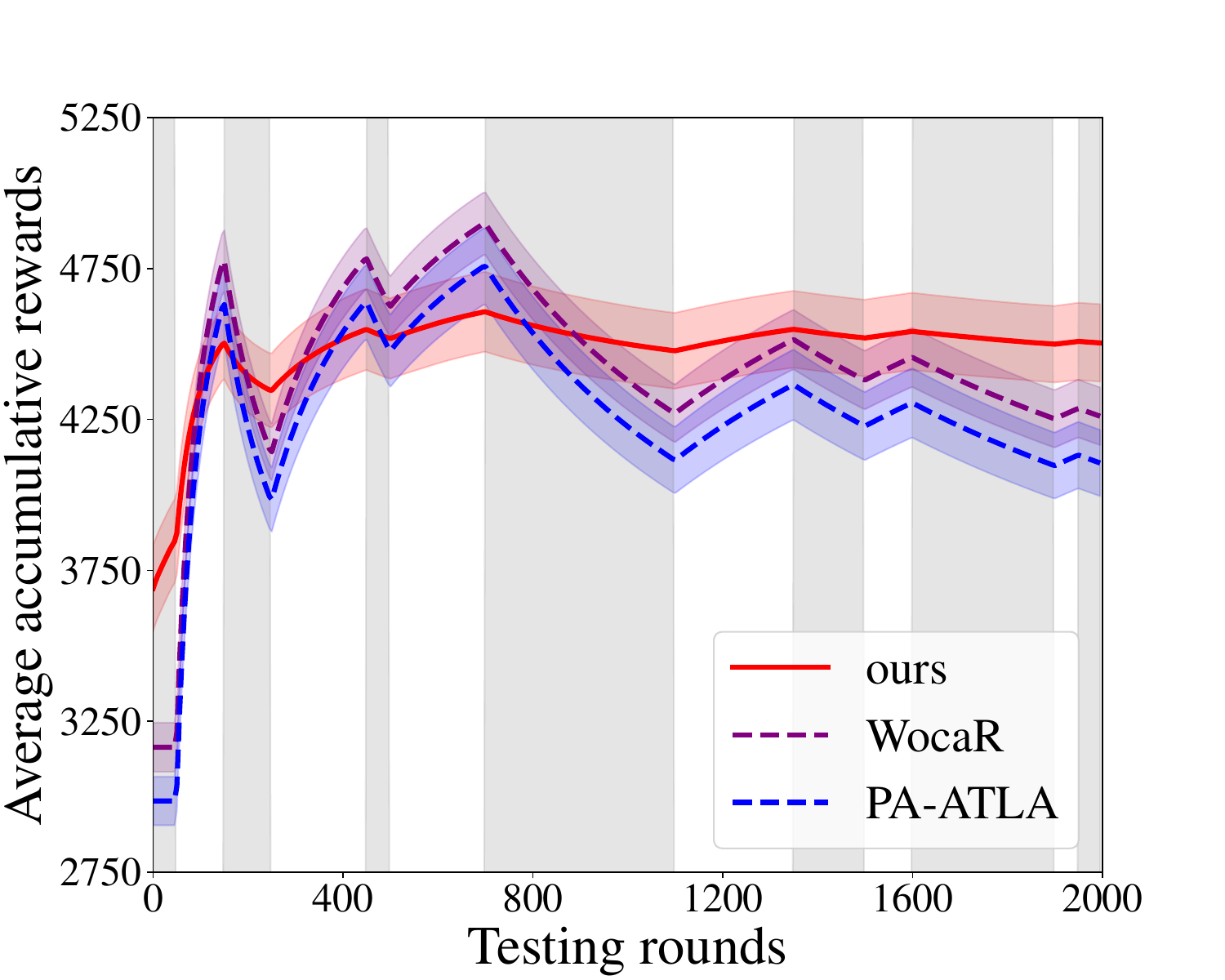}
     \end{subfigure}
     \centering
     \begin{subfigure}[b]{0.32\textwidth}
     \vspace{-0.2em}
     \centering
         \includegraphics[width=\linewidth]{./figure/results/prob_switch/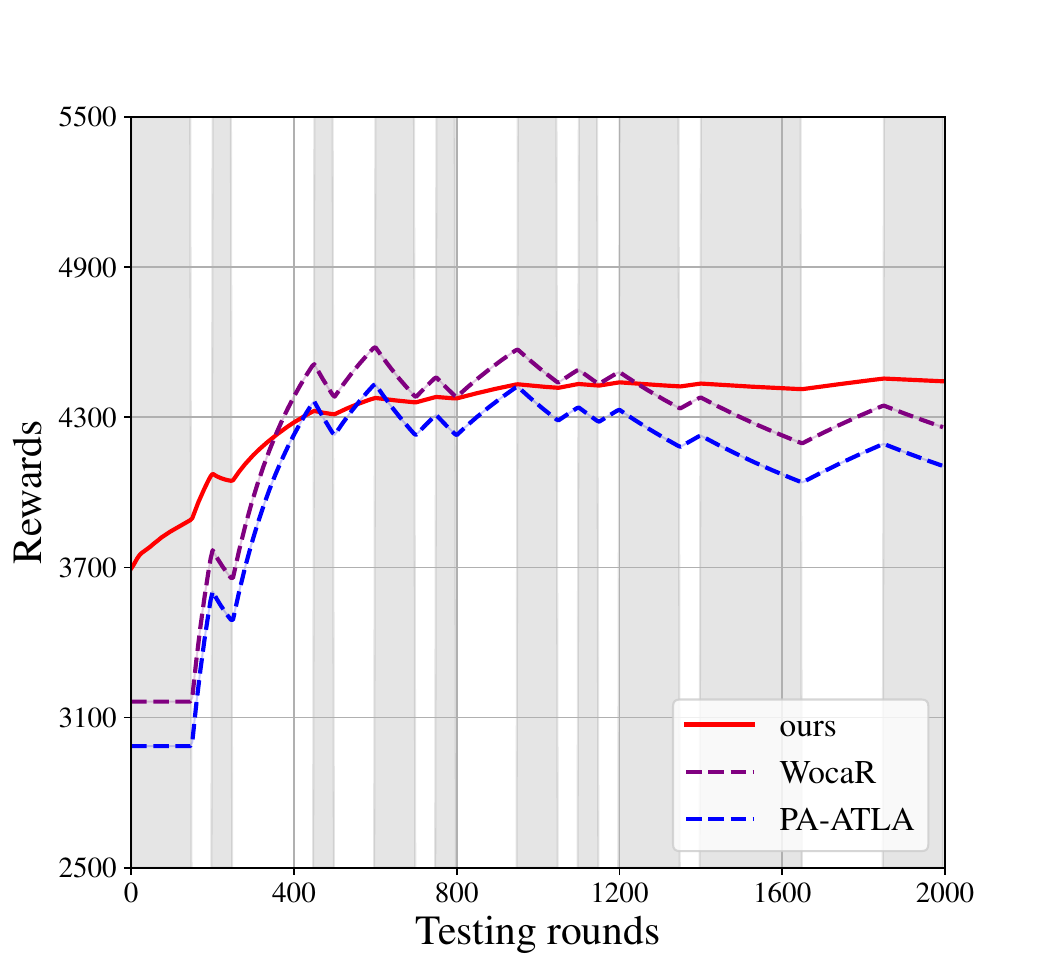}
     \end{subfigure}
     \centering
     \begin{subfigure}{0.32\textwidth}
     \vspace{-0.2em}
     \centering
         \includegraphics[width=\linewidth]{./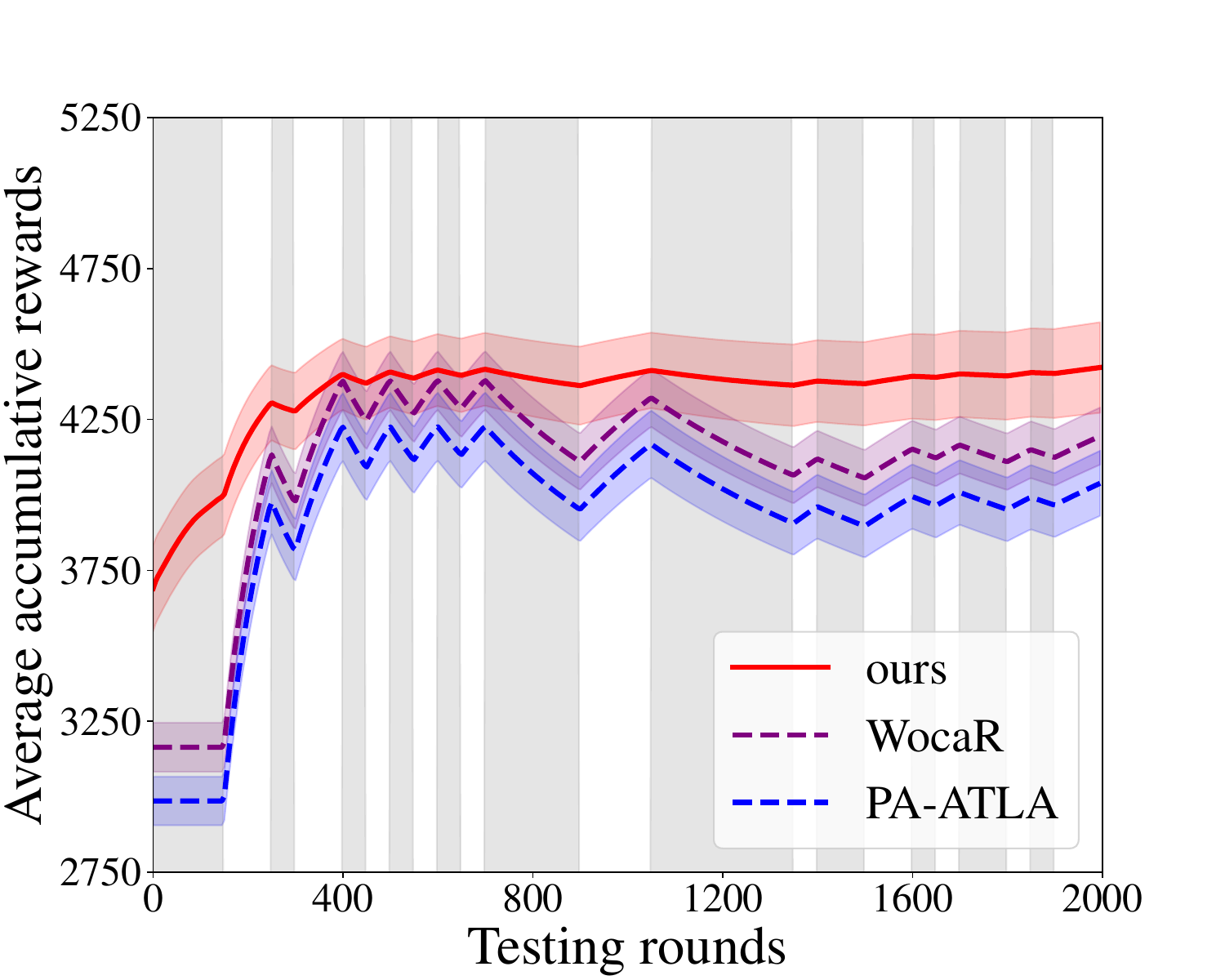}
     \end{subfigure}
     \centering
     \begin{subfigure}{0.32\textwidth}
     \vspace{-0.2em}
     \centering
         \includegraphics[width=\linewidth]{./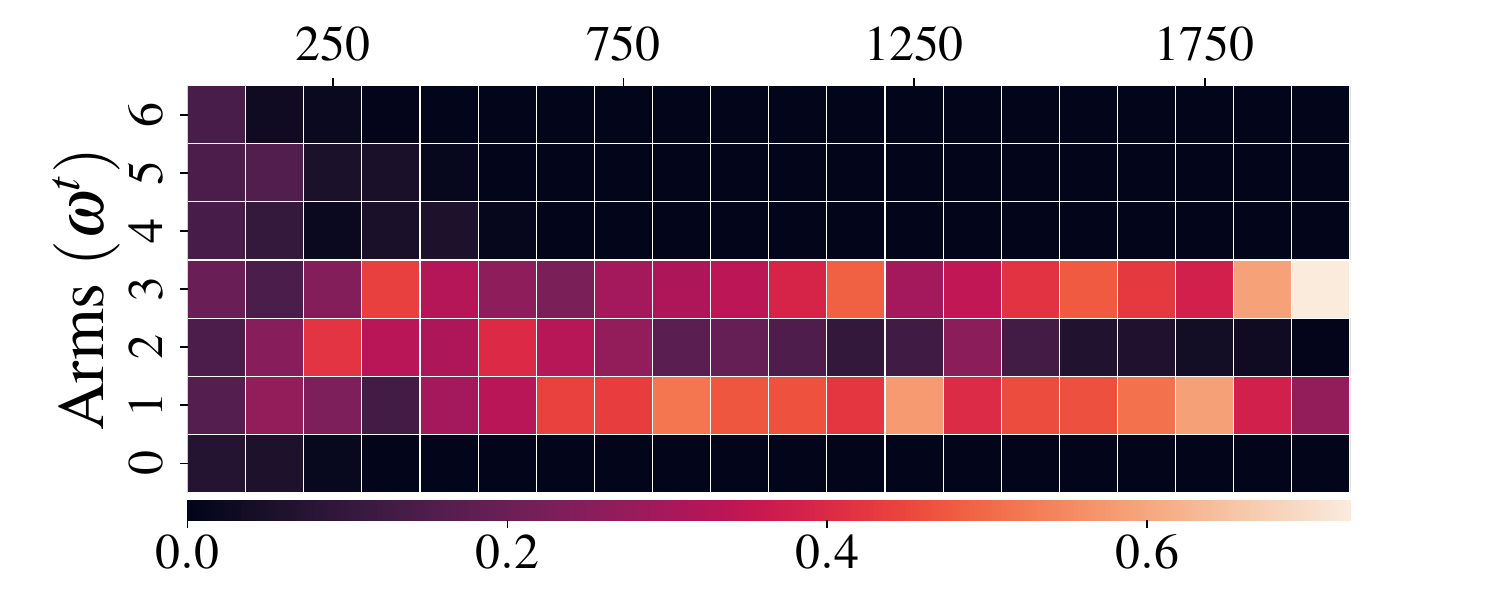}
         \caption{$p=0.2$}
     \end{subfigure}
     \centering
     \begin{subfigure}[b]{0.32\textwidth}
     \vspace{-0.2em}
     \centering
         \includegraphics[width=\linewidth]{./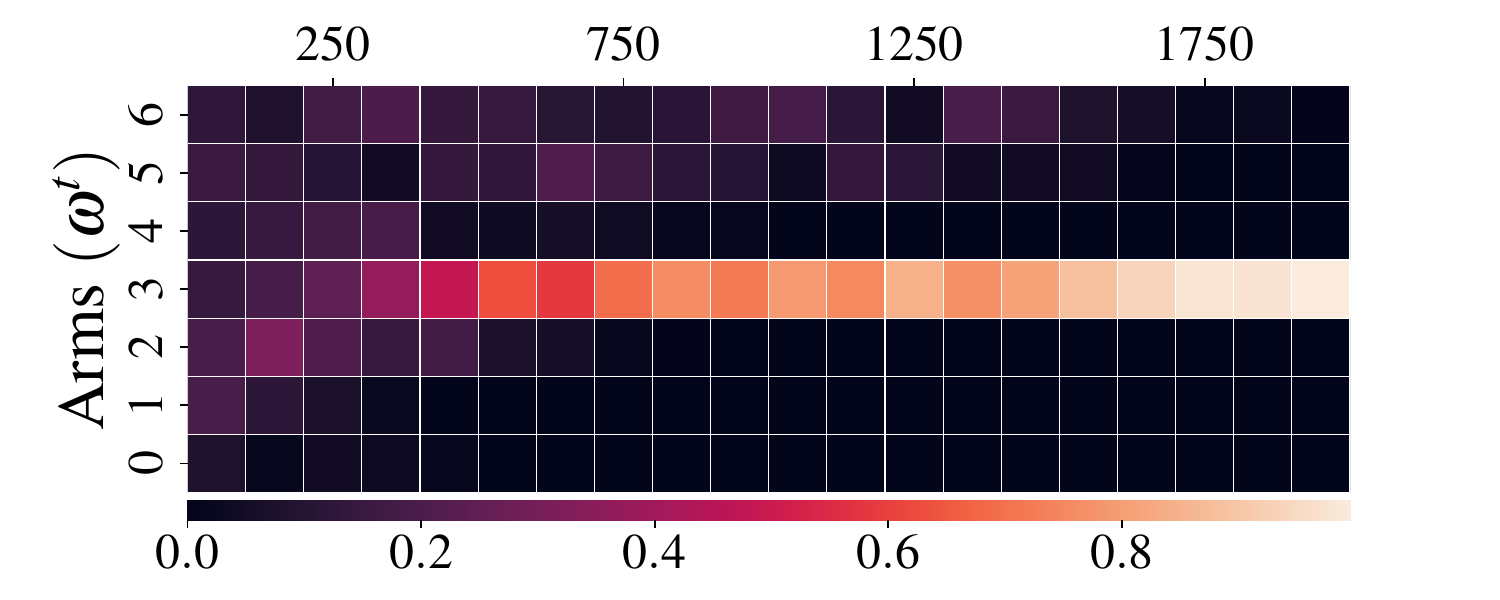}
         \caption{$p=0.4$}
     \end{subfigure}
     \centering
     \begin{subfigure}{0.32\textwidth}
     \vspace{-0.2em}
     \centering
         \includegraphics[width=\linewidth]{./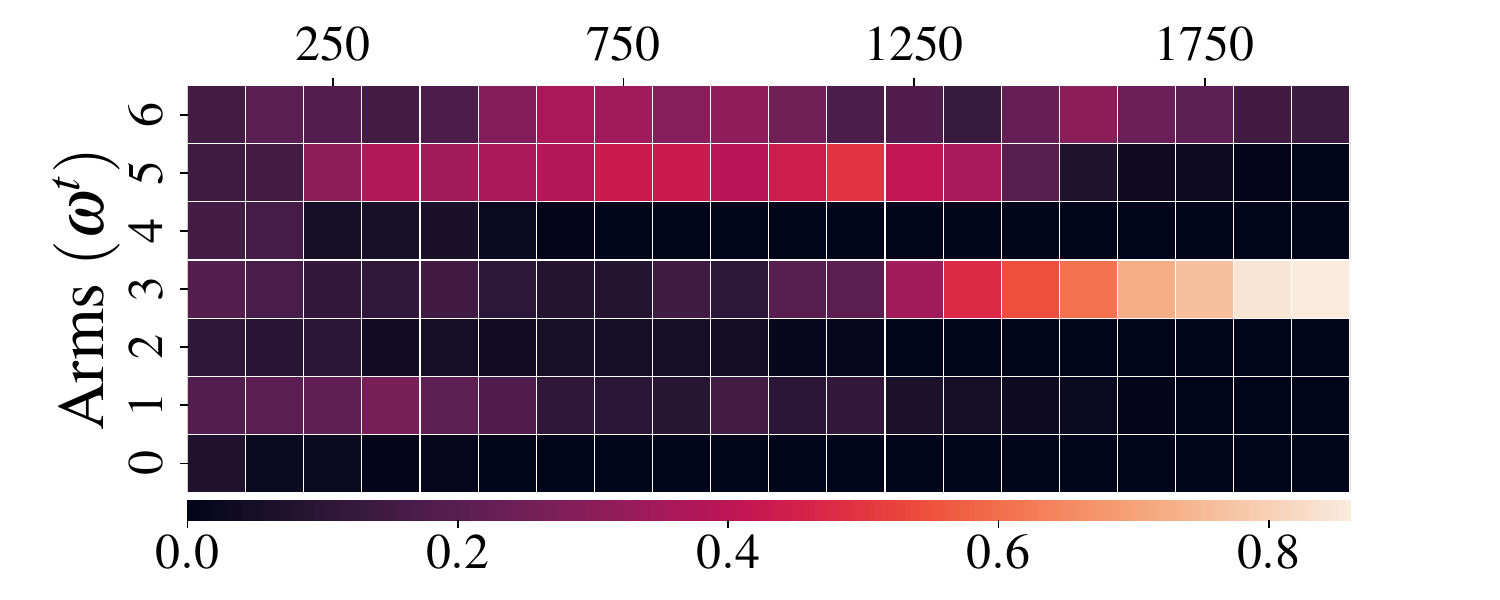}
         \caption{$p=0.5$}
     \end{subfigure}
     \centering
     \begin{subfigure}[b]{0.32\textwidth}
     \vspace{-0.2em}
     \centering
         \includegraphics[width=\linewidth]{./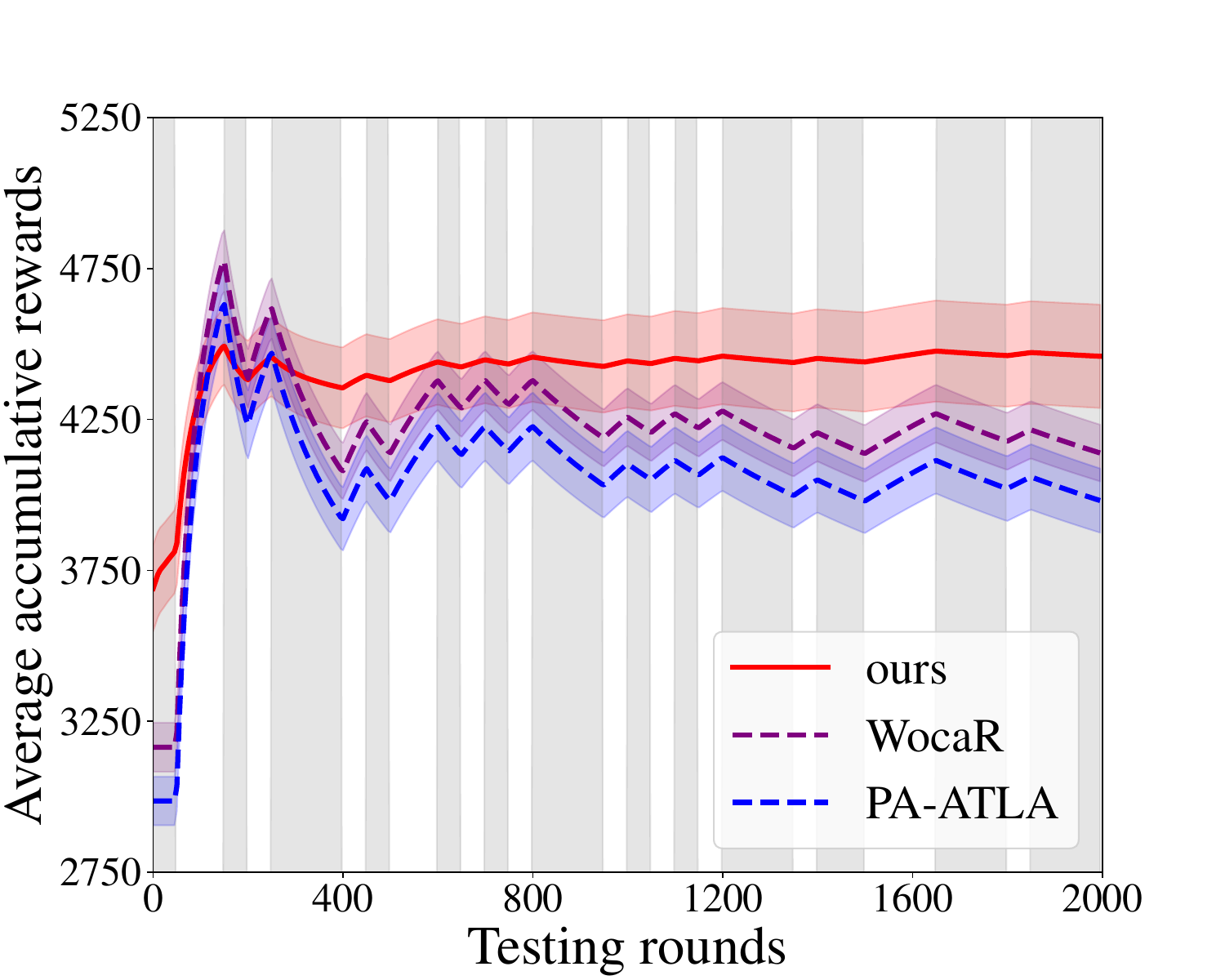}
     \end{subfigure}
         \centering
     \begin{subfigure}{0.32\textwidth}
     \vspace{-0.2em}
     \centering
         \includegraphics[width=\linewidth]{./figure/results/prob_switch/prob_switch_p0.8_plot.pdf}
     \end{subfigure}\\
     \centering
     \begin{subfigure}[b]{0.32\textwidth}
     \vspace{-0.2em}
     \centering
         \includegraphics[width=\linewidth]{./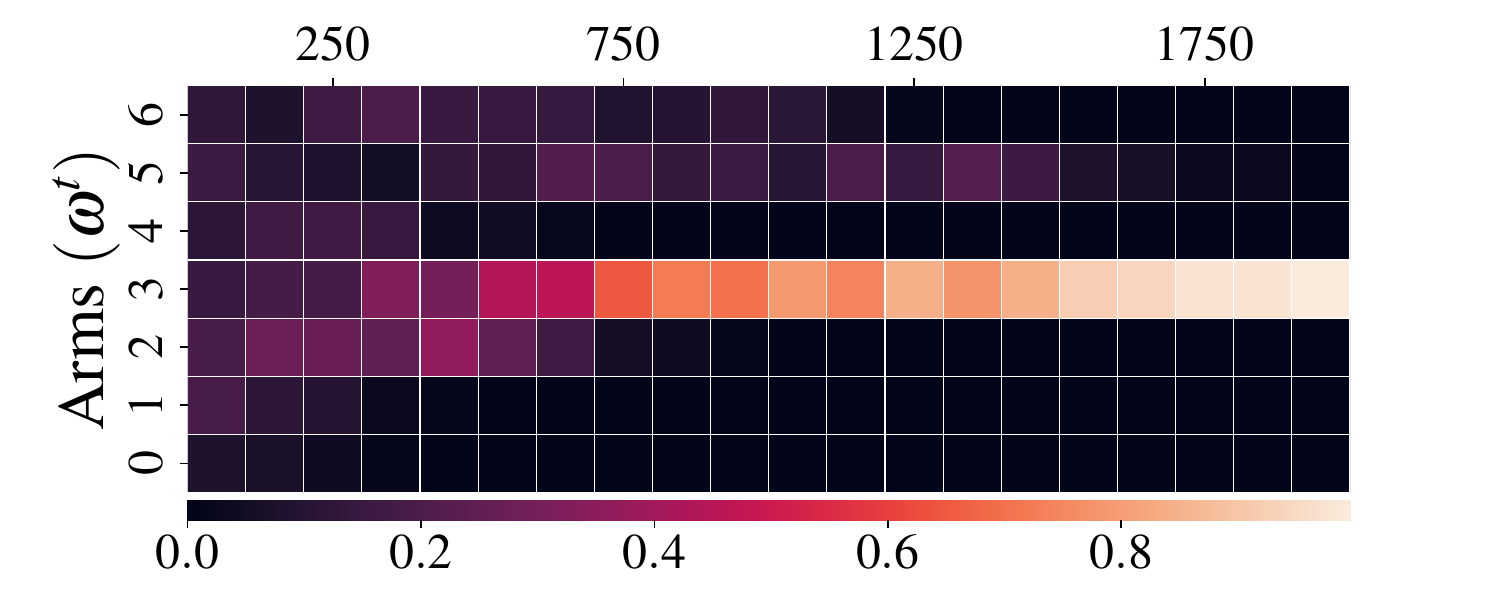}
         \caption{$p=0.6$}
     \end{subfigure}
         \centering
     \begin{subfigure}{0.32\textwidth}
     \vspace{-0.2em}
     \centering
         \includegraphics[width=\linewidth]{./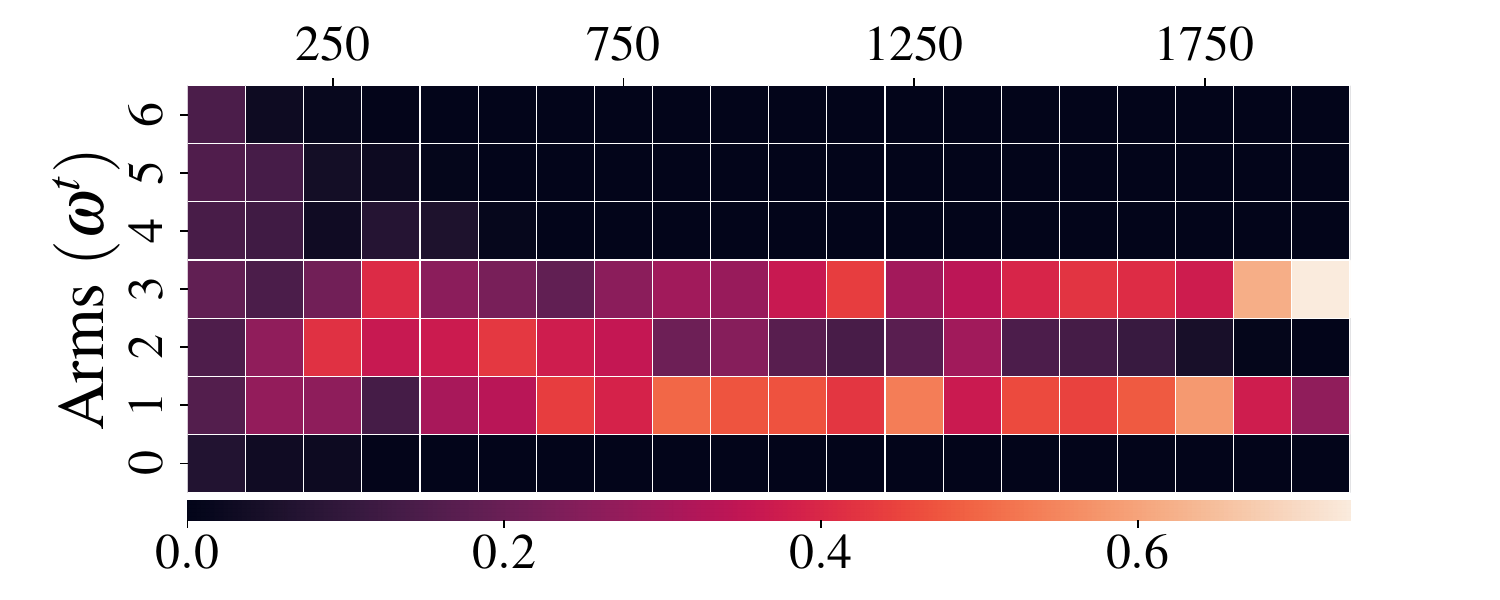}
         \caption{$p=0.8$}
     \end{subfigure}
     \caption{Time averaged accumulative rewards during online adaptation against probabilistic switching attacks on Ant. The shaded area showed in the indicates PA-AD attacks are active while the unshaded area indicates no attacks. The evolution of corresponding weights $\omega^t$ is shown in the heatmap where the brighter color means the higher value.}
     \label{fig:all_prob}
\end{figure}

The results are exhibited in Figure \ref{fig:all_prob}, showcasing both the average accumulative rewards and the evolution of the weight $\omega^t$. 
We conclude that:
\textbf{(1)} Our methods consistently outpace the two baselines.
The superiority becomes more pronounced as the value of $p$ increases.
\textbf{(2)} In contrast to the scenario with periodic attacks, the weights $\omega^t$ display a more random evolution. Nonetheless, they effectively converge to the arms yielding higher rewards.

\subsection{Ablation study on the scalability of $|\Tilde{\Pi}|$}

A potential concern for our methods is the high computational cost of iterative discovery, which could render them impractical. 
To tackle this concern, we assess our methods using different scales of the policy class $|\Tilde{\Pi}|$ under PA-AD attacks across all four environments.
The original value of $|\Tilde{\Pi}|$ in Table \ref{tab:mujoco_app} is set to 5, and we modify it to both 3 and 7 for this ablation study. 
All other experimental parameters remain the same.

The results are depicted in Figure \ref{fig:ablation_mod_pi}. 
We notice that:
\textbf{(1)} The larger scale leads to higher rewards in all four environments. 
This implies that the non-dominated policy class, as it expands via iterative discovery, approaches the optimal one more accurately with increasing scales.
\textbf{(2)} Even with a relatively modest scale of 3, our methods outpace the baseline methods in Table \ref{tab:mujoco_app}.
This alleviates concerns about our new methods being reliant on unaffordable computational costs.
\begin{figure}[!h]
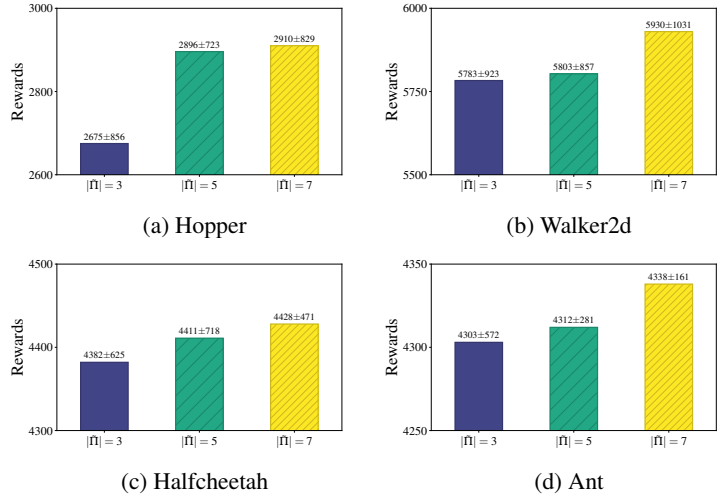

    \vspace{-1em}
     \centering
     \begin{subfigure}[b]{0.35\textwidth}
     \vspace{-0.2em}
     \centering
         \includegraphics[width=\linewidth]{./figure/results/histogram/hopper_hist.pdf}
         \caption{Hopper}
     \end{subfigure}
     \centering
     \begin{subfigure}{0.35\textwidth}
     \vspace{-0.2em}
     \centering
         \includegraphics[width=\linewidth]{./figure/results/histogram/walker_hist.pdf}
         \caption{Walker2d}
     \end{subfigure}
     \centering
     \begin{subfigure}{0.35\textwidth}
     \vspace{-0.2em}
     \centering
         \includegraphics[width=\linewidth]{./figure/results/histogram/halfcheetah_hist.pdf}
         \caption{Halfcheetah}
     \end{subfigure}
     \centering
     \begin{subfigure}[b]{0.35\textwidth}
     \vspace{-0.2em}
     \centering
         \includegraphics[width=\linewidth]{./figure/results/histogram/ant_hist.pdf}
         \caption{Ant}
     \end{subfigure}
     \caption{The performance for our methods with different non-dominant policy class scales $|\Tilde{\Pi}|$ in all four environments.}
     \label{fig:ablation_mod_pi}
\end{figure}

\subsection{Ablation study on the attack budget $\epsilon$}\label{subsec:epsilon}

To examine how our methods perform under attacks with different values of the attack budget $\epsilon$, we evaluate their performance under a random attack across all four environments and compare them with two baselines. From Table \ref{tab:mujoco_app}, we observe that the random attack is relatively mild. However, its impact can be much worse if the attack budget is higher. Our goal is to evaluate the robustness of \protect against non-worst-case attacks across various spectra.

The corresponding results are displayed in Figure \ref{fig:ablation_eps}. We derive the following observations:
\textbf{(1)} When $\epsilon$ is small, the rewards of our methods are slightly higher than the baseline methods in nearly all environments. 
The exception is on Walker2d, where our methods distinctly outperform the baselines.
It indicates the effectiveness of our methods in relatively clean environments. 
\textbf{(2)} As $\epsilon$ becomes moderate and continues to increase, although the performances of our methods decrease as PA-ATLA and WocaR, the rate of decline is slower compared to the two baseline methods.
Previously, we only consider the non-worst-case attacks with the same $\epsilon$ by different modes.
In this context, increasing values of $\epsilon$ for the same attack can be also interpreted as another non-worst-case attack.
Thus, the high rewards of our methods confirm their enhanced robustness against various types of non-worst-case attacks.
\textbf{(3)} When $\epsilon$ is large, our methods continue to hold an advantage over the baseline methods. 
The only exception is Hopper, where the rewards from all three methods are nearly identical.
This suggests that our new methods compromise little in terms of robustness against worst-case attacks.

\begin{figure}[!ht]
    \vspace{-0.3em}
     \centering
     \begin{subfigure}[b]{0.4\textwidth}
     \vspace{-0.2em}
     \centering
         \includegraphics[width=\linewidth]{./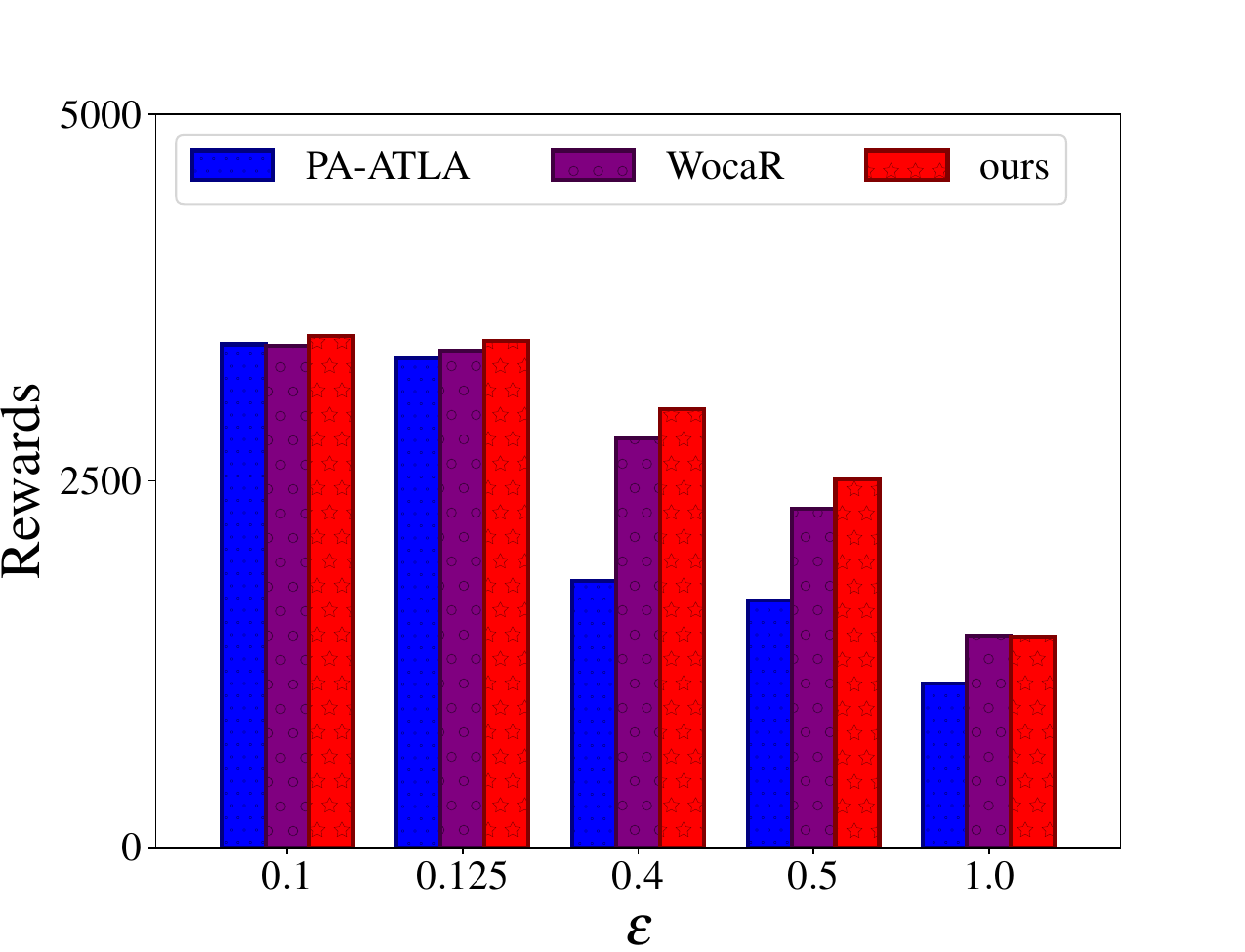}
         \caption{Hopper}
     \end{subfigure}
     \centering
     \begin{subfigure}{0.4\textwidth}
     \vspace{-0.2em}
     \centering
         \includegraphics[width=\linewidth]{./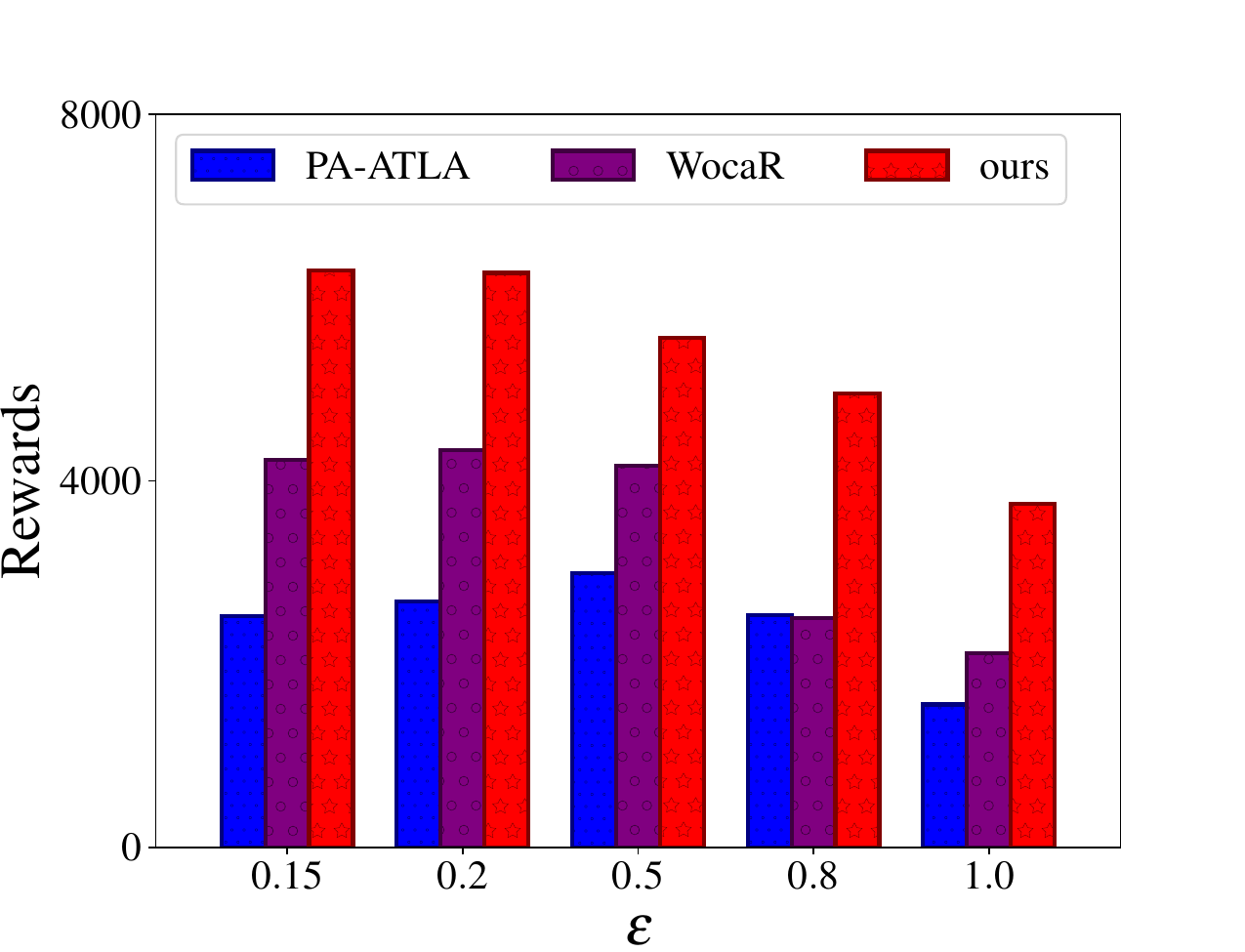}
         \caption{Walker2d}
     \end{subfigure}
     \centering
     \begin{subfigure}{0.4\textwidth}
     \vspace{-0.2em}
     \centering
         \includegraphics[width=\linewidth]{./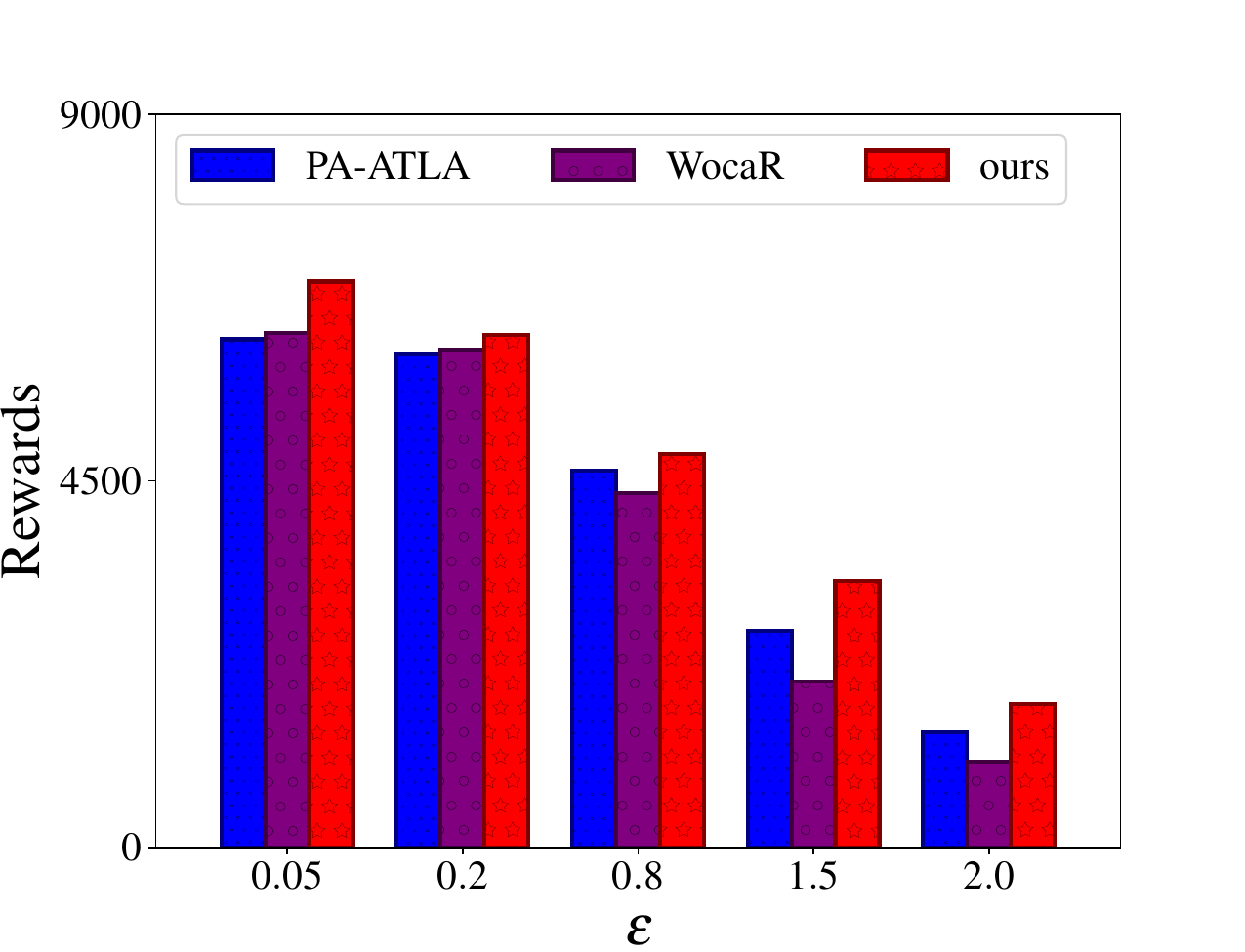}
         \caption{Halfcheetah}
     \end{subfigure}
     \centering
     \begin{subfigure}[b]{0.4\textwidth}
     \vspace{-0.2em}
     \centering
         \includegraphics[width=\linewidth]{./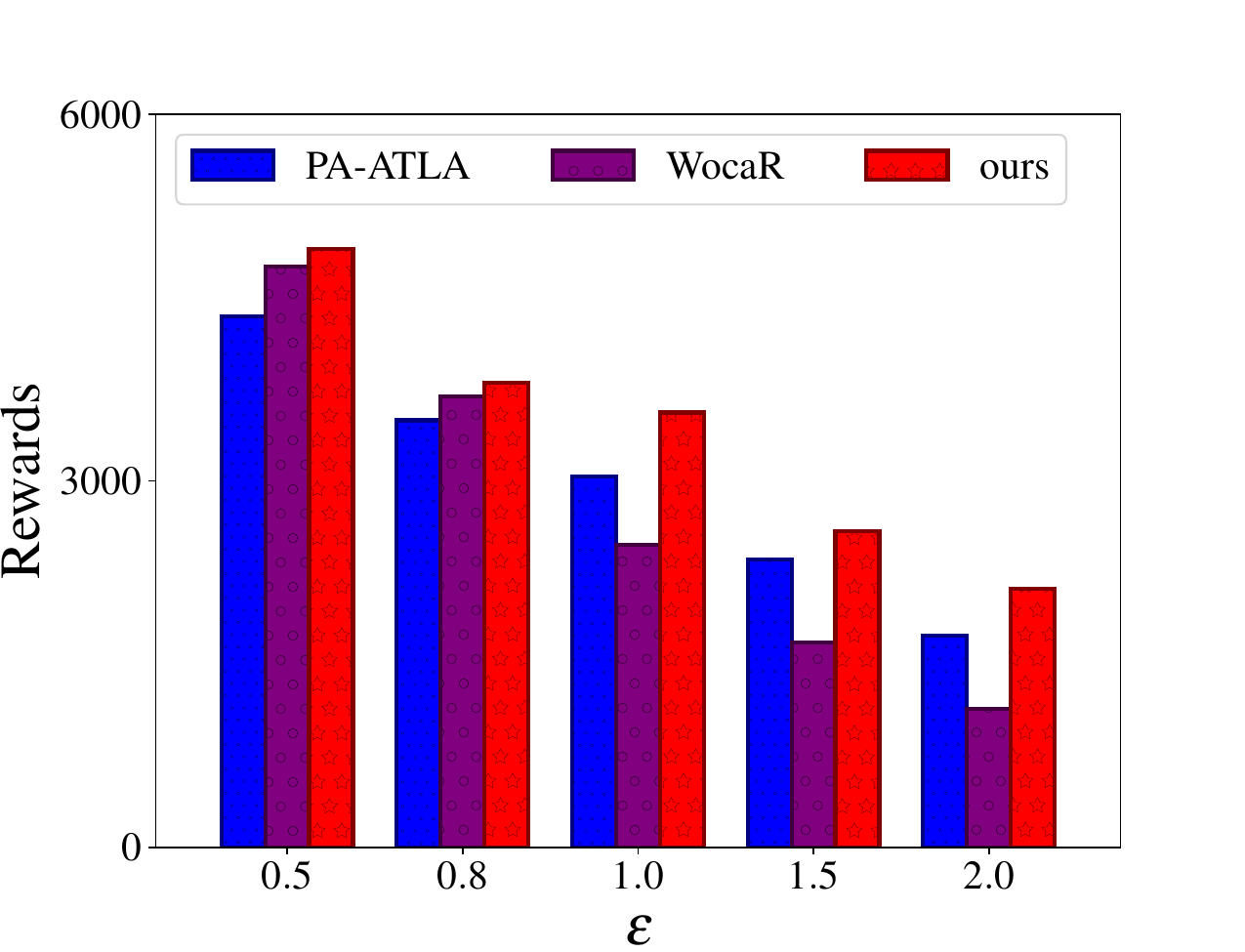}
         \caption{Ant}
     \end{subfigure}
     \caption{The performance for our methods and two baseline methods under attacks with different $\epsilon$ in all four environments.}
     \label{fig:ablation_eps}
\end{figure}

{
\subsection{Ablation study on the worst-case robustness}\label{sec:explanation}}
{
Although our primary goal is to improve the robustness against attacks beyond the worst cases, surprisingly, we find the robustness of our approach against currently strongest attacks PA-AD is also improved. To understand such reasons, we firstly notice that although related works including \cite{zhang2021robust,sun2021strongest,liang2022efficient} share the same objective of explicitly maximizing the worst-case performance, \cite{sun2021strongest} improves over \cite{zhang2021robust} and \cite{liang2022efficient} improves over \cite{sun2021strongest}. \textbf{Therefore, we make the hypothesis that baseline approach may not have found the global optimal solution for their objective of maximizing robustness against the worst-case attacks reliably.} Specifically, one possible explanation from the perspective of optimization is that baselines could often converge to a local optima, while our approach explicitly encourages the new policy to \textit{behave differently in the reward space compared with policies discovered already}, thus not stuck at a single local optimal solution easily. To further verify our hypothesis, we design the experiments as follows.
}

{
Firstly, note the number we report in Table \ref{tab:mujoco_app} is a median number of $21$ agents trained with the same set of hyperparameters 
following the set up of \cite{zhang2021robust,sun2021strongest,liang2022efficient}. To verify our hypothesis, we compare the performance of the best one and the median one of different approaches in Table \ref{tab:mujoco_med_high}. We can see that baselines like \cite{sun2021strongest,liang2022efficient} can match the high performance of ours in terms of the best run, while the median is low by a large margin. This means it is possible for baselines to achieve high worst-case robustness occasionally as its objective explicitly encourages so, but not reliably. In contrast, our methods are much more stable. This effectively supports our hypothesis. 
}
\textcolor{purple}{\begin{table}[!hbp]
\centering
\renewcommand{\arraystretch}{1.2}
\resizebox{0.6\textwidth}{!}{%
\setlength{\tabcolsep}{4pt}
\begin{tabular}{c|c|p{2cm}<{\centering} p{2cm}<{\centering} p{2cm}<{\centering} p{2cm}<{\centering} p{2cm}<{\centering} p{2cm}<{\centering}}
\toprule
\textbf{Environment}                  & \textbf{Model}          & \begin{tabular}[c]{@{}c@{}}{\bf PA-AD}\\{\bf (Median)}\end{tabular} & \begin{tabular}[c]{@{}c@{}}{\bf PA-AD}\\{\bf (Highest)}\end{tabular} \\
\hline
\multirow{3}{5.5em}{\begin{tabular}[c]{@{}c@{}}{\textbf{Hopper} }\\state-dim: 11\\$\epsilon$=0.075\end{tabular}}    
& PA-ATLA-PPO & 2521 $\pm$ 325 & 3129 $\pm$ 316 \\
& WocaR-PPO   & 2579 $\pm$ 229 & 3284 $\pm$ 193 \\
& \cellcolor{lightgray}{\textbf{Ours}} & \cellcolor{lightgray}{2896 $\pm$ 723} & \cellcolor{lightgray}{3057 $\pm$ 809}  \\
\midrule
\multirow{3}{5.5em}{\begin{tabular}[c]{@{}c@{}}{\textbf{Walker2d}}\\state-dim: 17\\$\epsilon$=0.05\end{tabular}}    
& PA-ATLA-PPO  & 2248 $\pm$ 131 & 3561 $\pm$ 357  \\
& WocaR-PPO & 2722 $\pm$ 173 & 4239 $\pm$ 295  \\
& \cellcolor{lightgray}{\textbf{Ours}} & \cellcolor{lightgray}{4239 $\pm$ 295} & \cellcolor{lightgray}{6052 $\pm$ 944} \\
\midrule
\multirow{3}{5.5em}{\begin{tabular}[c]{@{}c@{}}{\textbf{Halfcheetah}}\\state-dim: 17\\$\epsilon$=0.15\end{tabular}}    
& PA-ATLA-PPO   & 3840 $\pm$ 273 & 4260 $\pm$ 193 \\
& WocaR-PPO  & 4269 $\pm$ 172 & 4579 $\pm$ 368 \\
& \cellcolor{lightgray}{\textbf{Ours}} & \cellcolor{lightgray}{4411 $\pm$ 718} & \cellcolor{lightgray}{4533 $\pm$ 692}  \\
\midrule
\multirow{3}{5.5em}{\begin{tabular}[c]{@{}c@{}}{\textbf{Ant}}\\state-dim: 111\\$\epsilon$=0.15\end{tabular}}    
& PA-ATLA-PPO  & 2986 $\pm$ 364 & 3529 $\pm$ 782  \\
& WocaR-PPO & 3164 $\pm$ 163 & 4273 $\pm$ 530  \\
& \cellcolor{lightgray}{\textbf{Ours}} & \cellcolor{lightgray}{4273 $\pm$ 530} & \cellcolor{lightgray}{4406 $\pm$ 329 
} \\
\bottomrule
\end{tabular}}
\caption{Average episode rewards $\pm$ standard deviation with two baselines on Hopper, Walker2d, Halfcheetah, and Ant. The median and highest performance from $21$ agents trained with the same set of hyperparameters are reported in two columns respectively.}
\label{tab:mujoco_med_high}
\vspace{-0.5em}
\end{table}}

{
\subsection{Ablation study on online reward for baselines.}
To clarify our novel Algorithm \ref{alg:offline_train} to minimize $\text{Gap}(\Tilde{\Pi}, \Pi)$ contributes to the major improvements of our approach instead of simply using online reward feedback, we conduct further ablation studies by also allowing baselines to use online reward feedbacks through our Algorithm \ref{alg:online_adapt}. However, all baselines are essentially a single victim policy instead of a set, making it trivial to run Algorithm \ref{alg:online_adapt} since it will only have one policy to select. To address such a challenge, we propose a new, stronger baseline as follows by defining a $\tilde{\Pi}^{\text{baseline}}=\{\text{ATLA-PPO}, \text{PA-ATLA-PPO}, \text{WocaR-PPO}\}$. Note that since $\tilde{\Pi}^{\text{baseline}}$ has effectively aggregated all previous baselines, it should be no worse than them. Now $\tilde{\Pi}^{\text{baseline}}$ and our $\tilde{\Pi}$ are comparable since they both use Algorithm \ref{alg:online_adapt} to utilize the online reward feedback. The detailed comparison is in Table \ref{tab:mujoco_ada}.
\begin{table}[!t]
\vspace{-0.5em}
\centering
\renewcommand{\arraystretch}{1.2}
\resizebox{\textwidth}{!}{%
\setlength{\tabcolsep}{4pt}
\begin{tabular}{c|c|p{2cm}<{\centering} p{2cm}<{\centering} p{2cm}<{\centering} p{2cm}<{\centering} p{2cm}<{\centering} p{2cm}<{\centering}}
\toprule
\textbf{Environment}                  & \textbf{Model}          & \begin{tabular}[c]{@{}c@{}}{\bf Natural}\\{\bf Reward}\end{tabular} & \textbf{Random}   & \textbf{RS}          & \textbf{SA-RL}       & \textbf{PA-AD}        \\
 \hline
 \multirow{2}{5.5em}{\centering \textbf{Hopper}}  
 & $\Tilde{\Pi}^{\mathrm{baseline}}$ & 3624 $\pm$ 186 & 3605 $\pm$ 41 & 3284 $\pm$ 249 & 2442 $\pm$ 150 & 2627 $\pm$ 254 \\
 & \cellcolor{lightgray}{\textbf{Ours}} & \cellcolor{lightgray}{\textbf{3652 $\pm$ 108}} & \cellcolor{lightgray}{ \textbf{3653 $\pm$ 57}}   & \cellcolor{lightgray}{\textbf{3332 $\pm$ 713}}  & \cellcolor{lightgray}{\textbf{2526 $\pm$ 682}}   & \cellcolor{lightgray}{\textbf{2896 $\pm$ 723}}  \\
 \midrule
 \multirow{2}{5.5em}{\centering \textbf{Walker2d}} 
 & $\Tilde{\Pi}^{\mathrm{baseline}}$ & 4193 $\pm$ 508 & 4256 $\pm$ 177 & 4121 $\pm$ 251 & 4069 $\pm$ 397 & 3158$\pm$197  \\
 & \cellcolor{lightgray}{\textbf{Ours}} & \cellcolor{lightgray}{\textbf{6319 $\pm$ 31}} & \cellcolor{lightgray}{\textbf{6309 $\pm$ 36}}  & \cellcolor{lightgray}{\textbf{5916 $\pm$ 790}}  & \cellcolor{lightgray}{\textbf{6085 $\pm$ 620}}  & \cellcolor{lightgray}{\textbf{5803 $\pm$ 857}}  \\
 \midrule
 \multirow{2}{5.5em}{\centering \textbf{Halfcheetah}} 
& $\Tilde{\Pi}^{\mathrm{baseline}}$ & 6294 $\pm$ 203 & 6213 $\pm$ 245 & 5310 $\pm$ 185 & \textbf{5369 $\pm$ 61} & 4328$\pm$239  \\
& \cellcolor{lightgray}{\textbf{Ours}} & \cellcolor{lightgray}{\textbf{7095 $\pm$ 88}}    & \cellcolor{lightgray}{\textbf{6297 $\pm$ 471}} & \cellcolor{lightgray}{\textbf{5457  $\pm$ 385}}  & \cellcolor{lightgray}{5089 $\pm$ 86}  & \cellcolor{lightgray}{\textbf{4411 $\pm$ 718}}  \\
\midrule
 \multirow{2}{5.5em}{\centering \textbf{Ant}} 
 & $\Tilde{\Pi}^{\mathrm{baseline}}$ & 5617 $\pm$ 174 & 5569 $\pm$ 132 & 4347 $\pm$ 170 & 3889 $\pm$ 142 & 3246 $\pm$ 303  \\
 & \cellcolor{lightgray}{\textbf{Ours}} & \cellcolor{lightgray}{\textbf{5769 $\pm$ 290}} & \cellcolor{lightgray}{\textbf{5630 $\pm$ 146}}  & \cellcolor{lightgray}{\textbf{4683 $\pm$ 561}}  & \cellcolor{lightgray}{\textbf{4524 $\pm$ 79}}  & \cellcolor{lightgray}{\textbf{4312 $\pm$ 281}}  \\
\bottomrule
\end{tabular}}
\caption{Average episode rewards $\pm$ standard deviation over 50 episodes with three baselines on Hopper, Walker2d, Halfcheetah, and Ant. Here $\Tilde{\Pi}^{\mathrm{baseline}}$ is used as a baseline policy class for online adaptation.}
\label{tab:mujoco_ada}
\vspace{-0.5em}
\end{table}
We can see that even with this new, stronger baseline utilizing the reward feedback in the same way as us, our results are still consistently better. This justifies that it is our novel Algorithm \ref{alg:offline_train} for discovering a set of high-quality policies $\tilde{\Pi}$ that makes ours improve over baselines.}

\end{document}